\documentclass{article}

\usepackage{microtype}
\usepackage{graphicx}
\usepackage{subcaption} 
\usepackage{booktabs} 
\usepackage{bbm}

\usepackage{tikz}

\usepackage{xcolor}

\usepackage{hyperref}
\usepackage{url}
\usepackage[normalem]{ulem}
\usepackage{stmaryrd,scalerel}

\usepackage[accepted]{icml2026}

\usepackage{amsmath}
\usepackage{amssymb}
\usepackage{mathtools}
\usepackage{amsthm}

\usepackage{pifont}

\newcommand{\cmark}{\Large\textcolor{green!80!black}{\ding{51}}}
\newcommand{\xmark}{\Large\textcolor{red}{\ding{55}}}
\newcommand{\mygreen}[1]{{\color{green!60!black}#1}}

\usepackage[capitalize,noabbrev]{cleveref}

\theoremstyle{plain}
\newtheorem{theorem}{Theorem}[section]
\newtheorem{proposition}[theorem]{Proposition}
\newtheorem{lemma}[theorem]{Lemma}

\theoremstyle{definition}
\newtheorem{definition}[theorem]{Definition}

\theoremstyle{remark}
\newtheorem{remark}[theorem]{Remark}

\usepackage[disable,textsize=tiny]{todonotes}

\icmltitlerunning{Conformal Policy Control}

\begin{document}

\twocolumn[
\icmltitle{Conformal Policy Control}



\icmlsetsymbol{equal}{*}

\begin{icmlauthorlist}
\icmlauthor{Drew Prinster}{gen,jhu}
\icmlauthor{Clara Fannjiang}{gen}
\icmlauthor{Ji Won Park}{gen}
\icmlauthor{Kyunghyun Cho}{nyu,prev_gen}
\icmlauthor{Anqi Liu}{jhu}
\icmlauthor{Suchi Saria}{jhu}
\icmlauthor{Samuel Stanton}{prev_gen}
\end{icmlauthorlist}

\icmlaffiliation{gen}{Prescient Design,
Genentech, U.S.A.}
\icmlaffiliation{jhu}{Johns Hopkins University, Baltimore, U.S.A.}
\icmlaffiliation{prev_gen}{Formerly Prescient Design, Genentech}
\icmlaffiliation{nyu}{New York University, NYC, U.S.A.}

\icmlcorrespondingauthor{Drew Prinster}{drewprinster@gmail.com}
\icmlcorrespondingauthor{Samuel Stanton}{sdstanton1@gmail.com}

\icmlkeywords{machine-learning, conformal-risk-control, model-based-optimization, safe-exploration, conservative-optimization}

\vskip 0.3in
]



\printAffiliationsAndNotice{}  

\begin{abstract}
    An agent must try new behaviors to explore and improve. In high-stakes environments, an agent that violates safety constraints may cause harm and must be taken offline, curtailing any future interaction. Imitating old behavior is safe, but excessive conservatism discourages exploration. How much behavior change is too much? We show how to use any safe reference policy as a probabilistic regulator for any optimized but untested policy. Conformal calibration on data from the safe policy determines how aggressively the new policy can act, while provably enforcing the user's declared risk tolerance. Unlike conservative optimization methods, we do not assume the user has identified the correct model class nor tuned any hyperparameters. Unlike previous conformal methods, our theory provides finite-sample guarantees even for non-monotonic bounded loss functions, and it introduces a new policy control setting. Our experiments on applications ranging from natural language question answering to biomolecular engineering show that safe exploration is not only possible from the first moment of deployment, but can also improve performance.
\end{abstract}

\section{Introduction}

Safe exploration is one of the most studied problems in AI safety \citep{amodei2016concrete}.
The choice between probable safety and the hope of possible discovery presents a dilemma known by many names, from the explore-exploit tradeoff in reinforcement learning to the validity-power tradeoff in statistics.
The optimal balance is problem-dependent, yet general solutions must be problem-independent.
So, to apply to real-world problems, any general solution must encode this balance as a hyperparameter, which then must be tuned from data or derived from assumptions about problem structure.
In consequential settings, collecting data from an unsafe behavior policy (a context-dependent strategy or distribution over actions) may be unacceptable. 
Deriving the balance from assumptions about the problem reintroduces a subtler form of
problem-dependence, namely the practitioner's understanding of the setting itself.
Yet consequential, poorly understood settings are precisely where machine learning offers the greatest potential benefit.

\begin{figure*}
    \centering
    \includegraphics[width=\linewidth]{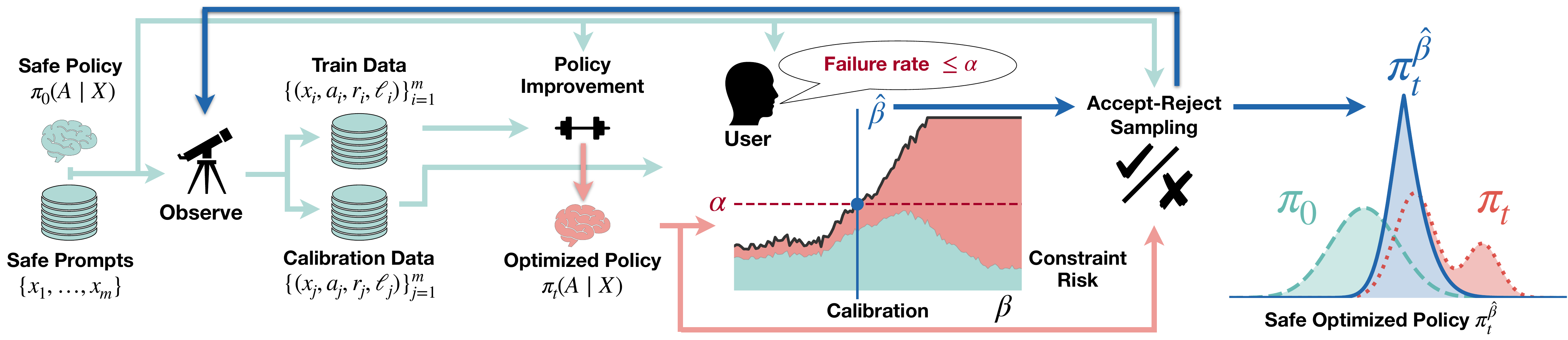}
    \caption{
        An illustration of safe exploration with conformal policy control.
        Starting with a safe policy $\pi_0$  and safe context prompts $\mathcal{X}_0:=\{x_1, \dots, x_m\}$, we observe the safe policy's actions $a_i$ and corresponding reward $r_i \in \mathbb{R}$ and constraint violation $\ell_i \in \{0, 1\}$.
        We split the observations into train and calibration data, and optimize the safe policy to obtain $\pi_t$. We then query the user for their constraint violation risk tolerance $\alpha$, and apply conformal risk control to find a bound on the likelihood ratio $\pi_t / \pi_0$ which guarantees the calibrated risk is controlled at level $\alpha$.
        Finally we apply rejection sampling to probabilistically regulate the optimized policy for deployment and observation, allowing the user to flexibly trade off reward, constraint risk, and test-time compute.
    }
    \label{fig:safe_policy_improvement_marquee}
\end{figure*}

Consider an agent that observes contexts (i.e., states) and takes actions to maximize reward while satisfying safe operating constraints to within some tolerance.
  For example, a language model responding to medical questions aims to give answers that are as helpful as possible while avoiding false claims.
  In biomolecular engineering, a generative model proposing improvements to lead molecules aims to maximize efficacy of its designs while ensuring that they are synthesizable.
In such settings, the agent has candidate policies optimized for performance but does not yet know which, if any, are safe to deploy.
  From past experience, the agent usually has a default safe policy it knows controls the risk, but playing it safe may be far from optimal.
  
  What the safe policy does supply is calibration data: Reweighting each observation by a candidate-to-safe likelihood ratio gives an estimate of each candidate's risk.
  But which candidate should the agent deploy?
  The natural move is to estimate every candidate's risk and pick the most aggressively optimized one that still looks safe, but this approach introduces bias: Selection corrupts the procedure by favoring estimates that understate their true risk.
  Yet the change in risk the agent must control is due to the policy shift it will author.
  The agent does not know how the world will respond to a given action, but it knows exactly how each candidate would distribute its actions, which is the only difference between the distributions it has experienced in the past and those it would encounter in the future.\footnote{If we assume the environment is stationary.}

We introduce conformal policy control (CPC, Figure~\ref{fig:safe_policy_improvement_marquee}), a method for safe exploration that uses the calibration data to interpolate between any safe and any optimized policy while provably respecting the risk tolerance. 
CPC requires no assumptions on the reward or constraint functions, no access to the optimized policy's training process, and no additional samples to tune the balance beyond what a safe policy already provides.
The key idea is to invert the usual importance-weighting paradigm: Rather than estimating risk for a fixed policy, we search over candidate policies, parameterized by a likelihood-ratio threshold, for the most aggressive one such that all evaluated candidates still satisfy the risk tolerance (with a weighted conservative adjustment). Conformal calibration on the safe policy's data determines both the weights and the threshold.
At deployment, the interpolated policy can be realized through rejection sampling (or other Monte Carlo methods), which allows the agent to probabilistically self-regulate to a zone of competence determined by its calibration data.
Because CPC operates entirely at test time, the same safe and optimized policies can be reused under different risk tolerances, trading test-time compute for risk control without retraining.

Providing finite-sample guarantees in this setting poses several challenges for uncertainty quantification. The deployed policy depends on the calibration data, introducing subtle, data-dependent distribution shift. 
The relevant losses are general constraint functions, not just coverage indicators. 
Conformal risk control (CRC) \citep{angelopoulos2022conformal} can address these challenges, but assumes a monotone relationship between the control parameter and the loss. In CPC, the loss function does not depend on the control parameter, so the monotonicity condition does not hold. We generalize CRC (gCRC) to non-monotonic bounded loss functions, and we prove that CPC maintains finite-sample guarantees when the control parameter governs the policy, not the loss.


We validate our methods on three tasks. In medical question answering, gCRC controls the false discovery rate, a non-monotonic loss that standard CRC cannot
  handle, achieving tighter control and better claim recall than existing methods. In constrained active learning, CPC provides finite-sample guarantees under
  feedback-loop covariate shifts where previous approaches offered only asymptotic control. In black-box biomolecular sequence optimization, CPC extends conformal
  methods to a setting where no prior guarantees existed, and we find that moderate risk control can counterintuitively improve optimization performance by reducing
   wasted evaluations on infeasible actions. Safe exploration is not only safe, it can also be more efficient.

\vspace{-0.25cm}

\section{Preliminaries and Background}
\label{sec:background}

\textbf{Notation:} We consider a decision-making agent that interacts with an environment over a sequence of rounds. At each round $t$, the agent observes a context (i.e., prompt) $X_t \in \mathcal{X}$ and selects an action $A_t \in \mathcal{A}$ according to a policy $\pi(A \mid X)$.
If the policy is parameterized by an LLM, then $\mathcal{A} \subseteq \mathcal{X}$.
The agent then observes (potentially noisy) rewards $R_t \in \mathbb{R}$ and losses $L_t \in \mathbb{R}$.
Throughout, uppercase letters denote random variables and lowercase letters their realized values.
The reward function $r(x, a)$ quantifies the degree to which a context-action pair is aligned with the agent's objective, and the loss function $\ell(x, a)$ quantifies the degree to which the context-action pair violates the agent's constraints.
While a policy can be defined implicitly by solving an optimization problem with respect to estimated reward and loss surrogates \citep{watkins1992q, jones1998efficient}, in this paper we focus on controlling explicit policies parameterized by a conditional distribution \citep{williams1992simple}, such as an autoregressive transformer. 

\subsection{The Safe Policy Improvement Problem}
The setting just introduced provides a general framework for sequential decision making under uncertainty \citep{langford2008epoch}, capturing many problems of practical interest \citep{li2010contextual}.
In natural language question answering \citep{rajpurkar2016squad}, the context could be a user query, the action a generated response, and the reward and loss could reflect response completeness and (in)correctness, respectively.
In constrained black-box optimization \citep{gardner2014bayesian}, the context could be the previous evaluations, the action the next solution to evaluate, and the reward and loss functions could be black-box oracle objective and constraint functions, respectively \citep{chen2025generalists}.
For example, in protein engineering, the loss function may indicate whether a proposed sequence lies outside a feasibility set $\mathcal{F}$ of expressible proteins, i.e., $\ell(X_t, A_t) := \mathbbm{1}\{A_t \notin \mathcal{F}\}$.

The agent's goal is to learn a policy, $\pi$, that maximizes expected reward while controlling expected loss:
\begin{align}\label{eq:constrained_policy_opt}
    \max_{\pi} \quad & \mathbb{E}_{X \sim p} \ \mathbb{E}_{A \sim \pi(\cdot \mid X)} \ [r(X, A)] \nonumber \\
    \text{subject to} \quad & \mathbb{E}_{X \sim p} \ \mathbb{E}_{A \sim \pi(\cdot \mid X)} \ [\ell(X, A)] \leq \alpha,
\end{align}
where $\alpha \in [0, B]$ (for some $B<\infty$) is a user-specified risk tolerance,  and $p(X)$ is the marginal context distribution.\footnote{In constrained optimization notation, $r$ and $\ell$ map to $f$ and $g$.}

In practice, the agent usually begins with a safe reference policy $\pi_0$, whether inherited from a prior deployment or trained as a density model of known safe actions, and seeks to improve upon it.
In the LLM literature, policy improvement is also known as post-training or continual learning and encompasses methods such as supervised fine-tuning \citep[SFT;][]{wei2022finetuned}, reinforcement learning with human feedback \citep[RLHF;][]{christiano2017deep} or direct preference optimization \citep[DPO;][]{rafailov2023direct}.
The quality of the improved policy depends on the post-training data, the fidelity of any reward/preference models, and design decisions 
governing divergence limits from the safe reference. 

Post-training design decisions immediately prompt two questions: which policy divergence measure is best, and how much divergence is allowable to maintain an acceptable risk profile?
The first question has been studied extensively (see Section \ref{sec:related_work}).
The second, however, is typically treated as a hyperparameter tuning problem \citep{bergstra2011algorithms, feurer2019hyperparameter}, in which the user must specify a constraint on a ``control parameter'' (such as a divergence budget or penalty weight) and the algorithm optimizes subject to that constraint.
The resulting indirection creates a gap between what the user actually wants and the control parameters that are accessible in available algorithms \citep{hutchins1986direct, norman1988design}.
The user's goal is declarative (i.e., a desired outcome of the program): control risk at some level $\alpha$.
The algorithms demand imperative instructions (i.e., the steps the program should execute): bound divergence at some level, or set the penalty weight to some value.
Without a principled translation, the user must learn the mapping from outcome to instruction by trial and error, spending the very samples the agent seeks to use efficiently.

As previewed in the introduction, importance weighting offers a more direct route. Given data from the safe policy, the agent can estimate the expected loss under any candidate policy by reweighting each observation by the likelihood ratio between the candidate and safe policies, via the identity
\begin{align*}
    \operatorname*{\mathbb{E}}_{\substack{X \sim p(\cdot) \\ A \sim \pi_t(\cdot \mid X)}} [\ell(X, A)] = \operatorname*{\mathbb{E}}_{\substack{X \sim p(\cdot) \\ A \sim \pi_0(\cdot \mid X)}} \left[ \frac{\pi_t(A \mid X)}{\pi_0(A \mid X)} \, \ell(X, A) \right].
\end{align*}
Bounding this ratio at some threshold $\beta$ simultaneously defines a constrained policy and bounds the importance weights, so the safe policy's data suffices to determine whether the candidate still respects the risk tolerance.\footnote{Bounding the likelihood ratio $\pi_t(a \mid x) / \pi_0(a \mid x) \leq \beta$ strictly constrains divergence by uniformly implying a bound on the KL
  divergence and other $f$-divergences between $\pi_t$ and $\pi_0$. 
   }

\subsection{Conformal Risk Control}

What remains is a principled way to choose $\beta$ from finite safe-policy data, with a guarantee on the risk of the resulting deployed policy.
Conformal risk control \citep[CRC;][]{angelopoulos2022conformal} provides this for a restricted class of problems: it sets a control parameter $\lambda$ to control the expected value of a user-specified loss $L_i(\lambda)$.
Given exchangeable (e.g., IID) calibration loss functions, $L_1(\cdot), \ldots, L_n(\cdot)$, and test loss function, $L_{n+1}(\cdot)$, the key assumption is that each $L_i(\lambda)$ is monotonically nonincreasing with the control parameter, $\lambda$.
For example, conformal prediction \citep{vovk2005algorithmic} is a special case of CRC where the loss is the miscoverage indicator $L(\lambda) = \mathbbm{1}\{Y \notin \hat{C}_{\lambda}(X)\}$ for labels $Y$ and prediction sets $\hat C_\lambda$, and $\lambda$ controls the size of the prediction set: increasing the prediction set size can only decrease miscoverage.
CRC extends conformal prediction to cases such as setting an inclusion threshold on predicted probabilities for multilabel classification with false negative rate (FNR) control (i.e., controlling the expected proportion of true labels that are not included in the set). Like miscoverage, FNR is monotonic in $\lambda$: adding labels to the prediction set can only increase the proportion of true labels contained. 
By selecting $\hat{\lambda}$ as the smallest value such that the (conservatively adjusted) empirical risk falls below $\alpha$, CRC attains the finite-sample guarantee $\mathbb{E}[L_{n+1}(\hat{\lambda})] \leq \alpha$.
In each case, increasing $\lambda$ can only decrease the loss. This monotonicity is what lets CRC search from aggressive to conservative and stop at the first value that satisfies the guarantee.

  For policy control, however, it is not immediately clear how CRC could be applied.
  Many losses of practical interest admit no tunable control parameter at all.
  Consider again the feasibility indicator $L_t := \mathbbm{1}\{A_t \notin \mathcal{F}\}$: a prediction set $\hat{C}_{\lambda}$ can be grown to reduce miscoverage, but the feasibility set $\mathcal{F}$ is fixed by the environment, and there is no $\lambda$ that makes a given action more or less feasible.
  We instead place the control parameter on the policy, governing the distribution of actions rather than the loss they incur.
  Even so, the resulting expected loss is not necessarily monotonic in that parameter, a theoretical challenge we address in Section \ref{sec:theory}.

\section{Abbreviated Related Work}
\label{sec:related_work}

\textbf{Conservative Model-Based Optimization:}
A standard approach to safe policy improvement relies on the observation that the risk of a new policy can be bounded in terms of the risk of a reference policy plus a penalty that grows with the divergence between them \citep{kakade2002approximately}.
When the reference policy is known to satisfy the safety constraint, controlling this divergence provides a mechanism for controlling risk.
The idea of bounding risk through bounded divergence motivates a broad family of locality-constrained policy optimization methods \citep{kim2025offline}.
Prior work has proposed entropy-regularized control and KL-penalized objectives \citep{todorov2009efficient, fox2015taming}, trust-region policy optimization \citep{schulman2015trust, schulman2017proximal}, conservative value penalties in offline reinforcement learning \citep{kumar2020conservative, trabucco2021conservative}, trust-region methods in black-box optimization \citep{eriksson2019scalable}, and uncertainty-penalized safe Bayesian optimization \citep{sui2015safe}.
\citet{damani2023beyond} proposed using a likelihood ratio as a safety filter for model-based optimization, but treated the ratio threshold as a hyperparameter.

From an optimization perspective, our work is most similar to \citet{stanton2023bayesian}, who proposed constraining a Bayesian optimization policy to regions where the resulting conformal prediction sets would have bounded width given a declared miscoverage tolerance $\alpha$.
However, their approach was limited by three weaknesses.
First, their approach was restricted to the miscoverage loss.
Second, their use of an implicit policy created an awkward fixed-point problem: finding the policy that induced an action distribution that controlled the coverage.
Third, they did not fully address the multi-step feedback covariate shift induced by sequential model-based optimization.\footnote{A form of covariate shift in which the test distribution is dependent on previously observed data.}
Indeed, \citet{stanton2023bayesian} noted that, at the time, the existing conformal theory had not yet proved general enough guarantees 
for that setting.

\begin{figure}[!t]
\centering
\begin{subfigure}{0.42\textwidth}
    \centering\fbox{\includegraphics[width=\textwidth]{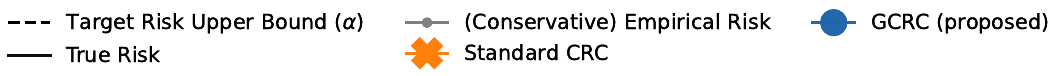}}\\
    \begin{subfigure}{0.49\textwidth}
        \includegraphics[width=\textwidth]{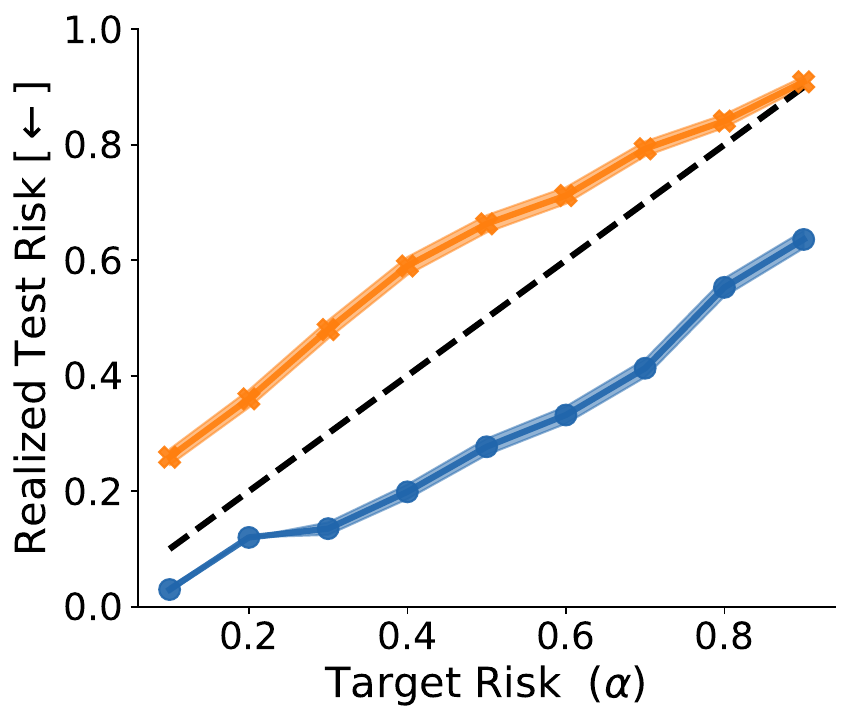}
    \end{subfigure}
    \hfill
    \begin{subfigure}{0.49\textwidth}
        \includegraphics[width=\textwidth]{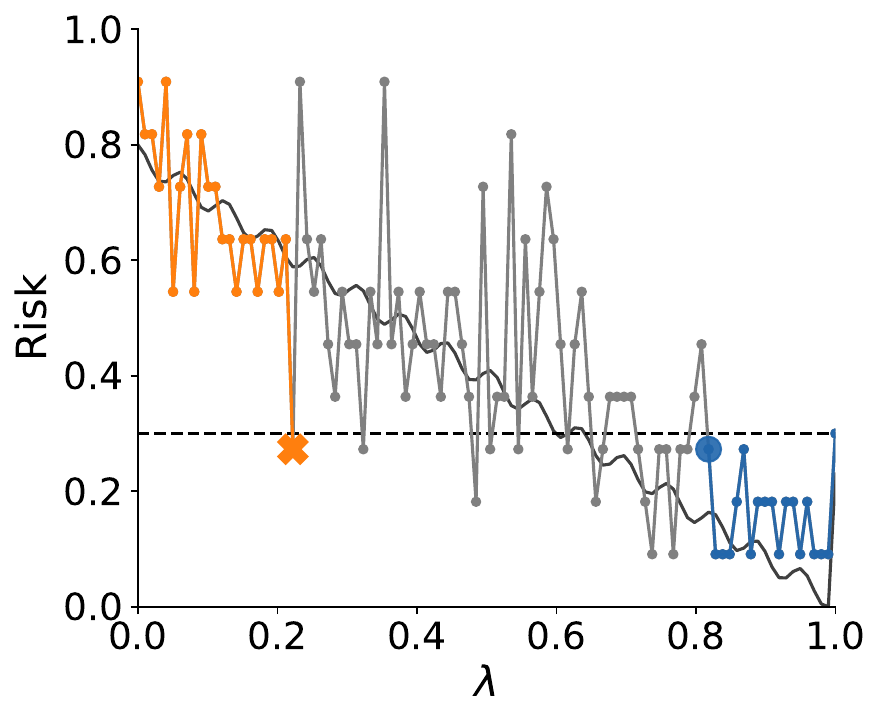}
    \end{subfigure}
\end{subfigure}
\caption{Empirical comparison of standard CRC \citep{angelopoulos2022conformal} (orange) and the proposed generalized CRC (blue) on synthetic non-monotonic losses. gCRC achieves risk control across various target risk levels while CRC does not \textbf{(left)}. On an example empirical risk trajectory for these experiments, CRC underestimates the risk while gCRC maintains risk control by searching from safe to aggressive hyperparameter values \textbf{(right)}.}
\label{fig:CRC_counterexample}
\end{figure}

\textbf{Conformal Methods:}
\citet{stanton2023bayesian} drew motivation from \citet{fannjiang2022conformal}, who first formalized the phenomenon of \textit{feedback covariate shift} to generalize conformal prediction to the setting of a single round of model-based optimization, building on techniques introduced by \citet{tibshirani2019conformal}.
\citet{prinster2024conformal} further extended these techniques to generalize conformal prediction for any data distribution, including showing coverage guarantees for the multi-round model-based optimization setting.

However, the techniques used throughout this line of work do not enable one to prescribe a data distribution (e.g., select a policy) that guarantees coverage.
That is, these methods can provide prediction sets with coverage guarantees for actions generated by any \textit{fixed} policy (a ``descriptive'' use of conformal techniques), but using those prediction sets to then \textit{select} a policy (a ``prescriptive'' use) introduces further, unaccounted-for dependence between the observed data and test policy that nullifies coverage guarantees.

To enable this prescriptive use case, we build upon techniques from conformal risk control (CRC) \cite{angelopoulos2022conformal}, a general framework for selecting a hyperparameter with finite-sample guarantees for any monotonic loss, of which conformal prediction can be cast as a special case.
When the loss of interest is non-monotonic, \citet{angelopoulos2022conformal} also showed that deploying the procedure on a monotonic relaxation of the losses provides asymptotic guarantees of control.
However, they left the problem of finite-sample guarantees for non-monotonic losses open.
In the next section, we develop a variant of CRC that drops the monotonicity requirement, and instead relies on smoothness of the losses and a form of stability of the calibration procedure to control the risk on the test point.
Finally, we apply techniques from \citet{prinster2024conformal} to prove finite-sample guarantees for our procedure in the multi-round model-based optimization setting.
See Appendix \ref{app:extended_related_work} for an extended discussion of related work.

\begin{figure*}[t]
    \centering
    \subfloat[$\beta \ll 1$]{
        \includegraphics[width=0.33\textwidth]{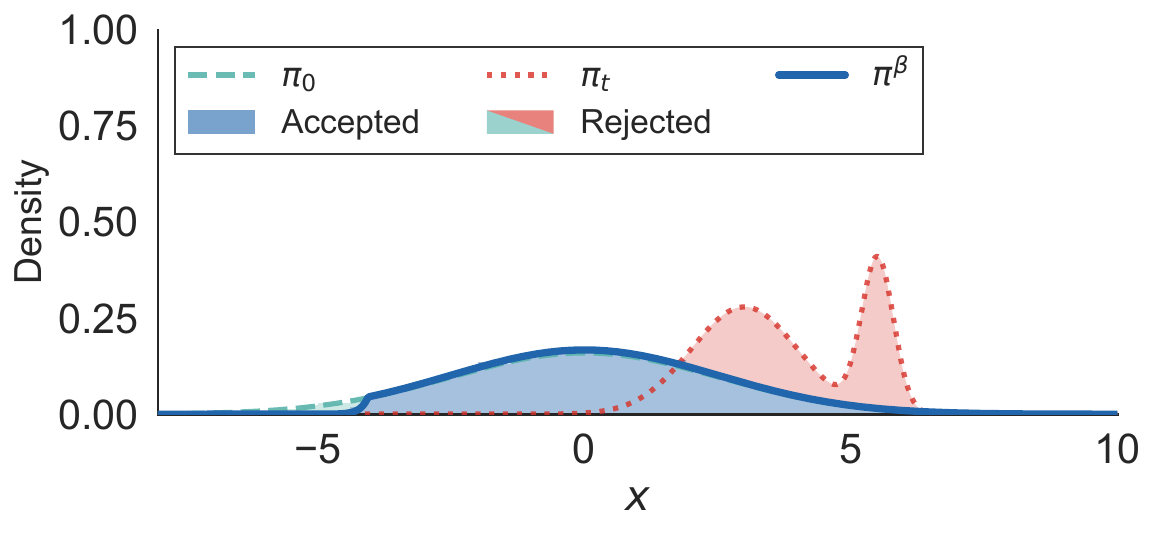}
        \label{fig:ar_beta_near_zero}
    }
    \subfloat[$\beta = 1$]{
        \includegraphics[width=0.33\textwidth]{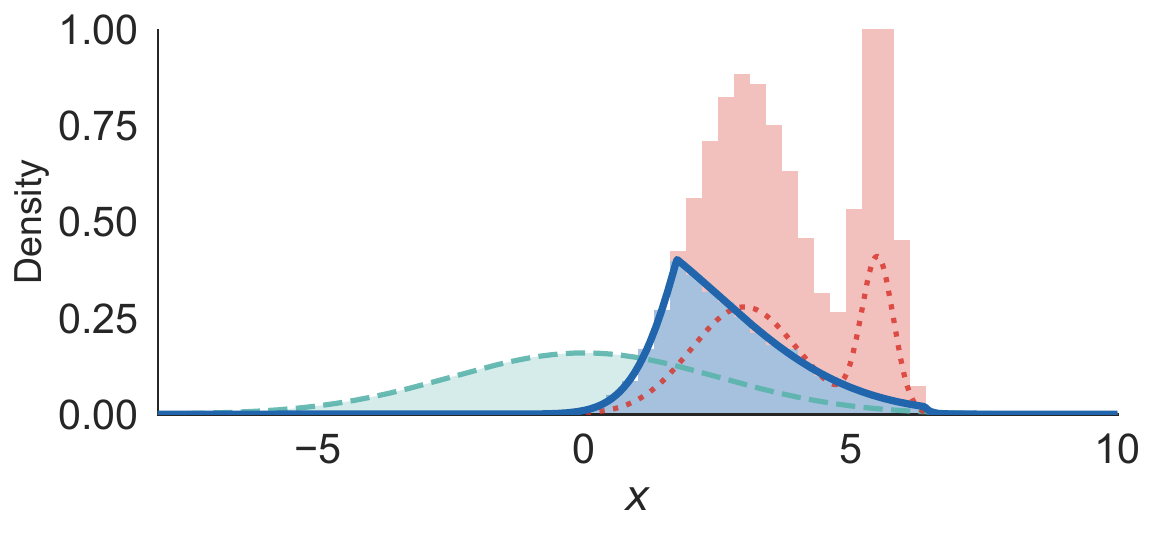}
        \label{fig:ar_beta_one}
    }
    \subfloat[$\beta \gg 1$]{
        \includegraphics[width=0.33\textwidth]{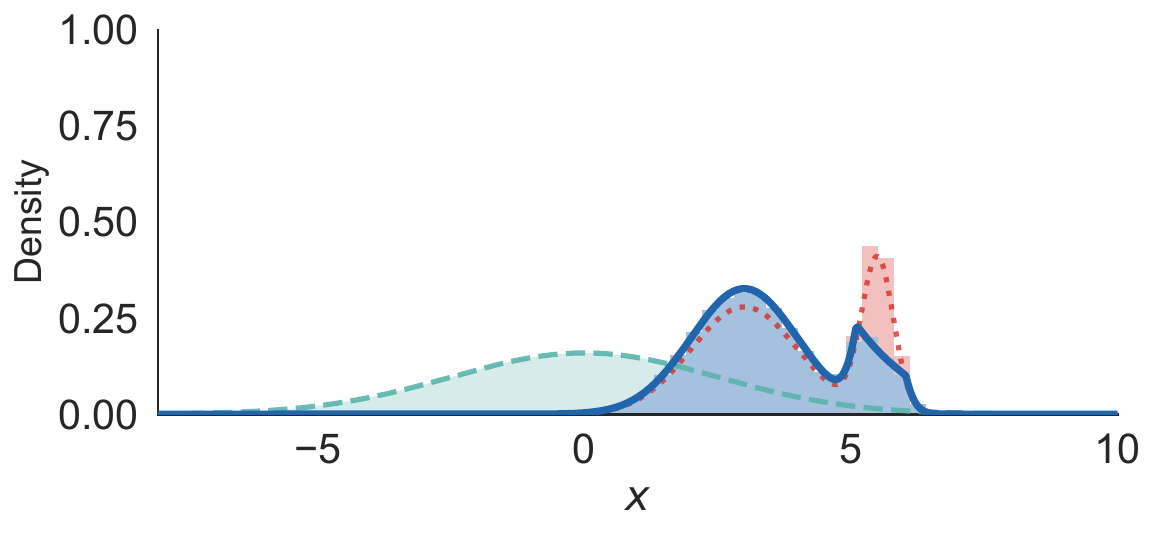}
        \label{fig:ar_beta_large}
    }
    \caption{Rejection sampling for policy interpolation at three $\beta$ values. 
    As $\beta$ increases from near-zero to large values, the constrained policy $\pi_t^{(\beta)}$
    interpolates from the safe policy $\pi_0$ to the optimized policy $\pi_t$. 
    Blue histogram bars show accepted samples matching $\pi_t^{(\beta)}$, while teal/red bars show
    rejected samples from the proposal distribution. (a) Small $\beta$ nearly recovers the safe policy (b) Intermediate $\beta$ shows intermediate interpolation with frequent rejections. (c) Large $\beta$ approaches the optimized policy with minimal rejection.}
    \label{fig:interpolating_distributions}
\end{figure*}

Concurrently, \citet{angelopoulos2026conformal} also present CRC guarantees for non-monotonic losses, but for parameterized losses, rather than the policy-control setting we introduce here where the control parameter tunes the policy instead of the loss. 
Even in the comparable setting of parameterized losses, however, the contributions are distinct, with neither subsuming the other: Relative to the univariate method in \citet{angelopoulos2026conformal}, our method (Section \ref{subsec:finite_sample_guarantees_crc_non_monotone}) will always be at least as conservative \textit{a priori} (e.g., as in Figure \ref{fig:CRC_counterexample}, right); consequently, whereas that paper's analogous guarantee relies on leave-one-out stability, our results use replace-one stability, which is generally a more relaxed assumption \citep{bousquet2002stability, shalev2010learnability}.

\section{Methods and Theory}
\label{sec:theory}

\setlength{\fboxrule}{0.1pt} 
\setlength{\fboxsep}{0.1pt}

\subsection{Extending Conformal Risk Control to Non-Monotonic Losses}
\label{subsec:finite_sample_guarantees_crc_non_monotone}

We begin with intuition. Standard CRC's proof \citep{angelopoulos2022conformal} has two key steps: First, prove that the oracle solution (using knowledge of the test point's loss), $\lambda^*$, achieves risk control; and second, show that the empirical CRC solution (with a conservative adjustment for the unknown test loss), $\hat{\lambda}$, is deterministically greater than or equal to the oracle, $\hat{\lambda}\geq \lambda^*$. Note that here, larger $\lambda$ are safer. In particular, for standard CRC, monotonicity of the loss function is then necessary and sufficient\footnote{Sufficient given the other CRC assumptions: boundedness and right-continuity of the loss, and existence of a safe $\lambda_{\max}$.} for concluding that the empirical CRC algorithm's risk is controlled: $\mathbb{E}[L_{n+1}(\hat{\lambda})] \leq \mathbb{E}[L_{n+1}(\lambda^*)]\leq \alpha$.

Without monotonicity, what can be done? The crux is how to find a modified oracle solution, $\lambda^*_+$, and a modified empirical algorithm, $\hat\lambda_+$, such that establishing the relation $\hat\lambda_+\geq \lambda^*_+$ enables analyzing the risk of the empirical algorithm \textit{by construction}, rather than by monotonicity. Consider the family of hyperparameters defined as
\begin{align}\label{eq:methods_lambda_hat_family_def}
    &\widehat{\lambda}_+(\lbag L_{1:m}\rbag, \alpha) \\
    & \ := \inf \Big\{\lambda_0 \in \Lambda \ : \ \forall \ \lambda\geq \lambda_0,
    \ \ \sum_{i=1}^{m}\frac{L_i(\lambda)}{m} \leq \alpha \Big\}, \nonumber
\end{align}
when the set is nonempty, and otherwise define $\widehat{\lambda}_+(\lbag L_{1:m}\rbag, \alpha):=\lambda_{\max}$, where $\lambda_{\max}$ is a ``safe hyperparameter'' assumed to exist as in \citet{angelopoulos2022conformal}:  $\lambda_{\max}:=\sup\Lambda\in \Lambda$ and $L_i(\lambda_{\max})\leq \alpha$.
In words, $\widehat{\lambda}_+$ is the smallest (most aggressive) hyperparameter such that the empirical risk is controlled at level $\alpha$ \textit{for all hyperparameters at least as large as it}; this ``for all $\lambda\geq \lambda_0$'' qualification is the key algorithmic modification to standard CRC. 
We then define the oracle solution by Eq. \eqref{eq:methods_lambda_hat_family_def} fit on the bag (or multiset) containing both the calibration and test loss functions, $\lbag L_{1:n+1}\rbag$:
\begin{align}\label{eq:methods_lambda_oracle_def}
    \lambda_+^*:= \widehat{\lambda}_+(\lbag L_{1:n+1}\rbag, \alpha).
\end{align}
Similarly, define the empirical algorithm as Eq. \eqref{eq:methods_lambda_hat_family_def} fit on the bag of the calibration losses, with the bound $B$ in place of $L_{n+1}$ to conservatively adjust for the unknown test loss:
\begin{align}\label{eq:methods_lambda_empirical_def}
    \hat{\lambda}_+:= \widehat{\lambda}_+(\lbag L_{1:n}, B\rbag, \alpha).
\end{align}
Our empirical algorithm, $\hat{\lambda}_+$, reduces to CRC when the losses are monotonic, and it can be viewed as searching hyperparameter space from \textit{a priori} safest to most aggressive, the opposite direction of standard CRC (Figure \ref{fig:CRC_counterexample}). Also note that our generalized CRC (gCRC) is conservatively designed to be \textit{a priori} safer than the oracle, that is, $\hat\lambda_+\geq \lambda^*_+$ almost surely. Unfortunately, this construction is not itself sufficient for $\hat{\lambda}_+$ to achieve validity at level $\alpha$ (Appendix \ref{app:counterexample}). Relating the empirical algorithm to the oracle, however, does provide finite-sample guarantees for $\hat\lambda_+$ under Lipschitz continuity and depending on algorithmic stability.

\begin{definition}[$\epsilon$-replace-one stability]
    \label{def:def_replace_one_stability}
    Let $z_{1:n+1}$ denote the observed values, so that $\lbag Z_{1:n+1}\rbag = \lbag z_{1:n+1}\rbag$. We say a function  $h$ is $\epsilon$-replace-one stable if for all $i, j\in [n+1]$, 
    \begin{align*}
        \mathbb{E}\Big[\big|h(\lbag Z_{1:n+1}\rbag\backslash \{z_i\})-h(\lbag Z_{1:n+1}\rbag\backslash \{Z_j\})\big|\Big] \leq \epsilon.
    \end{align*}
\end{definition}
The following result then provides finite-sample guarantees for $\hat\lambda_+$ even for non-monotonic losses.
\begin{theorem}
\label{thm:gcrc_nonmonotonic}
    Assume exchangeable $L_i(\lambda)$. Define  $\lambda_{\max}:=\sup\Lambda\in \Lambda$ and assume 
    \begin{align*}
        L_i(\lambda_{\max})\leq \alpha, \qquad \sup_{\lambda}L_i(\lambda)\leq B < \infty \quad \text{almost surely}.
    \end{align*}
    If the $L_i(\lambda)$ are $K$-Lipschitz in $\lambda$ and $\hat\lambda_+$ is $\epsilon$-replace-one stable, then
    \begin{align*}
        \mathbb{E}\big[L_{n+1}(\hat\lambda_+)\big] \leq \alpha + K\epsilon.
    \end{align*}
\end{theorem}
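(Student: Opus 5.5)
The plan is to compare $\hat\lambda_+$ with a ``symmetrized'' hyperparameter that depends only on the bag $\lbag L_{1:n+1}\rbag$, and which by construction lies in the region where the full-bag empirical risk is controlled for \emph{all} larger $\lambda$. Then exchangeability controls the test risk at that symmetric hyperparameter, and the Lipschitz and stability assumptions pay for the gap.

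Write $Z_i = L_i(\cdot)$ and define, for any bag $S$ of $n$ loss functions, $h(S) := \widehat{\lambda}_+(S \cup \lbag B \rbag, \alpha)$ as in Eq.~\eqref{eq:methods_lambda_hat_family_def}. Then $\hat\lambda_+ = h(\lbag Z_{1:n+1}\rbag \backslash \{Z_{n+1}\})$, and $\epsilon$-replace-one stability of $\hat\lambda_+$ is read as stability of $h$ in the sense of Definition~\ref{def:def_replace_one_stability}.

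\textbf{Step 1 (dominance over the oracle).} Fix an index $i$ and let $\lambda_i := h(\lbag z_{1:n+1}\rbag\backslash\{z_i\})$. This is a function of the bag only, once we decide which observed value is removed; for instance, take $i$ fixed by an ordering of values, or average over $i$ at the end. The bag $(\lbag z_{1:n+1}\rbag\backslash\{z_i\})\cup\lbag B\rbag$ dominates $\lbag z_{1:n+1}\rbag$ pointwise in $\lambda$, because $z_i(\lambda)\le B$. Two consequences follow:
\begin{itemize}
\item For every $\lambda \ge \lambda_i$, the full-bag average satisfies $\frac{1}{n+1}\sum_{j=1}^{n+1} z_j(\lambda)\le\alpha$.
\item This also holds in the degenerate case $\lambda_i=\lambda_{\max}$, using $L_j(\lambda_{\max})\le\alpha$.
\end{itemize}
Hence $\frac{1}{n+1}\sum_{j} L_j(\lambda_i)\le \alpha$ almost surely. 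In particular $\lambda_i \ge \lambda^*_+$, although only the averaged inequality is needed. Two technical points need care here: the infimum in \eqref{eq:methods_lambda_hat_family_def} need not be attained, and continuity from Lipschitzness handles that; and the ``for all $\lambda\ge\lambda_0$'' clause is exactly what makes the conclusion hold at $\lambda_i$ itself rather than only at some point of a level set.

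\textbf{Step 2 (exchangeability).} Conditional on the bag, $\lambda_i$ is fixed and $Z_{n+1}$ is uniform over the bag's elements. Therefore
\[
\mathbb{E}[L_{n+1}(\lambda_i)] = \mathbb{E}\Big[\tfrac{1}{n+1}\textstyle\sum_j L_j(\lambda_i)\Big]\le \alpha .
\]

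\textbf{Step 3 (stability plus Lipschitz).} By $K$-Lipschitzness,
\[
\mathbb{E}[L_{n+1}(\hat\lambda_+)] - \mathbb{E}[L_{n+1}(\lambda_i)] \le K\,\mathbb{E}\big|h(\lbag Z_{1:n+1}\rbag\backslash\{Z_{n+1}\}) - h(\lbag Z_{1:n+1}\rbag\backslash\{z_i\})\big|\le K\epsilon .
\]
The last inequality is the replace-one stability with $j=n+1$. Combining the three steps gives $\alpha+K\epsilon$.

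The main obstacle is Step 1 combined with the measurability and symmetry bookkeeping. One must check that $\lambda_i$ is genuinely a function of the unordered bag, so that Step 2's exchangeability argument is legitimate. One must also verify that the conservative $B$-padding yields the all-$\lambda\ge\lambda_i$ control on the full bag even at the boundary (infimum) point. I would handle the first by averaging the bound over $i\in[n+1]$, which is symmetric, and the second by a limiting argument using continuity of the $L_j$.
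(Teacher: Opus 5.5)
Your proposal is correct and follows essentially the paper's argument. The paper likewise compares $\hat\lambda_+$ to the bag-measurable threshold $\widehat{\lambda}_+(\lbag l_{1:n+1\backslash i}, B\rbag,\alpha)$ with reference index $i=n+1$, shows via the $B$-padding that it dominates the oracle $\lambda_+^*$ so the full-bag average is at most $\alpha$, uses exchangeability to equate the bag-conditional risk with that average, and pays for the gap termwise with $K$-Lipschitzness and replace-one stability at $j=n+1$; your Step~1 only skips the explicit oracle, and your use of Lipschitz continuity at a non-attained infimum supplies the right-continuity the paper invokes without listing it as a hypothesis.
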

\textbf{Proof sketch:} Conditional on the bag of data, $\lbag z_{1:n+1} \rbag$, we show that the oracle $\lambda_+^*$ achieves risk control, which implies risk control for all $\lambda\geq\lambda_+^*$ that are constant conditional on $\lbag z_{1:n+1} \rbag$. We then show $\hat\lambda_+\geq \lambda_+^*$ almost surely. Lastly, using Lipschitz continuity and replace-one stability, we relate the risk of $\hat\lambda_+$ to that of some $\lambda\geq\lambda_+^*$ that is constant conditional on $\lbag z_{1:n+1} \rbag$. We defer the proof to App. \ref{app:gcrc_proof_nonmonotonic_1dim}.

\begin{remark}[Attaining $\alpha$-validity if $K$ is known]
    If $K$ is known, then a more conservative significance level, $\hat\alpha \leq \alpha$, can be used for level $\alpha$ validity, regardless of instability:
    \begin{align*}
        \mathbb{E}\big[L_{n+1}\big(\hat{\lambda}_{+}(\hat\alpha)\big)\big] \leq \alpha,
    \end{align*}
    where $\hat\alpha$ is defined in Appendix \ref{app:conservative_adjustment_K_known}.
\end{remark}

\begin{remark}[Monotone envelope interpretation]
    The $\hat{\lambda}_+$ in Eq. \eqref{eq:methods_lambda_empirical_def} can be viewed as the smallest hyperparameter such that the ``monotonized empirical risk'' is controlled at $\alpha$. \citet{angelopoulos2022conformal} presented the procedure with this interpretation in their App. A, but noted that finite-sample guarantees for $\hat{\lambda}_+$ were at the time unknown. 
\end{remark}

\begin{algorithm}[!t]
    \caption{Pseudo-code for $\beta$ calibration.}
    \label{alg:calibrate_beta}
    \begin{algorithmic}
        \REQUIRE Safe policy $\pi_0$; optimized policies $\pi_1, ..., \pi_t$; sets of prompts $X^{(0)}, ..., X^{(t)}$; calibration data $\mathcal{D}_{\text{cal}}$; proposal data $\mathcal{D}_{\text{prop}} \sim \pi_t$; target risk $\alpha$; loss upper bound $B$.
        \ENSURE Likelihood ratio bound $\hat{\beta}$
        \STATE $\rho_i \gets \pi_t(A_i | X^{(t)}) \,/\, \pi_0(A_i | X^{(0)})$ for $i \in \mathcal{D}_{\text{cal}} \cup \mathcal{D}_{\text{prop}}$
        \STATE $G \gets \textsc{PrepareGrid}(\{\rho_i\})$
        \STATE Initialize: $\hat{\beta} \gets \beta_{\min}$
        \FOR{$\beta \in G$ (ascending)}
            \STATE \# \text{Compute constrained PDF, mixture of past PDFs}
            \STATE $\psi_\beta \gets \textsc{EstimatePsi}(\{\rho_i\}, \beta)$ 
            \STATE $\pi_t^{(\beta)}(A_i) \gets \min(\pi_t(A_i), \beta\cdot \pi_0(A_i)) \,/\, \psi_\beta$ for $i \in \mathcal{D}_{\text{cal}}$
            \STATE $\pi_{0:t-1}^{\text{mix}}(A_i) \gets \textsc{MixPDF}(A_i, \pi_{0:t-1})$ for $i \in \mathcal{D}_{\text{cal}}$
            \STATE
            \STATE \# \text{Compute conformal weights, check empirical risk}
            \STATE $w_i \gets \pi_t^{(\beta)}(A_i) / \pi_{0:t-1}^{\text{mix}}(A_i)$ for $i \in \mathcal{D}_{\text{cal}}$
            \STATE $w_{\max} \gets \textsc{EstimateWMax}(\{w_i\}, \pi_0, \pi_t)$
            \STATE $\tilde{w}_i \gets w_i/(\sum_{j \in \mathcal{D}_{\text{cal}}} w_j + w_{\max})$ for $i\in \mathcal{D}_{\text{cal}}$
            \STATE $\tilde{w}_{\max} \gets w_{\max}/(\sum_{j \in \mathcal{D}_{\text{cal}}} w_j + w_{\max})$
            \IF{$\sum_{i \in \mathcal{D}_{\text{cal}}} \tilde{w}_i \cdot l_i + \tilde{w}_{\max} \cdot B > \alpha$}
                \STATE \textbf{Return:} $\hat{\beta}$
            \ENDIF
            \STATE $\hat{\beta} \gets \beta$
        \ENDFOR
        \STATE \textbf{Return:} $\max(G)$
    \end{algorithmic}
\end{algorithm}

\subsection{Conformal Policy Control: Selecting a Risk-Controlling Policy}
\label{subsec:cpc_selecting_rc_policy}

We first state the primary assumptions required by CPC: access to a safe policy and subsequent optimized policies. For ease of notation we describe CPC in an online setting where one action is taken each policy improvement step, but it also works similarly in a multiround batch setting.

\textbf{Assumption 1: An Initial Safe Policy} Assume access to an initial safe policy, $\pi_{0}$, for which the true risk is controlled at the desired level $\alpha$; that is, assume $\mathbb{E}_{X \sim p} \ \mathbb{E}_{A \sim \pi_{0}(\cdot \mid X)} \ [\ell(X, A)] \leq \alpha$
holds for $\pi_{0}$.
Note that this assumption is weaker than it may appear: any existing policy (a previous deployment, a domain default, or even a uniform baseline) qualifies as $\pi_0$ so long as its risk is at most $\alpha$. When little is known, $\alpha$ can be set loosely and any reasonable baseline suffices; the strength of the resulting guarantee scales with how much is actually known about what is safe.

\textbf{Assumption 2: Access to Each Optimized Policy} At each time $t = 1, 2, ...$, also assume access to the optimized model, $\pi_{t}$, which may be trained to maximize expected reward (as in the ``unconstrained'' part of Eq. \eqref{eq:constrained_policy_opt}). While we require that $\pi_0, \pi_1, ..., \pi_t$ are mutually absolutely continuous,\footnote{It is simple to enforce absolute continuity, for instance with a linear mixture $\bar\pi_t=(1-\delta)\pi_t + \delta\pi_0$ for some $\delta : 0 < \delta \ll 1$.}
we do not require any other assumptions on how $\pi_{t}$ is trained, although better optimized performance (with respect to both the constraint and the reward) may translate into better efficiency and reward of the constrained policy.

\textbf{Constraining Policies by Clipping Likelihood Ratios} We define the constrained policy $\pi_{t}^{(\beta)}$ by clipping the likelihood ratio $\pi_t / \pi_0$ at a bound $\beta > 0$ (i.e., $\beta\in\mathcal{B}\subseteq(0, \infty)$) and renormalizing. Actions whose likelihood ratio exceeds $\beta$ are capped at $\beta \cdot \pi_0$, limiting how far the constrained policy can deviate from the safe policy. That is,
\begin{align}\label{eq:piecewise_constraint_def}
\pi_{t}^{(\beta)}(a \mid x) \propto \min\big(\pi_{t}(a \mid x), \ \beta\cdot \pi_{0}(a \mid x)\big) .
\end{align}
Assume there is some small, positive $\beta_{\min}\ll1$ such that $\beta_{\min}\leq \pi_t(a \mid x)/\pi_0(a \mid x)$ for all $x\in \mathcal{X}$, $a \in \mathcal{A}$. As $\beta \to \beta_{\min}$ the constrained policy approaches the safe one, $\pi_t^{(\beta)} \to \pi_0$, and as $\beta \to \infty$ it approaches the optimized one, $\pi_t^{(\beta)} \to \pi_t$ (Figure \ref{fig:interpolating_distributions}). Crucially, because $\beta$ governs the policy and not the loss, any smoothness needed for a guarantee will be smoothness the agent can compute and verify. Remarks \ref{remark:cpc_extension} and \ref{remark:enforced_smoothness} clarify the smoothness required for a guarantee and how the choice of $\beta$ enforces it.

\textbf{Calibrating $\boldsymbol{\beta}$:} We calibrate $\beta$ analogously to $\hat\lambda_+$ (Section \ref{subsec:finite_sample_guarantees_crc_non_monotone}), searching from what we \textit{a priori} expect is safest to most aggressive. For $\beta$, this means searching in increasing order in a grid (or set of intervals) $G\subset (0, \infty)$, to find the largest value where the conservatively-adjusted weighted empirical risk is still controlled (Algorithm \ref{alg:calibrate_beta}):
\begin{align}\label{eq:beta_hat_def_methods}
    &\hat\beta = \hat{\beta}\big(\lbag L_{0:t-1}, B \rbag, (\tilde{w}_{0:t-1}^{(\cdot)}, \tilde{w}_{\max}^{(\cdot)}),  \alpha\big) \\
    & := \sup \Big\{\beta' \in G \ : \ \forall \ \beta \leq \beta', \ B\tilde{w}_{\max}^{(\beta)} + \sum_{i=0}^{t-1}l_i\tilde{w}_i^{(\beta)} \leq \alpha \Big\}, \nonumber
\end{align}
where the $\tilde{w}_i^{(\beta)}$ and $\tilde{w}_{\max}^{(\beta)}$ are (normalized) conformal importance weights. 
We assume $\hat\beta$ in Eq. \eqref{eq:beta_hat_def_methods} exists almost surely, but if Eq. \eqref{eq:beta_hat_def_methods} does not exist then let $\hat\beta=\beta_{\min}$, which recovers the safe policy, $\pi_t^{(\beta_{\min})}=\pi_0$. For a given deployment, the assumption that $\hat\beta$ exists is easily verifiable by simply checking if $\beta_{\min}$ satisfies Eq. \eqref{eq:beta_hat_def_methods}; i.e., if the safe policy's (conservatively weighted) empirical risk is below $\alpha$ for a given calibration set.
These weights correct for the distribution shift between the policies that generated the calibration data and the candidate policy $\pi_t^{(\beta)}$: since the agent's past actions were drawn from earlier policies, each observation must be reweighted to obtain a valid risk estimate under $\pi_t^{(\beta)}$. This is the importance weighting described in Section~\ref{sec:background}, now made precise for the multi-round setting.

Formally, define $\tilde{w}_i^{(\beta)}$ as in \citet{prinster2024conformal} via the likelihoods of all permutations of calibration data $z_{0:t-1}$ and a hypothetical test sample $z_t \in \mathcal{Z} := \mathcal{X} \times \mathcal{A}$. A key subtlety here is that $z_t$ is not yet known, as CPC occurs prior to sampling. For any $z_t$, define the unnormalized weight
\begin{align*}
    w_i^{(\beta)}:=\sum_{\sigma:\sigma(t)=i}\pi_{0:t}^{(\beta)}(z_{\sigma(0)}, ..., z_{\sigma(t)})
\end{align*}
for all $i\in\{0, ..., t\}$ and for permutations $\sigma$ of the indices. For $i\in\{0, ..., t\}$, further denote the normalized weights as
\begin{align*}
    \tilde{w}_i^{(\beta)}:= w_i^{(\beta)}/\sum_{i=0}^tw_i^{(\beta)}, \hspace{4mm}\tilde{w}_{\max}^{(\beta)}:=\sup_{z_t\in \mathcal{Z}}(\tilde{w}_t^{(\beta)}),
\end{align*}
where $\tilde{w}_{\max}^{(\beta)}$ is a conservative estimate of the test point weight.
Hereon, we refer to normalized calibration and (conservative) test point weights, that is, assuming $\sum_{i=0}^{t-1}\tilde{w}_{i}^{(\beta)}+\tilde{w}_{\max}^{(\beta)}=1$.
In practice these weights are not computationally tractable, but we estimate the $\tilde{w}_i^{(\beta)}$ by approximating past policies as one mixture distribution, $\pi_{0:t-1}^{\text{mix}}\approx \pi_{0:t-1}$, which yields a fast linear runtime; additionally, we estimate $\tilde{w}_{\max}^{(\beta)}$ by an empirical maximum over samples. For exact weights, CPC attains a guarantee that depends on \textit{relative} stability with respect to replacing one calibration sample.
\begin{definition}[$\epsilon^{(1)}$-relative replace-one stability]
    \label{def:def_relative_replace_one_stability}
    Let $\hat\beta^{\backslash Z_j}$ and $\hat\beta^{\backslash z_i}$ be $\hat\beta$ with $Z_j$ or $z_i$ removed from $\lbag Z_{0:t}\rbag=\lbag z_{0:t}\rbag$, respectively (see App. \ref{app:cpc_notation_and_defs}). Then, $\hat\beta$ is $\epsilon^{(1)}$-relative replace-one stable if for all $i, j\in \{0, ..., t\}$,
    \begin{align*}
        \mathbb{E}_{\pi_{0:t}^{(\hat\beta)}}\bigg[\frac{\big|\hat\beta^{\backslash Z_j}-\hat\beta^{\backslash z_i}\big|}{\min\big(\hat\beta^{\backslash Z_j},\hat\beta^{\backslash z_i}\big)}\bigg] \; \leq \; \epsilon^{(1)}.
    \end{align*}
\end{definition}

\begin{theorem}[Conformal policy control guarantee]
\label{thm:cpc}
    Assume that $\mathbb{E}_{\pi_0}[L_0]\leq \alpha$,  $L_i\in[0,B], B<\infty$ for all $i$, and that $\pi_0, \pi_1, ..., \{\pi_t^{(\beta)}\}_{\beta\in \mathcal{B}}$ are mutually absolutely continuous for all $\beta\in\mathcal{B}$. Assume that $\hat\beta$ (Eq. \eqref{eq:beta_hat_def_methods}) exists almost surely, for the likelihood-ratio control parameter as in Eq. \eqref{eq:piecewise_constraint_def}. If  $\hat\beta$ is $\epsilon^{(1)}$-relative replace-one stable, then
    \begin{align*}
        \mathbb{E}_{\pi_{0:t}^{(\hat\beta)}}\big[L_{t} \big] \; \leq \; \alpha + \max(\alpha,\,B-\alpha)\,\epsilon^{(1)}. 
    \end{align*}
\end{theorem}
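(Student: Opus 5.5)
We follow the three-step template of Theorem~\ref{thm:gcrc_nonmonotonic}, replacing exchangeability with the weighted exchangeability of \citet{prinster2024conformal}. First, we condition on the bag $\lbag Z_{0:t}\rbag=\lbag z_{0:t}\rbag$. Under the joint law $\pi_{0:t}^{(\beta)}$, the probability that the test point $Z_t$ equals $z_i$ given the bag is exactly $\tilde w_i^{(\beta)}$; this is the permutation-likelihood identity that defines the weights. Hence, for any $\beta$ that is a fixed function of the bag,
\[
\mathbb{E}\big[L_t\mid \lbag z_{0:t}\rbag\big]=\sum_{i=0}^t \tilde w_i^{(\beta)} l_i .
\]

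We then define an oracle $\beta^*$ by the rule in Eq.~\eqref{eq:beta_hat_def_methods} with the true test loss $l_t$ and weight $\tilde w_t^{(\beta)}$ in place of $B$ and $\tilde w_{\max}^{(\beta)}$. Because $\beta^*$ is symmetric in the bag, the oracle achieves $\sum_i \tilde w_i^{(\beta)} l_i\le\alpha$ for every $\beta\le \beta^*$. The same holds for any bag-measurable $\beta\le\beta^*$, since the "for all $\beta\le\beta'$" qualification makes control hold on the whole initial segment.

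Next we show $\hat\beta\le\beta^*$ almost surely, which is the analogue of $\hat\lambda_+\ge\lambda_+^*$ with the orientation reversed, since small $\beta$ is safe here. For each $\beta$,
\[
B\tilde w^{(\beta)}_{\max}+\sum_{i<t} l_i\tilde w^{(\beta)}_i \;\ge\; \sum_{i\le t} l_i\tilde w_i^{(\beta)} .
\]
This follows because the conservative version moves mass $\tilde w_{\max}-\tilde w_t\ge 0$ onto the loss $B\ge l_t$. One needs care with the normalization: the calibration weights are renormalized once $\tilde w_{\max}$ replaces $\tilde w_t$. Here we use that $B\ge l_i$ for all $i$, so shifting mass toward the $B$ term only increases the sum. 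Thus every $\beta$ feasible for $\hat\beta$ is feasible for $\beta^*$, and the initial-segment structure gives $\hat\beta\le\beta^*$. Existence of $\hat\beta$, together with Assumption~1 at $\beta_{\min}$ (where $\pi^{(\beta_{\min})}_t=\pi_0$), ensures both sets are nonempty.

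The main obstacle is that $\hat\beta$ is not a function of the bag alone, because it omits the test point. Following the proof of Theorem~\ref{thm:gcrc_nonmonotonic}, we compare $\hat\beta=\hat\beta^{\backslash Z_t}$ with the bag-measurable surrogates $\hat\beta^{\backslash z_i}$, each of which is $\le\beta^*$ and therefore enjoys risk control. In place of Lipschitz continuity of the loss in the control parameter, we use the explicit structure of Eq.~\eqref{eq:piecewise_constraint_def}. The function $\beta\mapsto\min(\pi_t,\beta\pi_0)$ satisfies
\[
\min(\pi_t,\beta\pi_0)\le \min(\pi_t,\beta'\pi_0)\le(\beta'/\beta)\min(\pi_t,\beta\pi_0)
\quad\text{for } \beta\le\beta' .
\]
Passing through the normalization $\psi_\beta$, the two constrained densities, and hence the weights, differ pointwise by a multiplicative factor governed by $|\beta'-\beta|/\min(\beta,\beta')$.

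Writing the conditional risk as $\alpha$ plus a deviation, we split the deviation into the part where the density ratio inflates mass on losses above $\alpha$ and the part where it deflates mass on losses below $\alpha$. The centred losses $l-\alpha$ lie in $[-\alpha,B-\alpha]$, so each part is bounded by $\max(\alpha,B-\alpha)$ times the relative change. Averaging over which bag element is the test point, and taking expectation, converts this into $\max(\alpha,B-\alpha)\,\epsilon^{(1)}$ via Definition~\ref{def:def_relative_replace_one_stability}. The hardest part will be making this density-ratio comparison rigorous through the permutation-summed weights and $\psi_\beta$. To first order we would bound $\pi^{(\hat\beta)}_t/\pi^{(\beta_i)}_t-1$ pointwise, where $\beta_i:=\hat\beta^{\backslash z_i}$, and apply it only to the test-point factor of the joint likelihood. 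The other factors $\pi_{0:t-1}$ do not depend on $\beta$, which is what keeps the bound linear in $\epsilon^{(1)}$.
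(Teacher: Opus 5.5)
Your proposal is correct and follows essentially the same route as the paper's proof (via Theorem~\ref{thm:cpc_general} with $K=1$). The shared steps are bag-conditioning, the oracle $\beta^*$ with $\hat\beta^{\backslash z_i}\le\beta^*$ for all $i$, a change of measure confined to the test-point factor via the clipping inequality (the paper's Lemma~\ref{lem:loglip}), centering losses at $\alpha$ with a sign split bounded by $\max(\alpha,B-\alpha)$, and averaging plus relative replace-one stability. The one detail to make explicit is that the paper fixes a single bag-measurable reference $\hat\beta^{\backslash z_t}$ and bounds the risk by $\alpha+\sum_i (l_i-\alpha)\bigl(1-\pi_t^{(\hat\beta^{\backslash z_t})}(z_i)/\pi_t^{(\hat\beta^{\backslash z_i})}(z_i)\bigr)\mathbb{P}(Z_t=z_i\mid\text{bag})$; each ratio lies exactly in $[\kappa_i^{-1},\kappa_i]$, so no first-order approximation is needed.
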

\begin{remark}[Extension of CPC to other control parameters]
    \label{remark:cpc_extension}
    Theorem \ref{thm:cpc} is the special case of a more general guarantee that allows $\beta$ to be any control parameter for which $\pi_t^{(\beta)}$ is $K$-log-Lipschitz in $\log(\beta)$ (Theorem \ref{thm:cpc_general} in App. \ref{app:cpc_proofs}).
\end{remark}

\begin{figure}[!t]
\centering
\includegraphics[width=\linewidth]{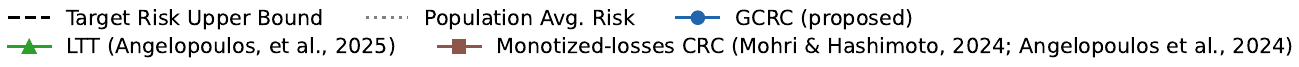}\\[0.5em]
\begin{subfigure}{0.48\linewidth}
    \includegraphics[width=\linewidth]{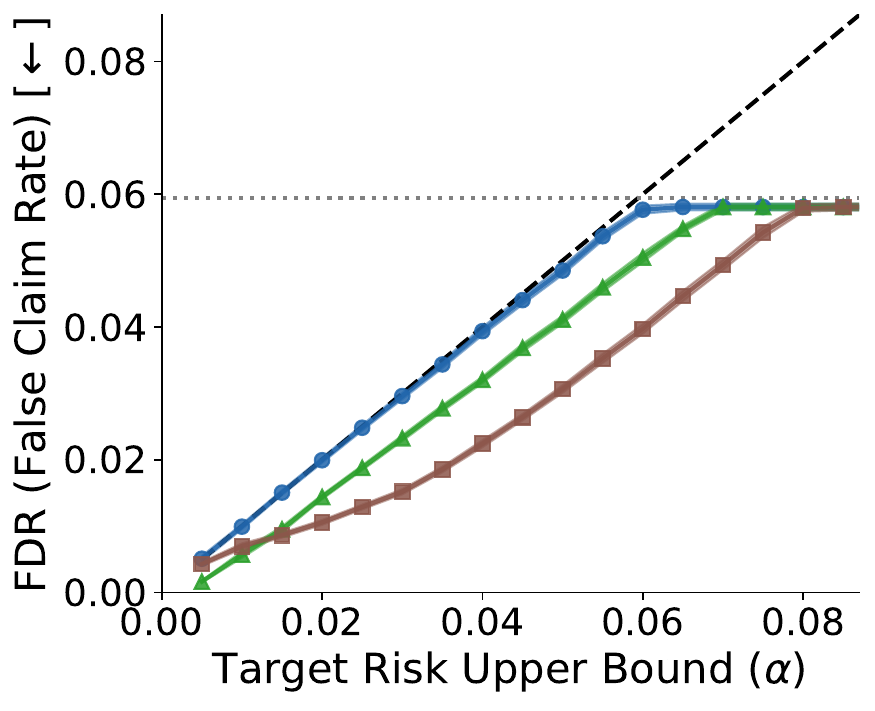}
\end{subfigure}
\hfill
\begin{subfigure}{0.48\linewidth}
    \includegraphics[width=\linewidth]{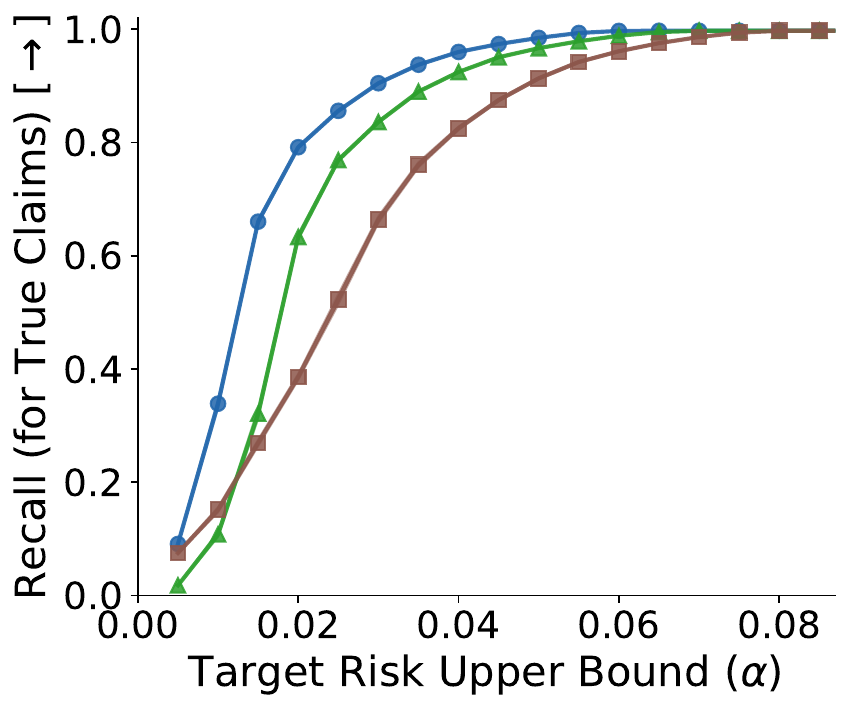}
\end{subfigure}
\caption{Medical QA factuality control results on MedLFQA. \textbf{Left:} Empirical FDR (fraction of retained claims that are false) vs.\ target risk level $\alpha$. The dashed line indicates $y = x$; valid methods should fall at or below this line. \textbf{Right:} Recall (fraction of true claims retained) vs.\ $\alpha$. gCRC tightly controls FDR at the target level while achieving superior recall to baselines. Results averaged over 25 random splits. Error regions are standard errors.}
\label{fig:MedicalQA}
\end{figure}

\begin{remark}[CPC's enforced smoothness condition]
    \label{remark:enforced_smoothness}
    A key advantage of CPC over gCRC is that the smoothness requirement becomes enforceable. 
    Theorem \ref{thm:gcrc_nonmonotonic} requires smoothness of $l_i(\lambda)$, the loss as a function of the control parameter, a property of the unknown environment. For CPC, smoothness is only required of the agent's own constrained policy, $\pi_t^{(\beta)}$, with respect to the control parameter, $\beta$ (see Remark \ref{remark:cpc_extension}). The choice of $\beta$ as the likelihood-ratio bound (Eq. \eqref{eq:piecewise_constraint_def}) directly enforces the needed smoothness: $\pi_t^{(\beta)}$ is 1-log-Lipschitz in $\log(\beta)$ (Lemma \ref{lem:loglip}). 
\end{remark}


\textbf{Sampling from the Constrained Policy in Combinatorial Action Spaces}
In large action spaces, the normalization constant of $\pi_t^{(\hat\beta)}$ is intractable. However, accept-reject (AR) sampling \citep{von1963various} avoids normalization entirely. When $\hat\beta$ is small, proposals from $\pi_0$ are efficient, with the AR envelope constant set to $\hat\beta$. When $\hat\beta$ is large, proposals from $\pi_t$ are more efficient, since $\pi_t^{(\hat\beta)} \approx \pi_t$. See Figure \ref{fig:interpolating_distributions} for an illustration and Appendix \ref{app:sampling} for details.

\begin{figure*}[!t]
\centering
    \begin{subfigure}{\textwidth}
        \centering
        \includegraphics[width=0.95\textwidth]{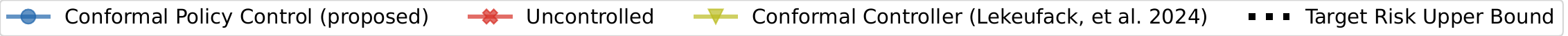}
    \end{subfigure}
    \\
    \begin{subfigure}{0.27\textwidth}
        \includegraphics[width=\textwidth]{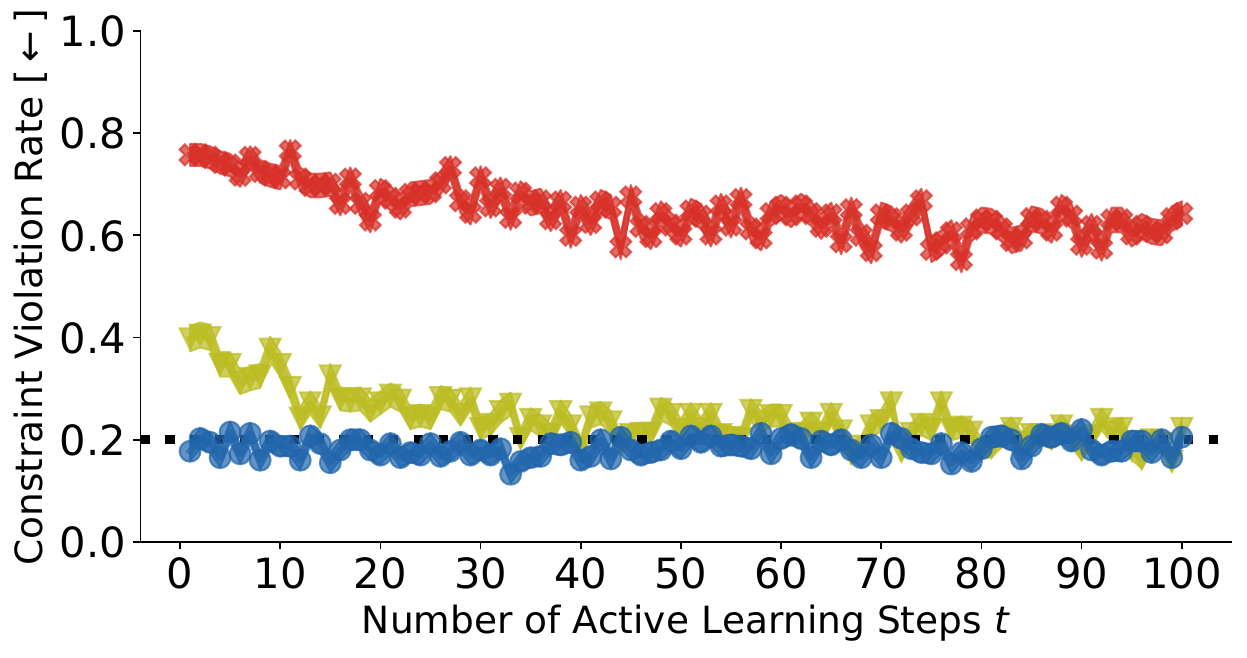}
    \end{subfigure}
    \hspace{1cm}
    \begin{subfigure}{0.27\textwidth}
        \includegraphics[width=\textwidth]{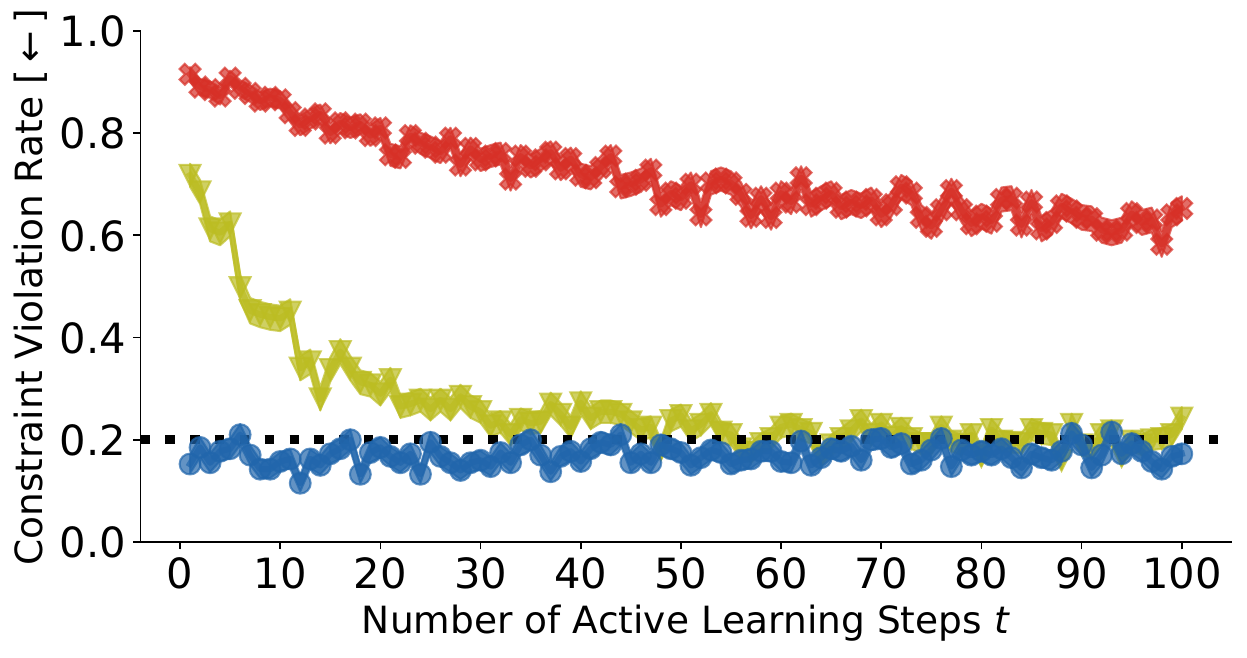}
    \end{subfigure}
    \hspace{1cm}
    \begin{subfigure}{0.27\textwidth}
        \includegraphics[width=\textwidth]{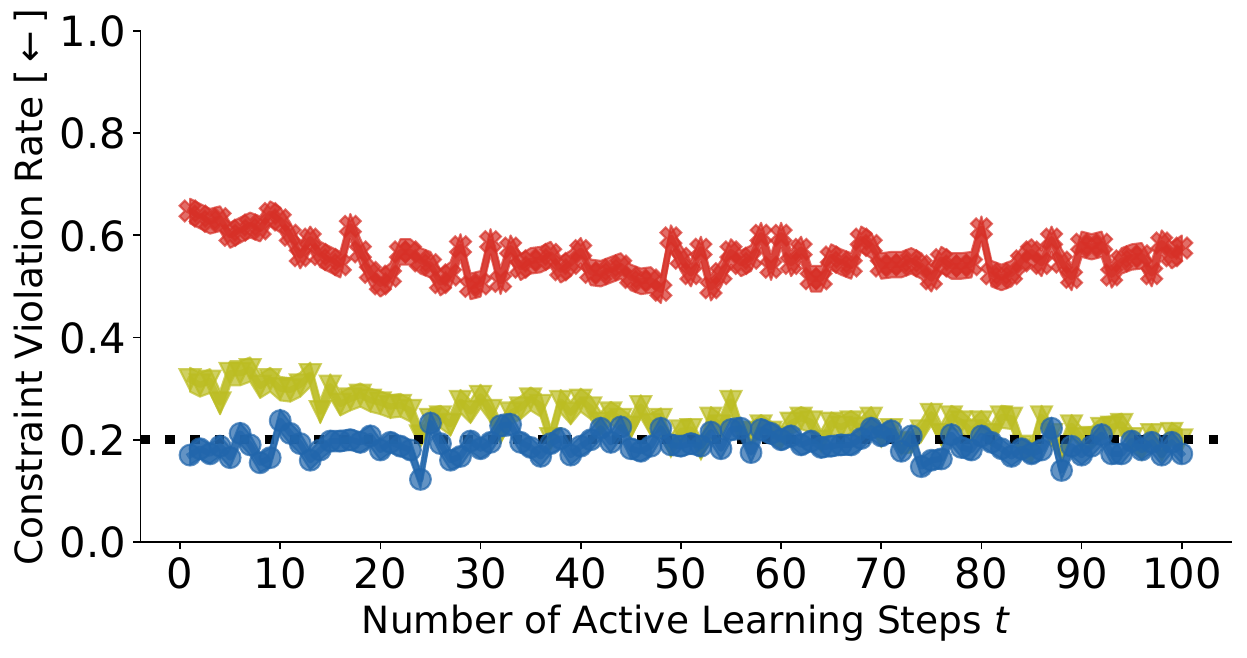}
    \end{subfigure}
    \\
    \begin{subfigure}{0.27\textwidth}
        \includegraphics[width=\textwidth]{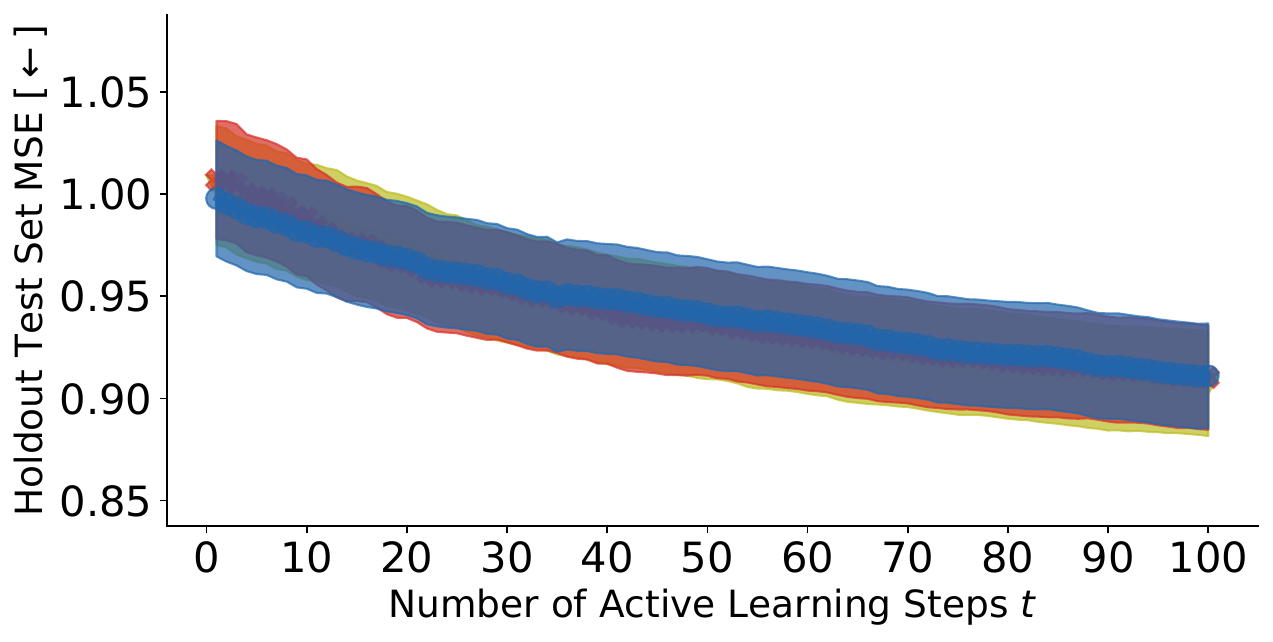}
    \end{subfigure}
    \hspace{1cm}
    \begin{subfigure}{0.27\textwidth}
        \includegraphics[width=\textwidth]{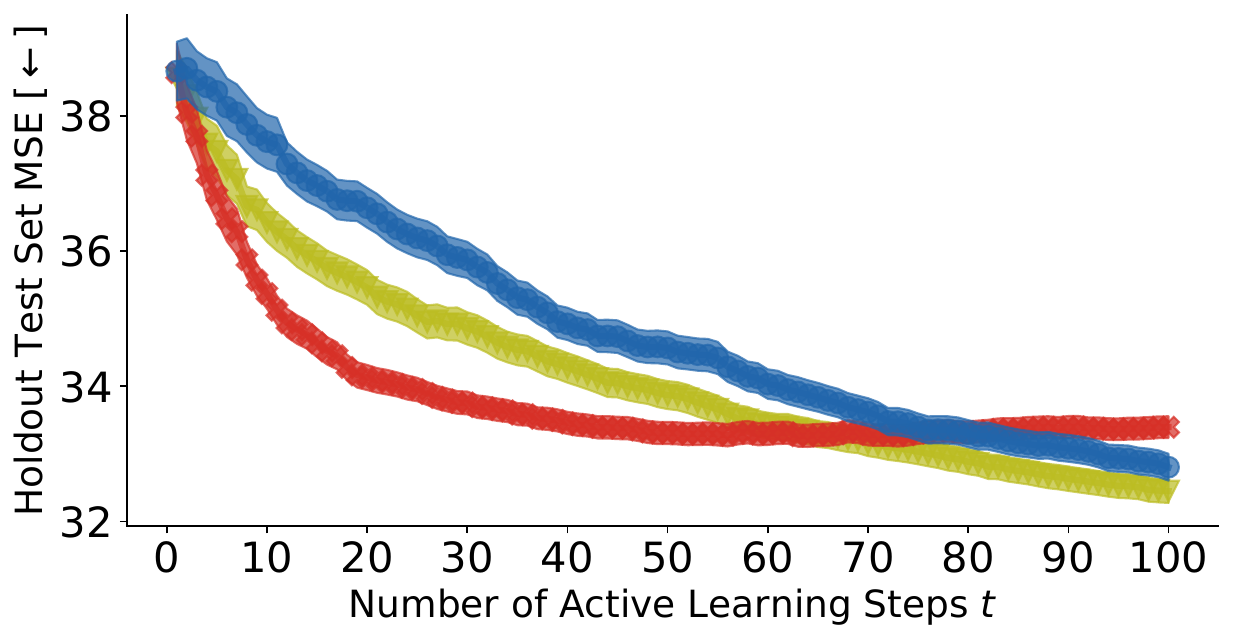}
    \end{subfigure}
    \hspace{1cm}
    \begin{subfigure}{0.27\textwidth}
        \includegraphics[width=\textwidth]{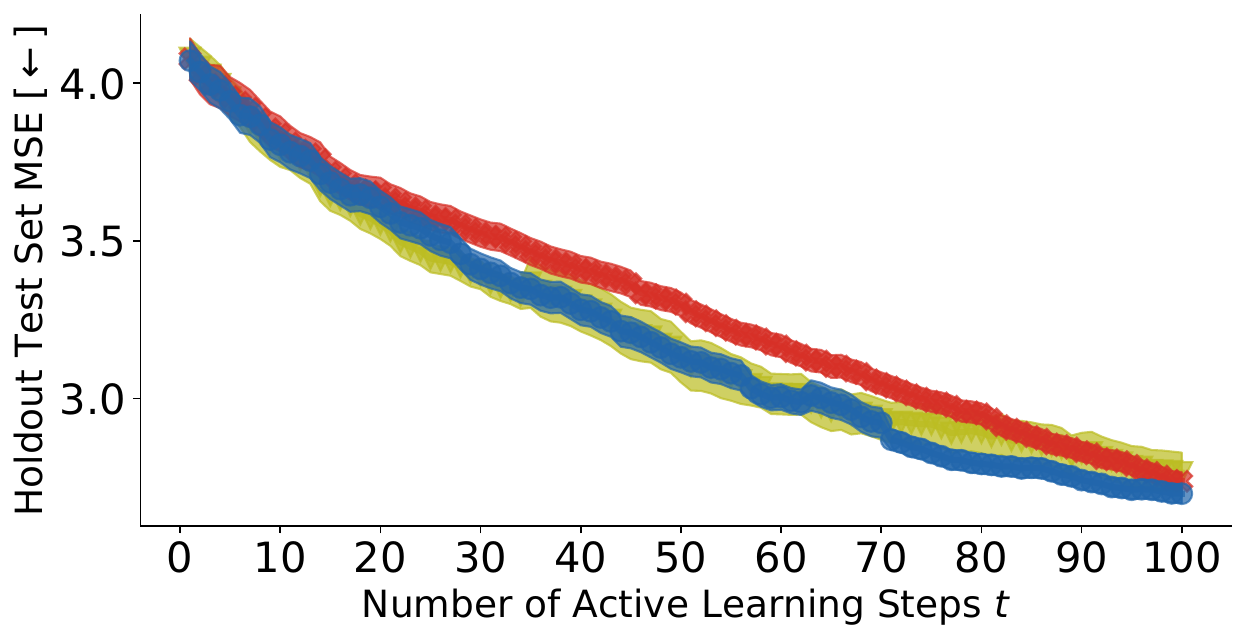}
    \end{subfigure}
    \\
    \begin{subfigure}{0.33\textwidth}
        \centering
        \textbf{Robot Arm Kinematics}
    \end{subfigure}
    \hfill
    \begin{subfigure}{0.33\textwidth}
        \centering
        \textbf{Airfoil}
    \end{subfigure}
    \hfill
    \begin{subfigure}{0.33\textwidth}
        \centering
        \textbf{Healthcare Utilization (MEPS)}
    \end{subfigure}
\caption{Conformal policy control in the active learning setting. We train Gaussian process regression models on tabular datasets, providing a small amount of initial data for training and selecting the remaining training data via exponential tilting toward the posterior predictive variance. We introduce a feasibility constraint based on alignment with the leading principal component of the Gram matrix to make the task more difficult. CPC controls the constraint violation risk at our desired threshold $\alpha = 0.2$, while simultaneously reducing test MSE. Surprisingly, in some cases the risk-controlled data selection policy attains lower test MSE than the uncontrolled policy.}
\label{fig:AL_enumerated_options}
\end{figure*}

\section{Experiments}

We present three experiments here. First, we apply generalized CRC (gCRC) to control the false discovery rate in medical question answering, a non-monotonic loss. Second, we apply CPC to constrained active learning, where feedback-loop shifts break exchangeability. Third, we apply CPC to black-box sequence optimization with a language model. Additionally, Appendix \ref{app:conservative_opt} presents a fourth experimental setting that compares CPC to standard conservative optimization on a constrained Bayesian optimization task.

\subsection{Factual Medical Question-Answering}

\textbf{Setup:} We evaluate CRC methods on the MedLFQA dataset \citep{jeong2024olaph}, which aggregates medical question-answering benchmarks. 
We use the dataset of \citet{cherian2024large}, which contains GPT-3.5-Turbo responses, atomic subclaims extracted via GPT-4o, and factuality annotations obtained by checking each subclaim against reference answers. 
Given a confidence score $p(C_{ij} \mid X_i)$ for each subclaim, we filter claims below a threshold $\tau$ and measure the false discovery rate (FDR), the fraction of retained claims that are false, as our loss. 
We measure recall, the fraction of total true claims retained, as our utility metric. Unlike the binary loss used by \citet{mohri2024language} and \citet{cherian2024large}, FDR is non-monotonic in $\tau$, making it a natural test for gCRC. We compare gCRC against CRC applied to monotonized losses and Learn Then Test (LTT) \citep{angelopoulos2021learn}. See Appendix~\ref{app:medical_qa} for details.

\textbf{Results:} Figure~\ref{fig:MedicalQA} shows FDR and recall across risk levels, $\alpha$. gCRC controls FDR at or below the target level across all $\alpha$ values while achieving recall at least as high, or often much higher, than baselines. 
Relative to gCRC, LTT is conservative because it uses concentration inequalities for a high-probability guarantee. Monotonized-loss CRC is conservative due to Jensen's inequality: monotonizing each loss is far more conservative than monotonizing the empirical risk.
These results demonstrate that gCRC provides valid risk control a non-monotonic loss where standard CRC may not be valid, with higher empirical power than baselines.

\subsection{Constrained Active Learning}
\label{subsec:active_learning_experiments}

\textbf{Setup:}
We evaluate CPC in a pool-based active learning setting on three regression datasets: \textbf{Robot Arm Kinematics}, \textbf{Airfoil}, and \textbf{Healthcare Utilization (MEPS)}. Starting from a training set selected from an initial safe region, the learner iteratively selects records from the pool to minimize test MSE.
For a challenging constrained setting, we construct synthetic feasibility constraints based on the first principal component of the covariate matrix. Records following the dominant covariation pattern (high PC1 values) are likely feasible, while records deviating from this pattern are likely infeasible. 
Uncertainty-based acquisition naturally targets the opposite end of the spectrum.
We fit a Gaussian process regression model and define an acquisition policy via exponential tilting toward posterior variance: $\pi(a) \propto \exp(\lambda \cdot \hat{\sigma}^2_a)$. CPC constrains this policy relative to the source distribution used for initialization. See Appendix~\ref{app:active_learning_details} for details.

\textbf{Results:}
In Figure~\ref{fig:AL_enumerated_options} we present results across all three datasets. CPC successfully controls constraint violation risk at the target $\alpha$ while reducing test MSE. Surprisingly, in some cases the risk-controlled policy attains \textit{lower} test MSE than the unconstrained policy, suggesting that avoiding infeasible regions improves sample efficiency.

\begin{figure*}
    \centering
    \begin{subfigure}{\textwidth}\centering
        \includegraphics[width=0.9\textwidth]{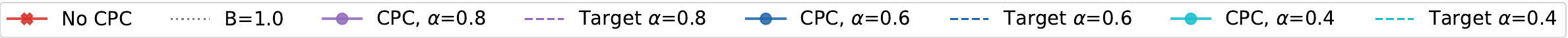}
    \end{subfigure}
    \\
    \begin{subfigure}{0.33\textwidth}
        \includegraphics[width=\textwidth]{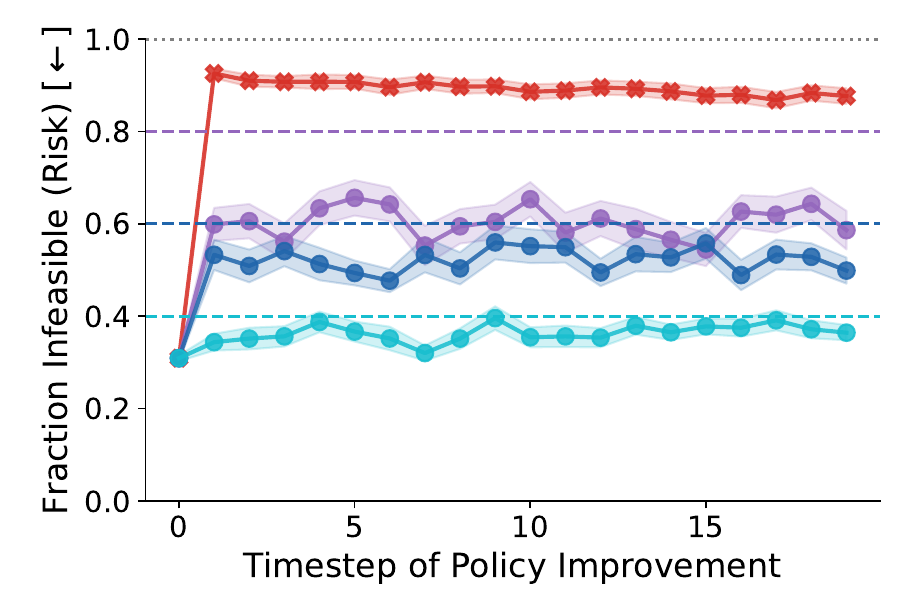}
    \end{subfigure}
    \hfill
    \begin{subfigure}{0.33\textwidth}
        \includegraphics[width=\textwidth]{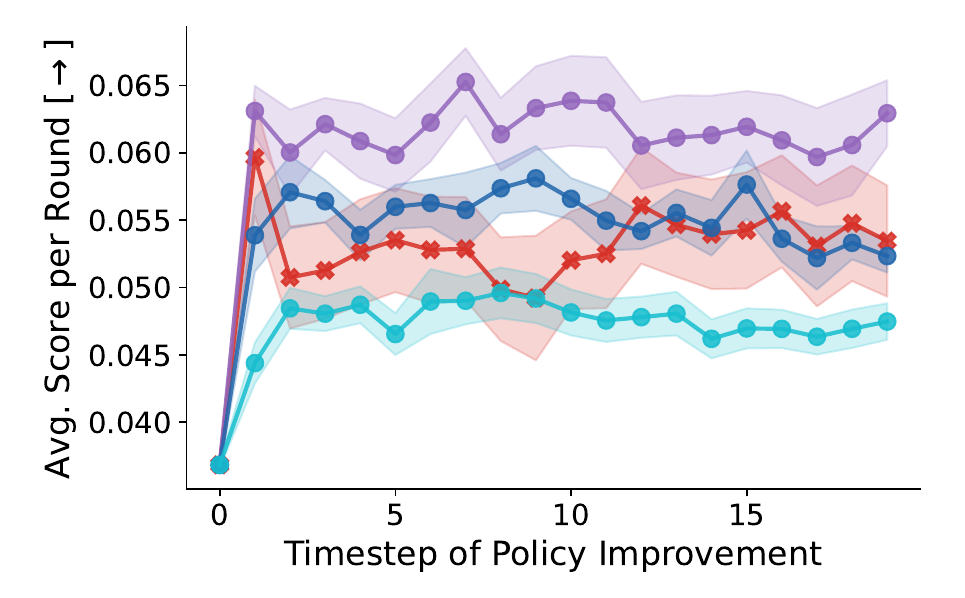}
    \end{subfigure}
    \hfill
    \begin{subfigure}{0.33\textwidth}
        \includegraphics[width=\textwidth]{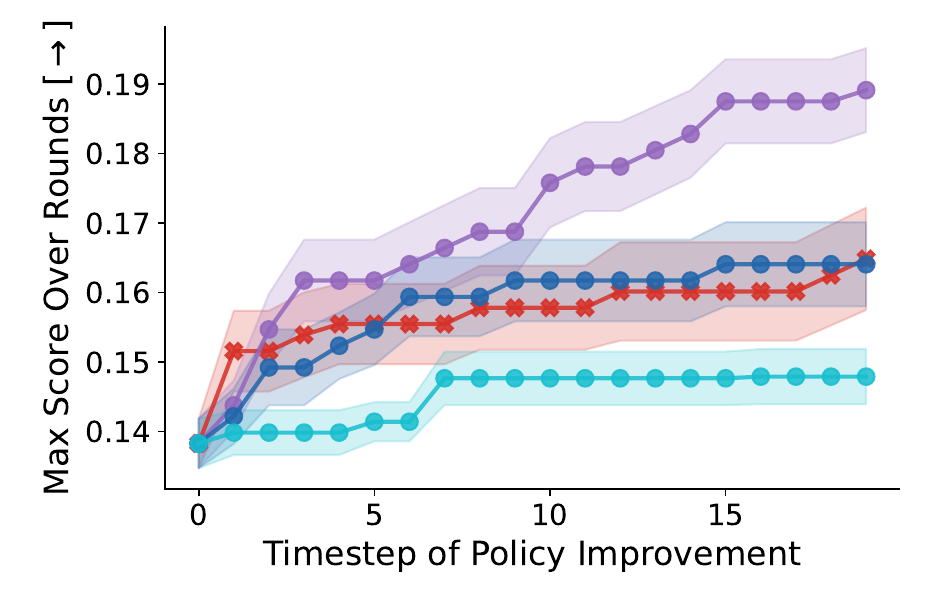}
    \end{subfigure}
    \caption{
        We apply CPC to constrained black-box sequence optimization scored with Ehrlich functions.
        \textbf{Left:} average infeasibility rate per round.
        \textbf{Center:} average objective value of sequences sampled from the policy.
        \textbf{Right:} best objective value found so far. CPC allows direct control over constraint violation risk via $\alpha$. Without CPC, the infeasibility risk quickly rises to over 80\% of policy samples.
        Moderate risk control ($\alpha > 0.6$) stabilizes optimization and improves overall performance by reducing wasted samples on infeasible actions.
    }
    \label{fig:llm_policy_control_results}
\end{figure*}

\subsection{Constrained Black-Box Optimization}
\label{subsec:cpc_LLM_methods}

\textbf{Setup:}
Now consider a more difficult setting, black-box sequence optimization. Here, our goal is to identify a sequence of tokens $a \in \mathcal{X}$ to maximally improve the objective value of a current solution $x \in \mathcal{X}$. We evaluate on Ehrlich functions \citep{chen2025generalists}, synthetic test functions designed to capture the geometric structure of biomolecular sequence optimization problems. We use \textbf{Ehr(32, 32)-4-4-4}: sequences of length 32 over a vocabulary of size 32, with 4 motifs of length 4.
We create an initial data package by running a genetic algorithm from a single feasible solution. We train $\pi_{0}$ by applying SFT to a Pythia 14M-parameter decoder-only transformer \citep{biderman2023pythia} on improving pairs from the GA history. In later rounds, we sample from the current policy, collect oracle feedback, and fine-tune via DPO. CPC constrains each improved policy $\pi_{t}$ before sampling. See Appendix \ref{app:sequence_opt} for details.

\textbf{Results:}
In Figure \ref{fig:llm_policy_control_results} we show that CPC effectively controls the risk of generating infeasible sequences while still improving the objective value over time.
Again we find that moderate risk control ($\alpha > 0.6$) can produce controlled policies that slightly \textit{outperform} their uncontrolled counterparts, likely by wasting fewer evaluations on infeasible actions.

\section{Discussion}

We set out to let an agent choose how far to deviate from a safe policy without the choice biasing the estimate used to justify it. The resolution exploits the fact that the only shift between the safe policy's data and any candidate's deployment is the agent's own: it does not know how the world will respond, but it knows exactly how each candidate would distribute its actions. Conformal policy control parameterizes that choice as a likelihood-ratio threshold and calibrates it directly on the safe policy's data, so that selection is the calibrated act rather than a step that follows it. Because the threshold bounds how far the candidate can deviate from the safe policy, the resulting guarantee applies to the chosen policy itself, not merely each candidate in isolation.

More broadly, CPC instantiates a prescriptive use of conformal methods: rather than characterizing uncertainty under a fixed distribution, it searches over distributions to find one that satisfies a finite-sample guarantee. This inversion applies wherever one selects a data-generating process subject to a risk constraint, from experimental design, to active data collection, to generative model deployment. We hope that the ideas in this paper encourage further development of conformal methods as tools for decision-making, beyond their traditional role in uncertainty quantification.

The guarantees we provide are marginal: risk is controlled on average over context and calibration data draws.
This is the natural guarantee for deployment decisions, where one asks whether a policy is safe to deploy across a population.
It is less informative for individual decisions, where one asks whether a particular action is safe for a particular context.
Stronger conditional guarantees are possible but require additional assumptions or more conservative bounds \citep{thomas2019preventing}.
Our guarantees also depend on the stability of the context distribution.
If the context distribution begins to shift then recalibration may be necessary, or some risk control gap may need to be tolerated \citep{barber2023}.
Monitoring methods based on conformal martingales offer a principled approach to detecting such shifts \citep{prinster2025watch}.
We have also assumed the policy likelihoods are computable in closed form. If these likelihoods are intractable, then density or density-ratio estimation \citep{sugiyama2012density,cranmer2020frontier} may be applicable.

A recurring theme in this work is that assumptions should reflect what the practitioner knows rather than what is theoretically convenient. CPC requires a safe policy, but any baseline whose risk the practitioner is willing to accept will do. 
In general it requires the agent's policies to be smooth, but searching via a likelihood-ratio bound enforces this condition.
It requires a risk tolerance, but as a direct declaration rather than an indirect penalty weight. 
Safety and exploration, in this view, need not run counter to each other. In the right balance they correct each other's deficiencies, avoiding complacency on the one hand and dissipation on the other. The balance must be set from what is actually known, not what is hoped. It can be enough, it turns out, to know what works, and to know what we want to do next.



\section*{Code Availability}

Code to reproduce the results is available at \url{https://github.com/samuelstanton/conformal-policy-control}.

\section*{Impact Statement}

The dominant paradigm for deploying machine learning systems might be characterized as ``train, deploy, and pray''. Collect data, train a model, tune hyperparameters until empirical performance seems acceptable, deploy, and hope that test-time behavior remains within tolerable bounds. When failures occur, the response is typically reactive, patching the system after harm has been observed. This paradigm persists not because practitioners prefer it, but because principled alternatives have been lacking.

This work contributes to a shift from safety-through-patching toward safety-by-design. Rather than discovering failure modes empirically and addressing them post-hoc, conformal policy control allows practitioners to specify acceptable risk levels before deployment and obtain guarantees that these levels will be respected. The hyperparameter tuning phase becomes about improving performance within safety constraints, rather than searching for a configuration that might satisfy unspecified feasibility requirements.

If such methods become widespread, the implications for where and how machine learning is deployed could be substantial. High-stakes domains that have resisted ML adoption due to liability concerns or regulatory barriers, such as clinical decision support, autonomous systems, and financial services, may become more accessible when developers can provide formal guarantees rather than empirical assurances. Such a change in framing would align ML deployment with the safety certification practices already standard in aviation, pharmaceuticals, and other regulated industries.

\section*{Acknowledgments} 

D.P., A.L., and S. Saria were partially funded by the Gordon and Betty Moore Foundation grant \#12128. The authors would like to thank Hannah Lawrence and Neha Verma for helpful feedback. 

\bibliography{references}
\bibliographystyle{icml2026}

\newpage
\appendix
\onecolumn

\begin{center}
{\Large\bfseries Supplementary Material}
\end{center}
\vspace{1em}

\section*{Table of Contents}
\newcommand{\appentry}[3]{\noindent\textbf{#1}\hspace{1em}\hyperref[#2]{#3}\dotfill\pageref{#2}\par\vspace{0.3em}}
\appentry{A}{app:extended_related_work}{Extended Related Work}
\appentry{B}{app:conservative_opt}{Supplemental Constrained Bayesian Optimization Experiments}
\appentry{C}{app:proofs}{Proofs}
\appentry{D}{app:detailed_methodology}{Implementation Details}
\appentry{E}{app:medical_qa}{Medical QA Experiment Details}
\appentry{F}{app:active_learning_details}{Tabular Active Learning Experiment Details}
\appentry{G}{app:sequence_opt}{Sequence Optimization Experiment Details}

\section{Extended Related Work}
\label{app:extended_related_work}

\subsection{Conservative Model-Based Optimization}

The tension between optimization performance and reliability has deep roots in decision theory, from early work on optimal decision-making under uncertainty \citep{raiffa1961applied} to stochastic methods for optimization under noise \citep{kushner1964new}. 
The \textit{optimizer's curse} \citep{smith2006optimizer} captures the key challenge: when decisions are selected by maximizing an estimated value function, the selected action is systematically biased toward outcomes whose value is overestimated. 
Crucially, this bias persists even when the value estimates are unbiased on average and arises from the act of selection itself. Among many uncertain alternatives, those with the largest positive estimation errors are most likely to be chosen. 
In the context of sequential decision making and policy optimization, this effect manifests when an optimized policy is trained to maximize expected reward under limited or noisy feedback, leading it to concentrate probability mass on actions whose apparent advantage is driven by estimation error rather than true performance. 
As a result, policies that appear optimal in expectation may perform poorly when deployed, particularly when they extrapolate beyond regions where reliable evidence is available.
In sequential settings, the effect compounds. Aggressive optimization shifts the data distribution away from regions where estimators are reliable, further degrading the estimates and increasing error for the optimizer to exploit.

Various communities have aimed at guiding models toward high-performing solutions while explicitly constraining distribution shift away from trusted data or policies.
In stochastic control and reinforcement learning, entropy-regularized control and KL-penalized objectives constrain optimized policies to stay close to a reference distribution \citep{todorov2009efficient, fox2015taming}. 
Related formulations arise in path integral control and KL-regularized path measures \citep{kappen2005path, theodorou2010generalized}. 
Trust-region policy optimization methods can be viewed as a practical instantiation that explicitly bound policy updates with local constraints \citep{schulman2015trust, schulman2017proximal}. 
In offline and batch reinforcement learning, conservative objectives penalize actions whose value estimates rely on out-of-distribution extrapolation \citep{kumar2020conservative, trabucco2021conservative}. 
More recently, the control-theoretic perspective has been extended to generative model fine-tuning. Fine-tuning diffusion and flow models to maximize the expected reward while retaining the pretrained model information via KL regularization has been framed as an entropy-regularized optimal control problem \citep{uehara2024fine, domingo2024adjoint}, and generalized to other utilities and divergence measures \citep{de2025flow}. 

Finally, in black-box optimization, trust-region Bayesian optimization restricts search to local regions where surrogate models are reliable \citep{eriksson2019scalable}, while safe Bayesian optimization enforces hard safety constraints to prevent evaluation in unsafe regions \citep{sui2015safe, berkenkamp2016safe, kirschner2019adaptive, berkenkamp2023bayesian}.
All of these methods operationalize safety by bounding how far the optimized distribution may deviate from a known baseline.

In practice, however, the strength of this constraint is governed by a scalar hyperparameter that has no intrinsic semantic interpretation. Whether expressed as a KL penalty weight, a likelihood ratio threshold, or a trust-region radius, this parameter controls the degree of aggressiveness in optimization but does not directly correspond to any declarative risk criterion, such as a target failure rate. As a result, practical deployment requires solving a secondary  \textit{hyperparameter selection problem}: one must select many candidate values of the constraint parameter and empirically evaluate their downstream risk in order to identify a satisfactory operating point. Critically, this calibration must be performed \textit{on-policy}, since the risk of interest depends on the distribution induced by each specific choice of the hyperparameter. This requirement is costly and introduces further feedback-loop dependencies \citep{bergstra2011algorithms, feurer2019hyperparameter}.

Yet the distribution shift these methods seek to control is self-inflicted. The agent chooses its own policy, and therefore knows the likelihood ratio between any candidate policy and the safe baseline in closed form. The distribution shift is not an external unknown but a consequence of the agent's own design. This means the hyperparameter selection problem can be solved with off-policy data from the safe policy alone, provided the calibration is done carefully. Conformal techniques provide exactly this kind of careful calibration.

    \begin{table*}[!t]
    \caption{Summary comparison of selected related work on conformal prediction for decision making under agent-induced shifts.}
    \centering
    {\small
    \begin{tabular}{ c | c | c | c  c  c }
    \toprule
    & & &  \multicolumn{3}{|c}{\textbf{3. Type of Risk Control}}  \\
      $\begin{matrix}
      \textbf{Method Group}\\
      \text{References}
      \end{matrix}$ & $\begin{matrix}
        \textbf{1. Agent-} \\
            \textbf{Induced} \\
           \textbf{Data Shift}
       \end{matrix}$
     & $\begin{matrix}
     \textbf{2. Action} \\
     \textbf{Space}
     \end{matrix}$ & $\begin{matrix}\textbf{Prediction Set} \\ \textbf{Miscoverage} \\ \mathbbm{1}\{Y_t\not\in \hat{C}_{\color{blue}\alpha}(X_t)\} \end{matrix}$ & $\begin{matrix}\lambda\textbf{-Parameterized} \\ \textbf{Losses} \\ L_t(A_t({\color{blue}\lambda}))\end{matrix}$ &
      $\begin{matrix}
          \textbf{Unparameterizable} \\
          \textbf{Losses} \\
          \textbf{($\beta$-Param. Policy)} \\
          L_t(A_t), \ A_t\sim p_t({\color{blue}\beta})
      \end{matrix}$
       \\
     \hline
     $\begin{matrix}\textbf{Conformal Prediction} \\
     \text{\citet{vovk2005algorithmic}}
     \\
     \text{\citet{vovk2018conformal}}
     \end{matrix}$ & \xmark & $\begin{matrix}
         \text{Small/} \\ \text{Finite}
     \end{matrix}$ & \cmark & \xmark & \xmark \\
     \hline
     $\begin{matrix}\textbf{Weighted Conformal Pred.} \\
     \text{\citet{tibshirani2019conformal}}
     \\
     \text{\citet{fannjiang2022conformal}} \\
     \text{\citet{stanton2023bayesian}} \\
     \text{\citet{nair2023randomization}} \\
     \text{\citet{prinster2024conformal}} \\
     \end{matrix}$  & $\begin{matrix}
         \text{Single-Round} \\

         \hline \\
         \textbf{\mygreen{Multi-Round}}
     \end{matrix}$ & $\begin{matrix}
         \text{Small/} \\ \text{Finite} \\ \hline  \text{Contin.} \\ \hline \text{Small/} \\ \text{Finite}
     \end{matrix}$ & \cmark & \xmark & \xmark \\
     \hline
     $\begin{matrix}\textbf{Conformal Risk Control} \\
     \text{\citet{angelopoulos2022conformal}}
     \\
     \text{\citet{lekeufack2024conformal}}
     \end{matrix}$ & $\begin{matrix}
      \\
         \text{Single-Round} \\
         \hline
         \text{Eventually safe}
     \end{matrix}$ & $\begin{matrix}
         \text{Small/} \\ \text{Finite}
     \end{matrix}$ & \cmark & \cmark & \xmark \\
     \hline
     $\begin{matrix} \textbf{Conformal Policy Control} \\
     \text{(Proposed)}
     \end{matrix}$ & \textbf{\mygreen{Multi-Round}} & $\begin{matrix}
         \text{Large/} \\ \boldsymbol{\infty}
     \end{matrix}$ & \cmark & \cmark & \cmark  \\
    \bottomrule
    \end{tabular}
    }
    \label{table:comparisons}
    \end{table*}

\subsection{Conformal Methods}

\textbf{Conformal prediction under distribution shift in high-dimensional spaces}

Conformal prediction, originally developed for exchangeable or i.i.d. data by \cite{vovk2005algorithmic}, has been extended to a variety of distribution shift settings. Closest to the current paper are \textit{weighted} conformal methods that \textit{proactively adapt} to distribution shift by re-weighting calibration data using knowledge of the change, though another line of work develops conformal-inspired methods for \textit{retroactively reacting} to adversarial shifts over time (e.g., \citet{Gibbs2021-ed, zaffran2022adaptive, bastani2022, Angelopoulos_pid}). Among weighted conformal methods, those most relevant to the present work are methods tailored for covariate shift \cite{tibshirani2019conformal, prinster2022jaws, yang2024doubly, jonkers2024conformal}, and especially its extensions in which the training and test data are dependent, such as one-step \cite{fannjiang2022conformal, prinster2023jaws} and multi-step \cite{nair2023randomization, prinster2024conformal} feedback covariate shift, as is the case in the policy optimization setting. Also relevant are the fixed-weight methods of \citet{barber2023}, which bound the coverage violation due to misspecified data-independent weights, and the unifying weighted conformal framework of \citet{barber2025unifying}.

When deviating from the assumption of exchangeable calibration and test data, any analysis of proactive conformal prediction methods invokes quantities characterizing the relationship between the calibration and test distributions, such as their density ratios or divergences.
Therefore, despite the advances in theory, deployment of conformal prediction methods in practice in distribution shift settings has been hamstrung by the practical difficulty of evaluating such quantities, particularly in high-dimensional spaces.
Consider the combinatorially large action spaces involved in high-impact applications such as biological sequence design.
Unless modeling the distribution using factorizations amenable to tractable computation (e.g., autoregressive generative models), it is generally computationally intractable to evaluate the density of distributions over sequence space, as exact computation of normalizing constants is infeasible.
As a workaround, some approaches have resorted to density ratio estimation \cite{stanton2023bayesian, Laghuvarapu2023, fannjiang2025, correia2025neurips}, a task that is also challenging and ill-posed in high dimensions \cite{rhodes2020}, and which nullifies validity guarantees due to estimation error.
In the present work, we circumvent the difficulty of evaluating density ratios between the calibration and test policies by making their density ratio itself the tunable control parameter, as follows. 
We first assume existence of an initial safe policy, $\pi_0$, that satisfies the safety constraint, as well as naively optimized policies at each time, $\pi_t$.
At each time, we define a family of ``constrained policies'' that interpolates between $\pi_0$ and $\pi_t$ by clipping their density ratio, $\pi_t(\cdot) / \pi_0(\cdot)$, at an upper bound, $\beta_t$ (Eq.~\eqref{eq:piecewise_constraint_def}).
By making this density ratio upper bound, $\beta_t$, the tunable control parameter, we have done away with the need to estimate the density ratio altogether.

\textbf{Conformal methods for decision-making in the offline setting}

Conformal methods, by which we mean conformal prediction techniques and their generalizations to settings beyond constructing instance-wise prediction sets with coverage guarantees, have also been developed for various decision-making settings.
\citet{vovk2018conformal} laid the groundwork for conformal decision-making, demonstrating how conformal prediction (specifically conformal predictive distributions \citep{vovk2017nonparametric}) can be used to make decisions that satisfy a notion of efficiency.
They noted, however, that they had ``limited [themselves] to analyzing a given batch of data, without attempting active learning, and this limitation has made it possible to develop a fairly systematic theory.''

Indeed, following \citet{vovk2018conformal}, subsequent work on conformal methods for decision making has largely also assumed an offline (i.e., non-active) setting, where training and calibration are entirely independent of test-time decisions: that is, where pre-deployment training and calibration cannot influence distributions at test time, and vice versa (without breaking statistical guarantees). For instance, see \citet{hullman2025conformal} for a recent overview. Approaches in this offline setting have included conformal prediction sets for individual treatment effect estimates \citep{lei2021conformal}, off-policy prediction \citep{taufiq2022conformal, zhang2023conformal}, extending conformal prediction sets to handle notions of risk beyond miscoverage \citep{bates2021distribution, angelopoulos2022conformal}, risk assessment \citep{prinster2022jaws, singh2023distribution, zhou2025conformalized}, developing connections with Bayesian decision theory \citep{hoff2023bayes, snell2025conformal}, and designing conformal sets for (or the filtering of) outputs of fixed/pre-trained generative models \citep{quach2023conformal, teneggi2023trust, mohri2024language, cherian2024large, overman2024aligning, jiang2025conformal}. Another line of work has developed conformal-inspired optimization procedures to simulate decision making during training \citep{colombo2020training, stutz2021learning, cortes2024utility, yeh2024end, yeh2025conformal}, but notably, these methods all require a further split of the data beyond that of split conformal \citep{papadopoulos2002inductive}, thus effectively side-stepping the ``optimizer's curse'' at the cost of reduced data efficiency.\footnote{That is, 
these ``conformal training'' works typically require at least three disjoint splits of the data to maintain statistical validity: a ``proper'' training set for learning, a validation set used for hyperparameter tuning (or simulating conformal calibration), and a separate fresh calibration set at test time for inference.
In contrast, the present paper's conformal policy control methods avoid such an additional data split by carefully reweighting the calibration data to adjust for data-dependent policy selection.}

While \citet{kiyani2025decision} theoretically demonstrated that prediction sets are particularly suitable for risk-averse decision-making and several works have empirically demonstrated the utility of prediction sets for human decision makers \citep{straitouri2023improving, cresswell2024conformal, zhang2024evaluating}, there nonetheless remains a gap between methodological progress and practical impact of conformal methods \citep{hullman2025conformal}. Arguably, this persistent gap is due to the fact that predictions (and consequently, prediction sets) are almost never the end goal in practice. Instead, the end goal in practice is often to control how prediction error affects the specific downstream decisions, or the policy, of interest.

\textbf{Conformal methods for decision-making in the active setting}

In the active setting, where an agent's actions may influence the observed data distribution, there are two main threads of literature that parallel corresponding methods for conformal prediction under distribution shift. The first thread uses ideas from weighted conformal to proactively adapt prediction sets to agent-induced shifts: \citet{fannjiang2022conformal} extended the weighted conformal methods of \citet{tibshirani2019conformal} to a one-step instance of  ``feedback covariate shift,'' where the input distribution of a single test batch can depend on the training data. 
Inspired by this work, \citet{stanton2023bayesian} incorporated weighted conformal methods into Bayesian optimization, aiming to close the decision-making loop by allowing conformal calibration and optimization to bidirectionally influence each other; however, they noted gaps in the theory that prevented the desired guarantees from being realized.  \citet{prinster2023jaws} extended \citet{fannjiang2022conformal}'s methods to cross-validation-style conformal calibration; \citet{nair2023randomization} introduced an extension to multiple rounds of dependent covariate shifts; and \citet{prinster2024conformal} extended weighted conformal methods to any data distribution, with multiround agent-induced covariate shifts as a main focus. However, these prior works differ from the present paper in several ways: they all operate \textit{descriptively} to adapt prediction sets for a fixed policy, whereas our methods function \textit{prescriptively}, using conformal calibration to \textit{find} a risk-controlling policy; in combinatorial action spaces, they are bottlenecked by challenges of density-ratio estimation and normalization; and they focus on (mis)coverage rather than more general notions of risk.

The second line of conformal methods in the active setting uses ideas from online adversarial conformal (e.g., \citet{Gibbs2021-ed}) to retroactively learn decision rules over time: \citet{feldman2023achieving} extended the online algorithm of \citet{Gibbs2021-ed} to more general loss functions (similarly as \citet{bates2021distribution, angelopoulos2022conformal} did in the offline setting), and \citet{lekeufack2024conformal} extended those ideas to a conformal decision theory framework, where the online learning update directly steers an agent's decisions at the next timestep. \citet{zhang2024bayesian} use ideas from online adversarial conformal for Bayesian optimization and \citet{zhao2024conformalized} for imitation learning. These works achieve bounds on the long-term average risk as the number of time steps grows indefinitely.
In robotics, conformal prediction has also been used to construct predictive sets of trajectories, which are used to make planning or control procedures safer \cite{Lindemann2023-aj, Luo2024-na, sun2023, wang2025neurips, lindemann2024formal}.

\textbf{Conformal selection and high-probability risk control}

Along a different vein, motivated by candidate selection problems that arise in drug discovery, conformal selection methods \cite{Jin2022, Jin2023-bw, Bai2024-bb, gui2025} merge ideas from conformal prediction and multiple testing to select designs from a pool of individual candidates, such that the false discovery rate, or expected proportion of selected designs whose label does not surpass a desired threshold, is controlled.
While these methods assume access to a pool of exchangeable (e.g., i.i.d.) candidates, in some applications---for example, biological sequence design---this is not a natural setting.
Instead, it is more relevant to reason about generating candidates from some distribution, which brings us to our problem of how to select a generative model that controls risk as desired.
\citet{fannjiang2025} tackle this problem as a multiple testing problem to arrive at high-probability guarantees of risk control. 
Such high-probability risk control guarantees are akin in spirit to prior works on risk controlling prediction sets \citep{bates2021distribution} and the Learn-then-Test framework \citep{angelopoulos2021learn}, which provide high-probability guarantees for monotonic and non-monotonic losses respectively. In contrast, we build on 
techniques from conformal risk control \cite{angelopoulos2022conformal} to achieve guarantees in expectation, which typically provides tighter risk control. Relative to conformal risk control, we also introduce and focus on the policy control setting, wherein the control parameter directly controls the agent's policy, to fully connect the decision-making loop, circumvent challenges of density-ratio estimation, and address the optimizer's curse head-on.

\subsection{Seldonian Algorithms}

Seldonian algorithms \citep{thomas2019preventing} provide a framework for constructing machine learning systems that satisfy user-specified behavioral constraints with high probability.
The approach builds on earlier work on high-confidence policy improvement \citep{thomas2015highconfidence}, which introduced the use of concentration inequalities to obtain probabilistic guarantees on policy performance.
Rather than requiring users to tune abstract regularization weights or penalty coefficients, Seldonian algorithms allow direct specification of constraints such as ``the probability of causing harm should be at most 5\%.''
The framework uses concentration inequalities to derive high-probability bounds on constraint satisfaction, returning a solution only when sufficient evidence exists that the constraints will be met.

The Seldonian framework has been extended in several directions.
\citet{metevier2019offline} adapted the approach to contextual bandits with fairness constraints, demonstrating how user-specified fairness definitions can be enforced with high-probability guarantees.
\citet{chandak2020towards} addressed non-stationary MDPs by incorporating time-series forecasting to anticipate distribution shift.
\citet{giguere2022fairness} extended the framework to handle demographic shift between training and deployment, providing guarantees that remain valid when the population composition changes.

Both Seldonian algorithms and conformal policy control share the goal of translating user-specified risk tolerances directly into algorithmic constraints, eliminating the need for trial-and-error hyperparameter tuning.
However, the two frameworks differ in the nature of their guarantees and the distinction between \textit{certification} and \textit{regulation}.
Seldonian algorithms provide \textit{conditional, high-probability} bounds: with probability at least $1 - \delta$, the deployed policy satisfies the specified constraints.
In contrast, conformal methods provide \textit{marginal, expected value} guarantees: on average, the expected loss under the deployed policy is bounded by the user-specified level $\alpha$.
Neither guarantee type strictly dominates the other. High-probability bounds are more conservative but offer protection against worst-case outcomes. Expected value bounds are tighter on average but permit occasional constraint violations.
The greatest difference between our work and Seldonian algorithms is our method can be used to safely deploy (control the expected risk of) a model that would not be safe otherwise through the rejection sampling regulation procedure. 
By contrast, a Seldonian algorithm can certify that the solution it returns satisfies its constraints with high probability, however it cannot regulate an existing solution trained by some third-party algorithm.

\clearpage

\section{Comparison to Conservative Optimization on Constrained Bayesian Optimization Task}
\label{app:conservative_opt}

Whereas our previous experiments compared to conformal baselines and other methods with distribution-free guarantees, in Figure \ref{fig:app_conservative_opt_TownsendFunc} we report an additional comparison to a standard conservative optimization baseline in a Constrained Bayesian Optimization (CBO) setting. Standard conservative optimization baselines have two key limitations relative to CPC: First, they typically assume a particular model class, whereas CPC does not; Second, they typically cannot translate a user’s declared risk tolerance, $\alpha$, into imperative instructions for the algorithm, such as a specific divergence penalty or constraint parameter that achieves risk control. In practice, these limitations mean that extensive hyperparameter tuning is often required for such standard methods to meet desired operating constraints. These experiments illustrate that CPC can circumvent this hyperparameter tuning challenge. Moreover, CPC can be viewed as complementing rather than competing with standard safe exploration methods, since any policy for exploration can be used as the “optimized policy” in CPC.


\textbf{Experiment Details:} Figure \ref{fig:app_conservative_opt_TownsendFunc} compares CPC-controlled CBO to the CBO baseline \citep{balandat2020botorch} on a synthetic constrained Bayesian optimization task with unknown binary feasibility constraints. We evaluate across temperature settings for the CBO’s probabilistic acquisiftion sampling to simulate an attempt at hyperparameter-tuning the CBO baseline. The data are 2-dimensional and continuous, and the unknown objective/reward function is the synthetic Townsend function from \citet{townsend2014constrained} with unknown binary feasibility constraints. Figure \ref{fig:app_Townsend_objective} illustrates the Townsend objective function with an overlay of the Gaussian distribution used as the safe policy in these experiments. The Gaussian safe policy has most of its density inside the objective function's feasible region but away from the true optimum to simulate a safe but suboptimal initialization, as is common in practice (in particular, the Gaussian is centered at (0, -1) and with a standard deviation of 12/13, or the range of the search space divided by 6.5). 
We use a Gaussian process from BoTorch \citep{balandat2020botorch} to model the objective function and a classifier to model the unknown constraint. The CBO method's acquisition is done with probabilistic sampling from temperature-scaling on the qLogExpectedImprovement function in BoTorch \citep{balandat2020botorch}.  A uniform random baseline is provided for reference (gray). Our implementation of the CBO baseline \citep{tian2024boundary} was based on the following BoTorch notebook: \url{https://botorch.org/docs/notebooks_community/clf_constrained_bo/}

\textbf{Results:} Across temperature settings of 2.0 (left column), 1.0 (middle column), and 0.1 (right column), the CPC-controlled CBO method controls the constraint violation rate well below the target level of alpha=0.4 (blue, top row). In contrast, the CBO baseline has much higher constraint violation rates above alpha=0.4; the baseline's violation rate  in early policy improvement rounds appears to slightly decrease with decreased sampling temperatures (red, top row), but not as a function of alpha, so further hyperparameter tuning would be required to find an acceptable temperature value (if any exists). Notably, the CPC+CBO method attains comparable or even improved reward over the CBO baseline without CPC across settings (bottom row), possibly due to wasting fewer evaluations on infeasible samples.

\begin{figure}
    \centering
    \includegraphics[width=0.5\linewidth]{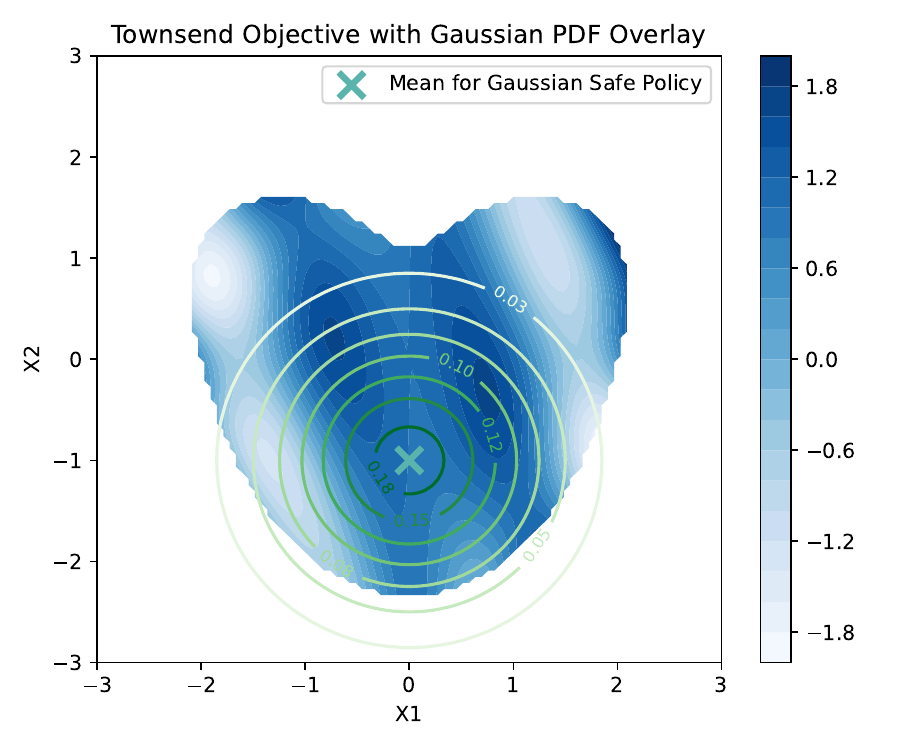}
    \caption{Townsend objective function and contours of Gaussian safe policy used in constrained Bayesian optimization experiments.}
    \label{fig:app_Townsend_objective}
\end{figure}

\begin{figure*}
    \begin{subfigure}{\textwidth}
        \includegraphics[width=\textwidth]{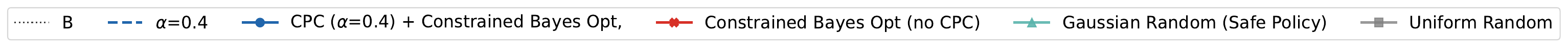}
    \end{subfigure}
    \\
    \begin{subfigure}{0.3\textwidth}
        \centering
        \textbf{Temperature = 2.0}
    \end{subfigure}
    \hfill
    \begin{subfigure}{0.3\textwidth}
        \centering
        \textbf{Temperature = 1.0}
    \end{subfigure}
    \hfill
    \begin{subfigure}{0.3\textwidth}
        \centering
        \textbf{Temperature = 0.1}
    \end{subfigure}
    \\
    \begin{subfigure}{0.3\textwidth}
        \includegraphics[width=\textwidth]{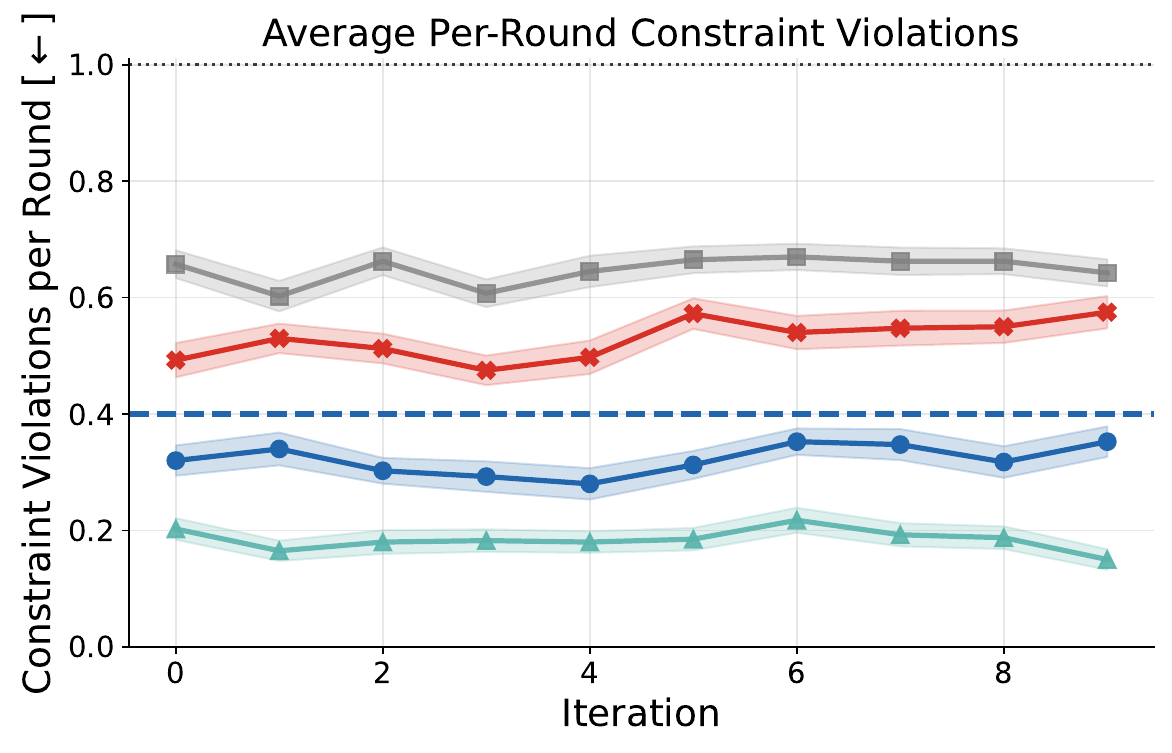}
    \end{subfigure}
    \hfill
    \begin{subfigure}{0.3\textwidth}
        \includegraphics[width=\textwidth]{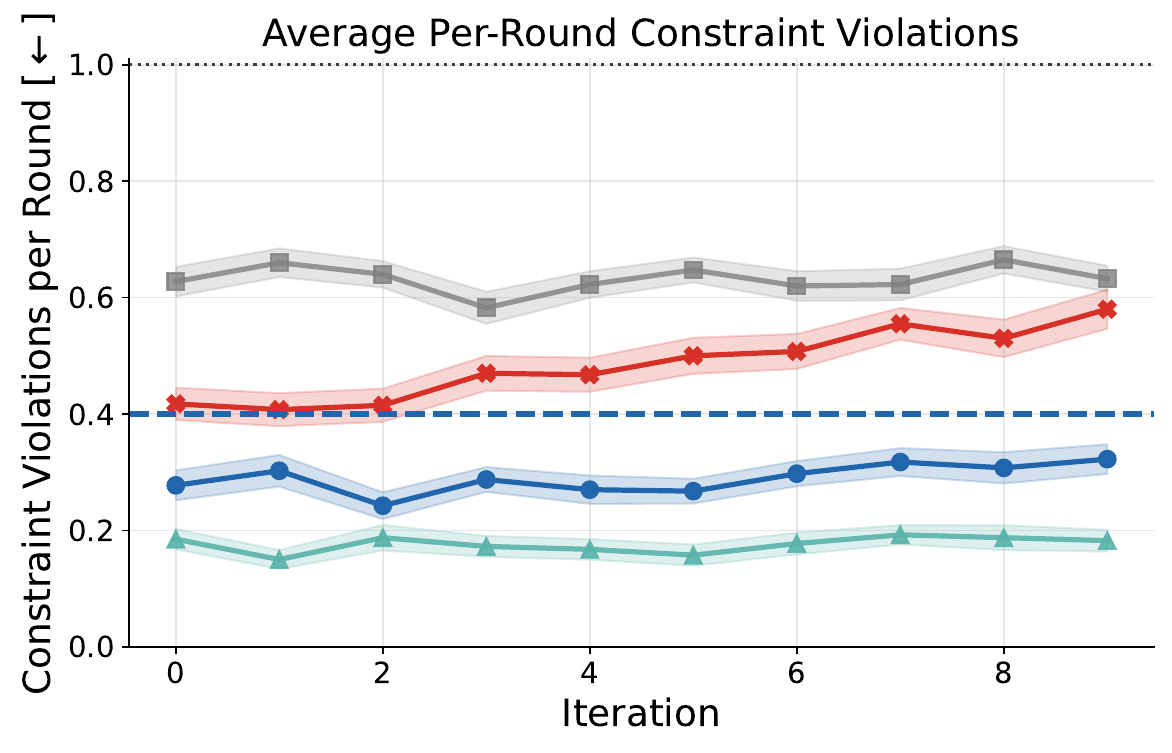}
    \end{subfigure}
    \hfill
    \begin{subfigure}{0.3\textwidth}
        \includegraphics[width=\textwidth]{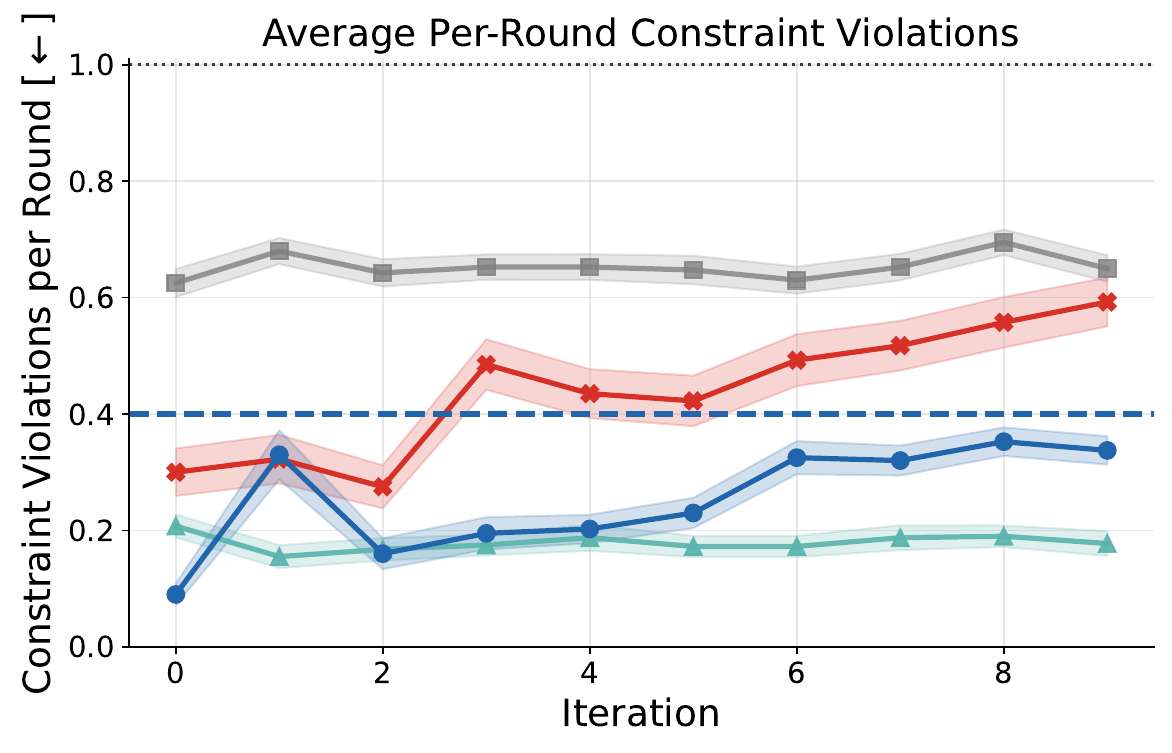}
    \end{subfigure}
    \\
    \begin{subfigure}{0.3\textwidth}
        \includegraphics[width=\textwidth]{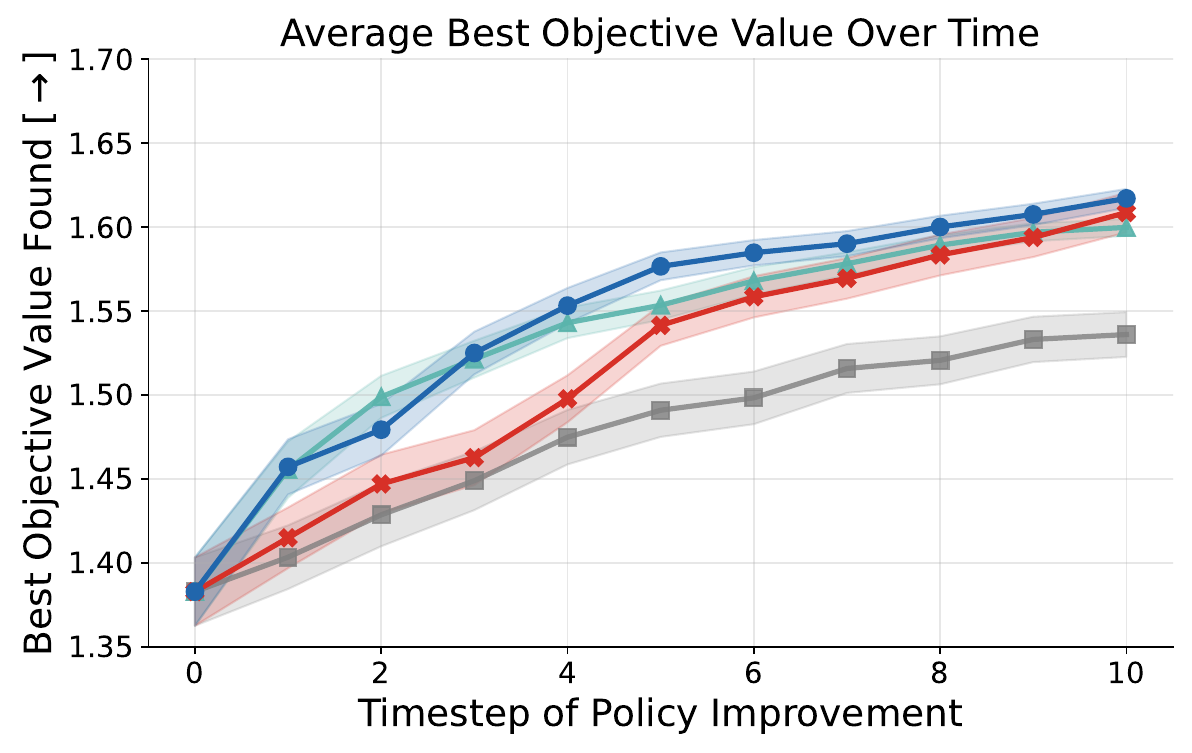}
    \end{subfigure}
    \hfill
    \begin{subfigure}{0.3\textwidth}
        \includegraphics[width=\textwidth]{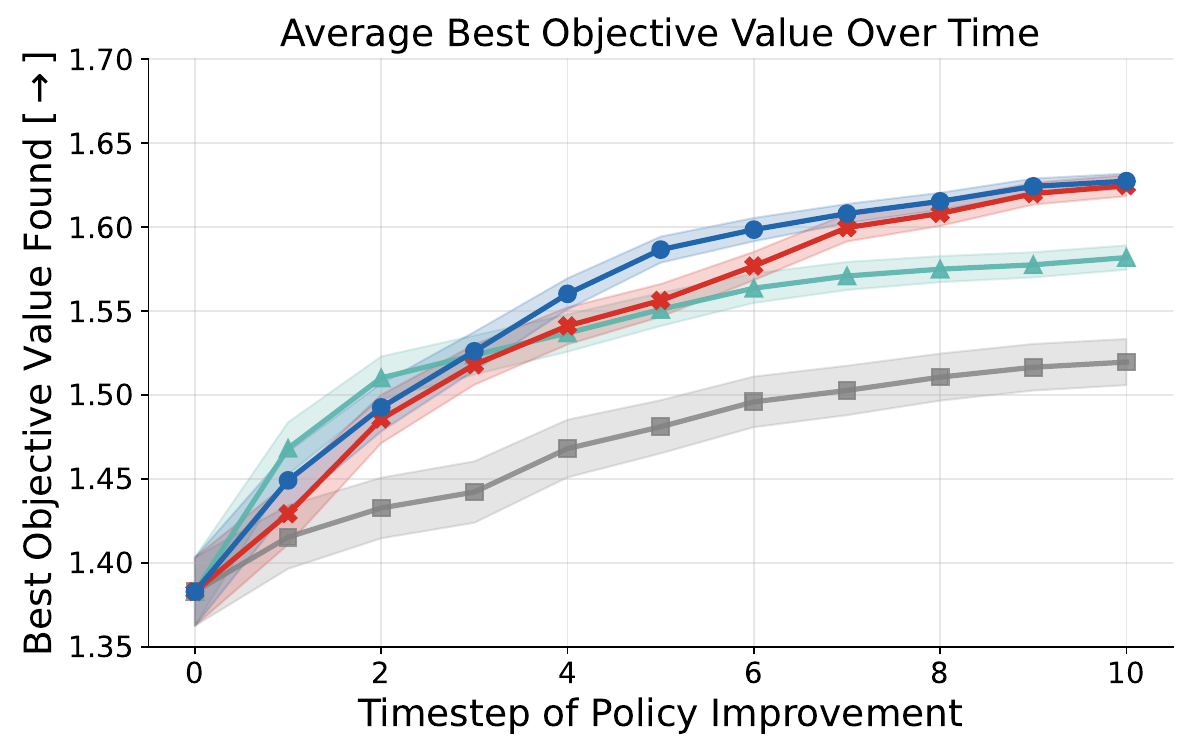}
    \end{subfigure}
    \hfill
    \begin{subfigure}{0.3\textwidth}
        \includegraphics[width=\textwidth]{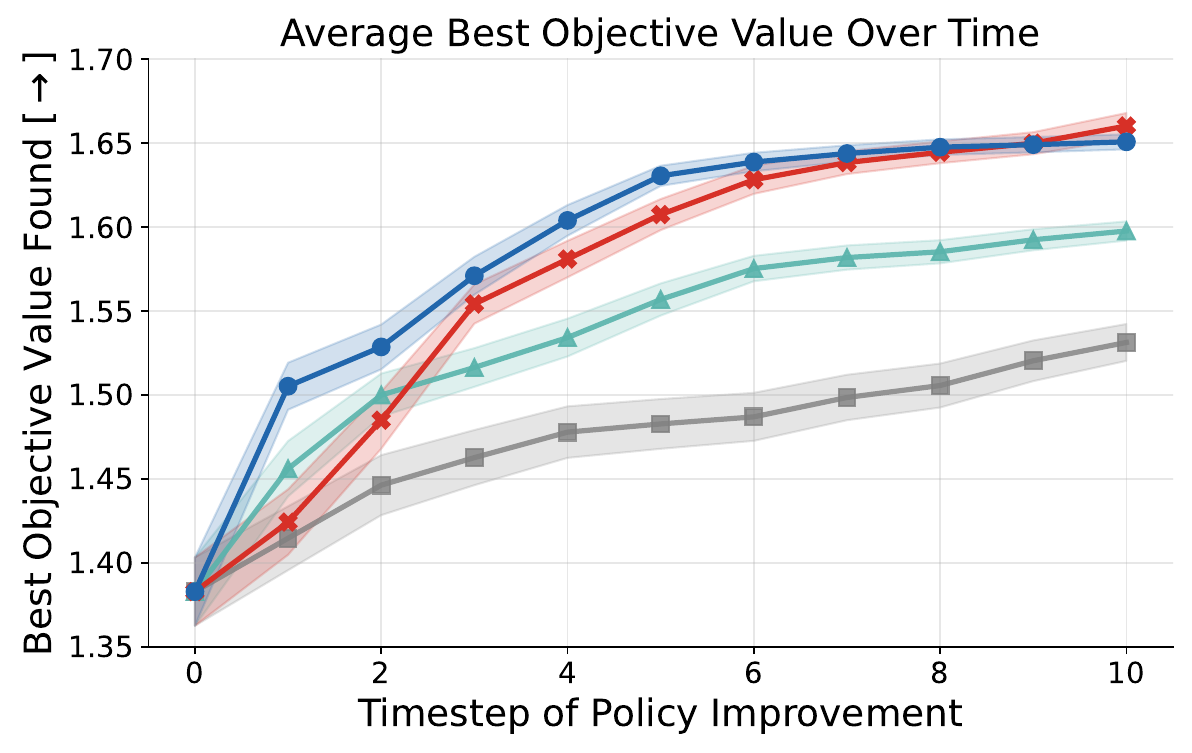}
    \end{subfigure}
    \caption{
    Empirical evaluation of a recent Constrained Bayesian Optimization (CBO) method \citep{tian2024boundary} with CPC (blue) and without CPC (red) over various acquisition sampling temperature settings. Results: Across temperature settings of 2.0 (left column), 1.0 (middle column), and 0.1 (right column), the CPC-controlled CBO method controls the constraint violation rate below the target level of alpha=0.4 (blue, top row). In contrast, the standard CBO baseline has much higher constraint violation rates above alpha=0.4; the baseline's violation in early policy improvement rounds appears to slightly decrease with reduced sampling temperatures (red, top row), but not as a function of alpha, so hyperparameter tuning would be required to find an acceptable temperature value (if one exists). The CPC+CBO method attains comparable or slightly improved reward over the CBO baseline without CPC across settings (bottom row), possibly due to wasting fewer evaluations on infeasible samples. 
    }
    \label{fig:app_conservative_opt_TownsendFunc}
\end{figure*}

\clearpage

\section{Proofs}
\label{app:proofs}

\subsection{Proofs for Generalized Conformal Risk Control (Parameterized Losses, with Exchangeable Data)}
\label{app:gcrc_proof_nonmonotonic_1dim}

\subsubsection{Notation and Definitions}

\begin{itemize}
    \item \textbf{Data (sequences of random variables and observed values):} For all $i\in \mathbb{N}$, denote the random variable for the $i$-th data sample as $Z_i:=(X_i, Y_i)\in \mathcal{X}\times\mathcal{Y}=\mathcal{Z}$, and denote observed value of that random variable in lower case as $z_i:=(x_i, y_i)\in \mathcal{X}\times\mathcal{Y}=\mathcal{Z}$. Similarly, denote a sequence of random variables as $Z_{1:m}:=(Z_1, ..., Z_m)$ and a sequence of observed data values as $z_{1:m}:=(z_1, ..., z_m)$ for any $m \in \mathbb{N}$. To concisely denote indices, let $[m]:=\{1, ..., m\}$.
    \item \textbf{Hyperparameter space:} Let $\Lambda \subseteq \mathbb{R}$ be the space of hyperparameters. So, $\lambda\in \Lambda$ denotes a particular hyperparameter in this space.
    \item \textbf{Loss functions:} For some real-valued function $g :\mathcal{Z}\times \Lambda \rightarrow \mathbb{R}$, and for all $\lambda\in\Lambda$ and all $i\in \mathbb{N}$, let $L_i(\lambda):= g(Z_i,\lambda)$ denote the loss function determined by a random variable $Z_i$ and parameter $\lambda\in \Lambda$, and let $\ell_i(\lambda):= g(z_i,\lambda)$ denote the loss function determined by the observed value $z_i$ and parameter $\lambda\in \Lambda$. To emphasize when no particular $\lambda$ has yet been given as an argument, we will use $L_i(\cdot):=g(Z_i, \cdot)$ to denote the loss function with randomness over $Z_i$ not yet determined, and we will use $\ell_i(\cdot):= g(z_i, \cdot)$ to denote the loss function for observed value $z_i$. 
    \item \textbf{Bags:} For any sequence $v_{1:m}:=(v_1, ..., v_m)$, let $\lbag v_{1:m} \rbag$ denote the \textit{bag} (or multiset) containing the values of that sequence $v_{1:m}$ (but importantly, $\lbag v_{1:m} \rbag$ does \textit{not} contain the information about the order of $v_{1:m}$). For example, $\lbag (1, 7, 42, 7) \rbag$ conveys that there is one 1, two 7s, and one 42 in the bag, but it does \textit{not} convey the order that these appeared in the sequence $(1, 7, 42, 7)$. In particular, with this notation, for the random variables $Z_{1:m}$, let $\lbag Z_{1:m} \rbag$ denote the bag containing those random variables; for observed values $z_{1:m}$, let $\lbag z_{1:m} \rbag$ denote the bag containing those observed values; and thus $\lbag Z_{1:m} \rbag = \lbag z_{1:m} \rbag$ denotes the event of observing $\lbag z_{1:m} \rbag$, in other words, that each $Z_i$ takes a different value in $\lbag z_{1:m} \rbag$, but it is not yet known \textit{which} value (for all $i\in [m]$).
    \item \textbf{Distributions:} Let $F_{Z_{1:m}}:=F_{Z_1, ..., Z_m}$ denote the joint distribution function for $Z_1, ..., Z_m$, and let $F_{\lbag Z_{1:m}\rbag}:=F_{\lbag Z_1, ..., Z_m\rbag}$ denote the (pushforward) distribution induced by the mapping $(z_1, ..., z_m)\rightarrow \lbag z_{1:m} \rbag$. 
\end{itemize}

\begin{definition}[$\epsilon$-replace-one stability]
    \label{appeq:def_replace_one_stability}
    Let $z_{1:n+1}$ denote the observed values, so that $\lbag Z_{1:n+1}\rbag = \lbag z_{1:n+1}\rbag$. We say a function  $h$ is $\epsilon$-replace-one stable if for all $i, j\in [n+1]$, 
    \begin{align*}
        \mathbb{E}\Big[\big|h(\lbag Z_{1:n+1}\rbag\backslash \{z_i\})-h(\lbag Z_{1:n+1}\rbag\backslash \{Z_j\})\big|\Big] \leq \epsilon.
    \end{align*}
\end{definition}

\begin{remark}[Interpretation of $\epsilon$-replace-one stability]
\label{rem:ro_stability_interpret}
    The mixed lower and upper cases in Definition \ref{appeq:def_replace_one_stability} are intentional. The interpretation is clarified by expanding the expectation via the law of total expectation over the event $\lbag Z_{1:n+1}\rbag = \lbag z_{1:n+1}\rbag$:
    \begin{align*}
        \mathbb{E}\Big[&\big|h(\lbag Z_{1:n+1}\rbag\backslash \{z_i\})-h(\lbag Z_{1:n+1}\rbag\backslash \{Z_j\})\big|\Big] \\ = 
        &\mathbb{E}\Big[\mathbb{E}\Big[\big|h(\lbag Z_{1:n+1}\rbag\backslash \{z_i\})-h(\lbag Z_{1:n+1}\rbag\backslash \{Z_j\})\big|\; \mid \; \lbag Z_{1:n+1}\rbag = \lbag z_{1:n+1}\rbag\Big]\Big],
    \end{align*}
    where in the inner expectation both $z_i$ and $Z_j$ are defined conditional on the event $\lbag Z_{1:n+1}\rbag = \lbag z_{1:n+1}\rbag$. For example, let $n+1=4$, assume the bag is $\lbag z_{1:4}\rbag = \lbag (1, 7, 42, 7) \rbag$, consider the selected value $z_i = 42$, and let $Z_j = Z_4$ denote the last random variable. Conditional on the bag of unordered values $\lbag (1, 7, 42, 7) \rbag$, one can deterministically select the value $z_i = 42$ from the bag, but it is not yet known which value is attained by the random variable, $Z_j = Z_4$ (i.e., it could take any value in $\lbag (1, 7, 42, 7) \rbag$). We will specifically apply this assumption with $i=j=n+1$, where the inner expectation is over the randomness of $Z_{n+1}$ conditional on the bag. The condition recovers the standard average replace-one stability assumption (e.g., see Chapter 13 of \citet{shalev2014understanding}).
\end{remark}

\clearpage

\subsubsection{Proof for Main Generalized Conformal Risk Control Result}

\begin{theorem}[Restatement of Theorem \ref{thm:gcrc_nonmonotonic}]
    Assume exchangeable $L_i(\lambda)$. Define  $\lambda_{\max}:=\sup\Lambda\in \Lambda$ and assume 
    \begin{align*}
        L_i(\lambda_{\max})\leq \alpha, \qquad \sup_{\lambda}L_i(\lambda)\leq B < \infty \quad \text{almost surely}.
    \end{align*}
    If the $L_i(\lambda)$ are $K$-Lipschitz in $\lambda$ and $\hat\lambda_+$ is $\epsilon$-replace-one stable, then
    \begin{align*}
        \mathbb{E}\big[L_{n+1}(\hat\lambda_+)\big] \leq \alpha + K\epsilon.
    \end{align*}
\end{theorem}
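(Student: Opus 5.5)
The plan is to condition on the bag of loss functions $\lbag L_{1:n+1}\rbag = \lbag \ell_{1:n+1}\rbag$. By exchangeability, conditional on this bag $L_{n+1}$ is uniformly distributed over the bag's elements, so for any $\lambda$ that is a measurable function of the bag alone we get
\begin{align*}
\mathbb{E}\big[L_{n+1}(\lambda)\mid \lbag L_{1:n+1}\rbag\big] = \frac{1}{n+1}\sum_{i=1}^{n+1}\ell_i(\lambda).
\end{align*}
The oracle $\lambda_+^* = \widehat{\lambda}_+(\lbag L_{1:n+1}\rbag,\alpha)$ from Eq. \eqref{eq:methods_lambda_oracle_def} is bag-measurable. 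By its definition, every $\lambda\ge\lambda_+^*$ satisfies $\frac{1}{n+1}\sum_i \ell_i(\lambda)\le\alpha$. When the defining set is empty we have $\lambda_+^*=\lambda_{\max}$, and the bound then follows from $L_i(\lambda_{\max})\le\alpha$.

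One technical point arises here. The infimum in Eq. \eqref{eq:methods_lambda_hat_family_def} must itself satisfy the constraint. I will deduce this from continuity, which the $K$-Lipschitz assumption provides: the empirical risk is continuous, so the admissible set is closed upward and contains its infimum. The first conclusion is therefore that for any bag-measurable $\tilde\lambda$ with $\tilde\lambda\ge\lambda_+^*$ we have $\mathbb{E}[L_{n+1}(\tilde\lambda)\mid\text{bag}]\le\alpha$.

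Next I write the empirical procedure as a function of a bag of $n$ losses: $h(S):=\widehat{\lambda}_+(S\cup\lbag B\rbag,\alpha)$. Then $\hat\lambda_+ = h(\lbag L_{1:n+1}\rbag\backslash\{L_{n+1}\})$, which is not bag-measurable. The natural bag-measurable surrogate is $\tilde\lambda := h(\lbag \ell_{1:n+1}\rbag\backslash\{\ell_i\})$ for a fixed index $i$, for instance $i=n+1$ in the sense of Remark \ref{rem:ro_stability_interpret}; here the removed value is chosen deterministically from the bag.

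I must show $\tilde\lambda\ge\lambda_+^*$, and in fact $h(\text{bag}\backslash\{\text{any element}\})\ge\lambda_+^*$. Replacing an element $\ell_k$ by the constant function $B\ge\ell_k(\cdot)$ raises the empirical risk pointwise in $\lambda$. Every $\lambda_0$ admissible for the modified bag, meaning its risk stays $\le\alpha$ for all $\lambda\ge\lambda_0$, is therefore admissible for the original bag, and so the infimum can only increase. The $\lambda_{\max}$ default case is handled separately, since $\lambda_{\max}=\sup\Lambda$ dominates everything.

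Finally I combine these steps using the Lipschitz property:
\begin{align*}
\mathbb{E}[L_{n+1}(\hat\lambda_+)] \le \mathbb{E}[L_{n+1}(\tilde\lambda)] + K\,\mathbb{E}\big|\hat\lambda_+-\tilde\lambda\big| \le \alpha + K\epsilon.
\end{align*}
The first term is at most $\alpha$ by the conditional argument above followed by the tower property. The second term is exactly the $\epsilon$-replace-one stability quantity with $j=n+1$, since $\hat\lambda_+ = h(\lbag Z_{1:n+1}\rbag\backslash\{Z_{n+1}\})$ and $\tilde\lambda = h(\lbag Z_{1:n+1}\rbag\backslash\{z_i\})$.

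I expect the main obstacle to be making the surrogate step rigorous. Specifically, $\tilde\lambda$ must be genuinely a function of the bag: a deterministic selection rule for $z_i$ is needed when there are ties. The interchange of conditioning with the stability expectation must also be justified. The monotonicity-under-replacement-by-$B$ argument is straightforward by comparison.
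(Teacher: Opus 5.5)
Your proposal is correct and follows the same route as the paper's proof: the oracle controls the bag-conditional risk by exchangeability (Step 2); replacing one loss by $B$ gives $h(\text{bag}\setminus\{\ell_k\})\ge\lambda_+^*$ (Step 3); and a Lipschitz comparison against the bag-measurable leave-$z_{n+1}$-out threshold is paid for by replace-one stability with $i=j=n+1$ (Step 4). The differences are cosmetic: you apply the Lipschitz bound pathwise to $L_{n+1}$ before conditioning, whereas the paper first writes the empirical bag-conditional risk as $\frac{1}{n+1}\sum_i \ell_i\big(h(\text{bag}\setminus\{\ell_i\})\big)$ and then applies Lipschitz term by term. You also derive attainment of the infimum from the stated Lipschitz hypothesis, which is slightly more self-contained than the paper's appeal to right-continuity, since right-continuity is not among the theorem's assumptions.
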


\begin{proof} \qquad

\textbf{Step 0: Law of total expectation over draw of bag.}

Expanding $\mathbb{E}[L_{n+1}(\hat{\lambda}_{+})]$ via the law of total expectation over the event $\lbag Z_{1:n+1} \rbag = \lbag z_{1:n+1} \rbag$, we have
\begin{align}\label{eq:gcrc_exchangeable_step_0}
    \mathbb{E}\big[L_{n+1}(\hat{\lambda}_{+})\big] = \mathbb{E}_{F_{\lbag Z_{1:n+1}\rbag}}\Big[\mathbb{E}\Big[L_{n+1}(\hat{\lambda}_{+}) \mid \lbag Z_{1:n+1} \rbag = \lbag z_{1:n+1} \rbag\Big]\Big].
\end{align}
We can thus see that, to prove the desired result, it is sufficient to upper bound $\mathbb{E}\big[L_{n+1}(\hat{\lambda}_{+}) \mid \lbag Z_{1:n+1} \rbag = \lbag z_{1:n+1} \rbag\big]$ almost surely given any draw of the bag, $\lbag Z_{1:n+1}\rbag\sim F_{\lbag Z_{1:n+1}\rbag}$. 

\vspace{1cm}

\textbf{Step 1: Deterministic inequalities on bag-conditional risks.}

Define the following family of thresholds:
\begin{align*}
    \hat{\lambda}_+(\lbag L_{1:m} \rbag, \gamma) = \inf \Big\{\lambda_0 \in \Lambda \quad : \quad \forall \qquad \text{fixed } \lambda \geq \lambda_0, \qquad \sum_{i=1}^{m}\frac{l_{i}(\lambda)}{m} \leq \gamma \Big\}.
\end{align*}
When the set is empty, define $\hat{\lambda}_+(\lbag L_{1:m} \rbag, \gamma)=\lambda_{\max}$.

By the right-continuity of $L_i(\cdot)$ (for all $i\in \mathbb{N}$), the sum $\sum_{i=1}^m\frac{L_m(\lambda)}{m}$ is right-continuous for any $Z_{1:m}$, which gives us the following deterministic inequality for any $\gamma$:
\begin{align}
    \label{eq:gcrc_exchangeable_step_1}
    \sum_{i=1}^m\frac{L_i(\lambda)}{m} \leq \gamma \qquad \forall \qquad \text{fixed } \lambda \geq \hat{\lambda}_{+}(\lbag L_{1:m}\rbag; \gamma).
\end{align}

\vspace{1cm}

\textbf{Step 2: Showing an oracle procedure deterministically controls the bag-conditional risk.}

Using the family of thresholds we defined above and applying it to the bag of loss functions for both the calibration set and the test point, define the oracle threshold as
\begin{align}\label{eq:oracle_lambda_def}
    \lambda_+^* := \hat{\lambda}_+(\lbag L_{1:n+1} \rbag, \alpha).
\end{align}

Now, consider the quantity $\mathbb{E}\big[L_{n+1}(\lambda_{+}^*) \mid \lbag Z_{1:n+1} \rbag = \lbag z_{1:n+1} \rbag\big]$, which we can view as the bag-conditional oracle risk. That is, we can interpret this quantity as the expected value that the random variable $L_{n+1}(\lambda_{+}^*)$ will take on, given the bag of data $\lbag z_{1:n+1} \rbag$. Note that conditioning on the bag of data, $\lbag z_{1:n+1} \rbag$, implies the event of observing the bag of loss functions, $\lbag l_{1:n+1} \rbag$,\footnote{That is, implies the event $\lbag L_{1}(\cdot), ..., L_{n+1}(\cdot)\rbag = \lbag l_1(\cdot), ..., l_{n+1}(\cdot) \rbag$, because the loss functions are defined as $l_i(\cdot):=g(z_i, \cdot)$ for all $i\in \mathbb{N}$.} which thereby determines the value of $\lambda_{+}^*$, since $\lambda_{+}^*$ is a symmetric function of $\lbag z_{1:n+1}\rbag$. 

This fact, that $\lambda_{+}^*$ is a symmetric function of (and thus determined by) the bag of data, implies that $L_{n+1}(\lambda_{+}^*)$ will take a value in $\lbag l_1(\lambda_{+}^*), ..., l_{n+1}(\lambda_{+}^*) \rbag$ and the specific value $L_{n+1}(\lambda_{+}^*)$ will take on is determined by the draw of $Z_{n+1}$ from $\lbag z_{1:n+1} \rbag$; that is, given $\lbag z_{1:n+1} \rbag$, we have the equivalency $L_{n+1}(\lambda_{+}^*) = l_i(\lambda_{+}^*)  \iff Z_{n+1} = z_i$ due to $g(\cdot,\lambda_{+}^*)$ inducing a bijection from the discrete bag of datapoints $\lbag z_1, ..., z_{n+1} \rbag$ to the discrete bag of loss values $\lbag l_{1}(\lambda_{+}^*), ..., l_{n+1}(\lambda_{+}^*) \rbag$.

First using this observation and the definition of conditional expectation, and secondly using the definition of conditional probability, we have
\begin{align*}
    \mathbb{E}\big[L_{n+1}(\lambda_{+}^*) \mid \lbag Z_{1:n+1} \rbag = \lbag z_{1:n+1} \rbag\big] & = \sum_{i=1}^{n+1}l_i(\lambda_{+}^*)\cdot \mathbb{P}\big(Z_{n+1} = z_{i} \mid \lbag Z_{1:n+1} \rbag = \lbag z_{1:n+1} \rbag \big) \\
    & = \sum_{i=1}^{n+1}l_i(\lambda_{+}^*)\cdot \frac{\mathbb{P}\big(Z_{n+1}=z_{i}, \ \lbag Z_{1:n+1} \rbag = \lbag z_{1:n+1} \rbag \big)}{\mathbb{P}\big(\lbag Z_{1:n+1} \rbag = \lbag z_{1:n+1} \rbag\big)}.
\end{align*}

Applying the law of total probability over permutations ($\sigma:[n+1]\rightarrow[n+1]$) to the probabilities in the numerator and denominator of the RHS, and then simplifying using exchangeability, we have
\begin{align*}
    \mathbb{E}\big[L_{n+1}(\lambda_{+}^*) \mid \lbag Z_{1:n+1} \rbag =  \lbag z_{1:n+1} \rbag\big]
    & = \sum_{i=1}^{n+1}l_i(\lambda_{+}^*)\cdot \frac{\sum_{\sigma:\sigma(n+1)=i}f(z_{\sigma(1)}, ..., z_{\sigma(n+1)})}{\sum_{\sigma}f(z_{\sigma(1)}, ..., z_{\sigma(n+1)})} \\
    & = \sum_{i=1}^{n+1}l_i(\lambda_{+}^*)\cdot \frac{\sum_{\sigma:\sigma(n+1)=i}f(z_1, ..., z_{n+1})}{\sum_{\sigma}f(z_1, ..., z_{n+1})} \\
    & = \sum_{i=1}^{n+1}l_i(\lambda_{+}^*)\cdot \frac{n!}{(n+1)!} \\
    & = \sum_{i=1}^{n+1}\frac{l_i(\lambda_{+}^*)}{n+1}
\end{align*}

Now, using Eq. \eqref{eq:gcrc_exchangeable_step_1}, 
\begin{align}\label{eq:gcrc_exchangeable_step_2}
    &\sum_{i=1}^{n+1}\frac{l_{i}(\lambda)}{n+1}\leq \alpha \qquad \forall \qquad \text{fixed } \lambda \geq \lambda_{+}^*,
\end{align}
where here by ``fixed'' we mean that $\lambda$ is constant conditional on the bag of data, $\lbag Z_{1:n+1} \rbag = \lbag z_{1:n+1} \rbag$.

\vspace{1cm}

\textbf{Step 3: Relating the empirically-selected hyperparameter to the oracle parameter}

In Step 2, we have shown that an oracle procedure deterministically controls the risk, conditional on the event $\lbag Z_{1:n+1} \rbag = \lbag z_{1:n+1} \rbag$. However, this oracle procedure cannot be implemented in practice due to requiring knowledge of the test point loss, $L_{n+1}$; that is, recall we defined $\lambda_+^* := \hat{\lambda}_+(\lbag L_{1:n+1} \rbag, \alpha)$. In this last step of the proof we would now like to prove an almost-sure relationship between the oracle parameter, $\lambda_+^*$, and the empirically-selected parameter, $\hat{\lambda}_+:=\hat{\lambda}_+(\lbag L_{1:n}, B \rbag, \alpha)$, to conclude valid risk control for the empirical method; that is, we would like to show that $\lambda_+^* \leq \hat{\lambda}_+$ almost surely.

To begin, recall that in Step 2, conditioning on the event $\lbag Z_{1:n+1} \rbag = \lbag z_{1:n+1} \rbag$ implied that $\hat{\lambda}_+(\lbag L_{1:n+1} \rbag, \alpha) = \hat{\lambda}_+(\lbag l_{1:n+1} \rbag, \alpha)$. For the empirical parameter, however, conditioning on $\lbag Z_{1:n+1} \rbag = \lbag z_{1:n+1} \rbag$ does \textit{not} imply what may seem analogous: that is, it could be that $\hat{\lambda}_+(\lbag L_{1:n}, B \rbag, \alpha) \neq \hat{\lambda}_+(\lbag l_{1:n}, B \rbag, \alpha)$. This is because the event $\lbag Z_{1:n+1} \rbag = \lbag z_{1:n+1} \rbag$ only allows us to conclude that the $n$ random variables in $\lbag L_{1:n}\rbag$ take $n$ distinct values from the bag of $n+1$ observed losses, $\lbag l_{1:n+1}\rbag$, but we do not yet know \textit{which} observed losses. That is, what we \textit{can} conclude is, given $\lbag Z_{1:n+1} \rbag = \lbag z_{1:n+1} \rbag$, that
\begin{align*}
    \lbag Z_{1:n+1} \rbag = \lbag z_{1:n+1} \rbag \implies \lbag L_{1:n} \rbag \in \Big\{\lbag l_{1:n+1\backslash i} \rbag \Big\}_{i\in [n+1]},
\end{align*}
where $\lbag l_{1:n+1\backslash i} \rbag := \lbag l_{1:n+1} \rbag \backslash \{l_i\}$. Accordingly, conditioning on the event $\lbag Z_{1:n+1} \rbag = \lbag z_{1:n+1} \rbag$ implies the following equivalence:
\begin{align}\label{eq:oracle_empirical_inequality_equivalences}
    \lambda_+^* \leq \hat\lambda_+(\lbag L_{1:n}, B \rbag, \alpha) \ \text{almost surely} \ & \iff \lambda_+^* \leq \hat\lambda_+(\lbag l_{1:n+1\backslash i}, B \rbag, \alpha) \quad \forall \quad i \in [n+1].
\end{align}
So, we will focus on proving the latter.

Recall that we have assumed that $l$ is bounded above by $B$ (i.e., $\sup_{\lambda}l_i(\lambda)\leq B < \infty$), for any $\lambda\in \Lambda$, almost surely we have
\begin{align}\label{eq:step_3_conservative_adjustment}
    \sum_{j\in[n+1]}\frac{l_j(\lambda)}{n+1}
    \quad \leq \quad  \frac{B}{n+1} + \sum_{j\in[n+1]\backslash i}\frac{l_j(\lambda)}{n+1} \qquad \forall \qquad i \in [n+1],
\end{align}

where the right-hand side of the above inequalities can be interpreted as a conservative risk obtained by replacing $l_i(\lambda)$ with the bound $B$.

Thus, for any $\lambda$, we can use \eqref{eq:step_3_conservative_adjustment} to conclude
\begin{align}\label{eq:step_3_risk_relation}
    \frac{B}{n+1} + \sum_{j\in[n+1]\backslash i}\frac{l_j(\lambda)}{n+1} \leq \alpha \implies & \sum_{j\in[n+1]}\frac{l_j(\lambda)}{n+1} \leq \alpha \qquad \forall \qquad i \in [n+1].
\end{align}

So, by \eqref{eq:step_3_risk_relation} and the definition of $\hat{\lambda}_+(\lbag l_{1:n+1\backslash i}, B \rbag)$, it must hold that 
\begin{align}\label{eq:pf_step_3_result}
    \lambda_+^* \leq & \ \hat{\lambda}_+(\lbag l_{1:n+1\backslash i}, B \rbag) \quad \forall \quad i \in [n+1].
\end{align}

\vspace{1cm}

\textbf{Step 4: Analyzing the bag-conditional risk of the empirical procedure via Lipschitz continuity}

Now we turn to plugging the empirical algorithm's hyperparameter into the bag-conditional risk, and then analyzing this risk using Lipschitz continuity and a type of stability called ``replace-one'' stability. This analysis is necessary because $\hat\lambda_+=\widehat{\lambda}_+(\lbag Z_{1:n}, B\rbag, \alpha)$ is not constant conditional on $\lbag Z_{1:n+1} \rbag = \lbag z_{1:n+1} \rbag$, so Eq. \eqref{eq:gcrc_exchangeable_step_2} would not direclty apply. 

Begin by expanding this quantity by LOTE over the event $Z_{n+1}=z_i$:
\begin{align}\label{eq:empirical_bag_cond_LOTE}
    \mathbb{E}\Big[&L_{n+1}(\hat\lambda_{+}) \mid \lbag Z_{1:n+1} \rbag = \lbag z_{1:n+1} \rbag\Big] \nonumber \\
    & = \mathbb{E}\Big[\mathbb{E}\Big[L_{n+1}(\hat\lambda_{+}) \mid \lbag Z_{1:n+1} \rbag = \lbag z_{1:n+1} \rbag, Z_{n+1}=z_i\Big] \mid \lbag Z_{1:n+1} \rbag = \lbag z_{1:n+1} \rbag\Big] \nonumber \\
    & = \sum_{i=1}^{n+1}\mathbb{E}\Big[L_{n+1}(\hat\lambda_{+}) \mid \lbag Z_{1:n+1} \rbag = \lbag z_{1:n+1} \rbag, Z_{n+1}=z_i\Big]\cdot \mathbb{P}\big(Z_{n+1}=z_i \mid \lbag Z_{1:n+1} \rbag = \lbag z_{1:n+1} \rbag\big).
\end{align}

Now, note that we can write the inner expectation as
\begin{align*}
    \mathbb{E}\Big[L_{n+1}(\hat\lambda_{+}) \mid \lbag Z_{1:n+1} \rbag = \lbag z_{1:n+1} \rbag, Z_{n+1}=z_i\Big] & = \mathbb{E}\Big[L_{n+1}(\hat\lambda_{+}) \mid \lbag Z_{1:n} \rbag = \lbag z_{1:n+1\backslash i} \rbag, Z_{n+1}=z_i\Big] \\
    & = \mathbb{E}\Big[L_{n+1}(\hat\lambda_{+}) \mid \lbag L_{1:n} \rbag = \lbag l_{1:n+1\backslash i} \rbag, L_{n+1}=l_i\Big].
\end{align*}

And recalling that we defined $\hat\lambda_+:= \widehat{\lambda}_+\big(\lbag L_{1:n}, B \rbag, \ \alpha\big)$, we have
\begin{align*}
    \mathbb{E}\Big[&L_{n+1}(\hat\lambda_{+}) \mid \lbag Z_{1:n+1} \rbag = \lbag z_{1:n+1} \rbag, Z_{n+1}=z_i\Big]  \\
    & = \mathbb{E}\Big[L_{n+1}\Big(\widehat{\lambda}_+\big(\lbag L_{1:n}, B  \rbag, \ \alpha\big)\Big) \mid \lbag L_{1:n} \rbag = \lbag l_{1:n+1\backslash i} \rbag, L_{n+1}=l_i\Big] \\
    & = \mathbb{E}\Big[l_{i}\Big(\widehat{\lambda}_+\big(\lbag l_{1:n+1\backslash i}, B  \rbag, \ \alpha\big)\Big) \mid \lbag L_{1:n} \rbag = \lbag l_{1:n+1\backslash i} \rbag, L_{n+1}=l_i\Big] \\
    & = l_{i}\Big(\widehat{\lambda}_+\big(\lbag l_{1:n+1\backslash i}, B  \rbag, \ \alpha\big)\Big).
\end{align*}

So, plugging this into Eq. \eqref{eq:empirical_bag_cond_LOTE}, we have
\begin{align*}
    \mathbb{E}\Big[& L_{n+1}(\hat\lambda_{+}) \mid \lbag Z_{1:n+1} \rbag = \lbag z_{1:n+1} \rbag\Big] \\
    & = \sum_{i=1}^{n+1}l_{i}\Big(\widehat{\lambda}_+\big(\lbag l_{1:n+1\backslash i}, B  \rbag, \ \alpha\big)\Big) \cdot \mathbb{P}\big(Z_{n+1}=z_i \mid \lbag Z_{1:n+1} \rbag = \lbag z_{1:n+1} \rbag\big) \\
    & = \sum_{i=1}^{n+1}l_{i}\Big(\widehat{\lambda}_+\big(\lbag l_{1:n+1\backslash i}, B  \rbag, \ \alpha\big)\Big) \cdot\frac{\mathbb{P}\big(Z_{n+1}=z_{i}, \ \lbag Z_{1:n+1} \rbag = \lbag z_{1:n+1} \rbag \big)}{\mathbb{P}\big(\lbag Z_{1:n+1} \rbag = \lbag z_{1:n+1} \rbag\big)}
\end{align*}

Applying the law of total probability over permutations ($\sigma:[n+1]\rightarrow[n+1]$) to the probabilities in the numerator and denominator of the RHS, and then simplifying using exchangeability, we have
\begin{align}\label{eq:gcrc_empirical_risk_bag_conditional}
    \mathbb{E}\big[L_{n+1}(\hat\lambda_{+}) \mid \lbag Z_{1:n+1} \rbag =  \lbag z_{1:n+1} \rbag\big] \nonumber
    & = \sum_{i=1}^{n+1}l_{i}\Big(\widehat{\lambda}_+\big(\lbag l_{1:n+1\backslash i}, B  \rbag, \ \alpha\big)\Big)\cdot \frac{\sum_{\sigma:\sigma(n+1)=i}f(z_{\sigma(1)}, ..., z_{\sigma(n+1)})}{\sum_{\sigma}f(z_{\sigma(1)}, ..., z_{\sigma(n+1)})} \nonumber \\
    & = \sum_{i=1}^{n+1}l_{i}\Big(\widehat{\lambda}_+\big(\lbag l_{1:n+1\backslash i}, B  \rbag, \ \alpha\big)\Big)\cdot \frac{\sum_{\sigma:\sigma(n+1)=i}f(z_1, ..., z_{n+1})}{\sum_{\sigma}f(z_1, ..., z_{n+1})} \nonumber \\
    & = \sum_{i=1}^{n+1}l_{i}\Big(\widehat{\lambda}_+\big(\lbag l_{1:n+1\backslash i}, B  \rbag, \ \alpha\big)\Big)\cdot \frac{n!}{(n+1)!} \nonumber \\
    & = \frac{1}{n+1}\sum_{i=1}^{n+1}l_{i}\Big(\widehat{\lambda}_+\big(\lbag l_{1:n+1\backslash i}, B  \rbag, \ \alpha\big)\Big).
\end{align}

Importantly, note that in Eq. 
\eqref{eq:gcrc_empirical_risk_bag_conditional}, which is the true bag-conditional risk of the empirical algorithm, that the value of $\widehat{\lambda}_+\big(\lbag l_{1:n+1\backslash i}, B  \rbag, \ \alpha\big)$ differs for each loss function, $l_i$, in the summation, which is why we cannot apply Eq. \eqref{eq:gcrc_exchangeable_step_2} to bound it directly. Instead, we relate Eq. \eqref{eq:gcrc_empirical_risk_bag_conditional} and Eq. \eqref{eq:gcrc_exchangeable_step_2} using Lipschitz continuity and the replace-one (in)stability of the algorithm.

For all $i\in[n]$, denote the magnitude of perturbation to the hyperparameter $\hat\lambda_+(\lbag l_{1:n}, B\rbag, \alpha)$ caused by replacing the loss function $l_i$ with $l_{n+1}$ as 
\begin{align*}
    \epsilon_i = \Big|\hat\lambda_+(\lbag l_{1:n}, B\rbag, \alpha)-\hat\lambda_+(\lbag l_{1:n+1\backslash i}, B\rbag, \alpha)\Big|.
\end{align*}
Then, by $K$-Lipschitz continuity, we know that for all $i\in [n]$, that
\begin{align*}
    \Big|l_i\big(\hat\lambda_+(\lbag l_{1:n}, B\rbag, \alpha)\big) - l_i\big(\hat\lambda_+(\lbag l_{1:n+1\backslash i}, B\rbag, \alpha)\big)\Big| \leq K \epsilon_i.
\end{align*}

Then, note that 
\begin{align*}
    \sum_{i=1}^{n+1}\frac{l_i\big(\hat\lambda_+(\lbag l_{1:n+1\backslash i}, B\rbag, \alpha)\big)}{n+1} & \leq \sum_{i=1}^{n+1}\frac{l_i\big(\hat\lambda_+(\lbag l_{1:n}, B\rbag, \alpha)\big) + K\epsilon_i}{n+1} = \sum_{i=1}^{n+1}\frac{l_i\big(\hat\lambda_+(\lbag l_{1:n}, B\rbag, \alpha)\big)}{n+1} + \frac{K}{n+1}\sum_{i=1}^{n+1}\epsilon_i 
\end{align*}

By Eq. \eqref{eq:pf_step_3_result}, we know that $\hat\lambda_+(\lbag l_{1:n}, B\rbag, \alpha) \geq \lambda_+^*$, and because this $\hat\lambda_+(\lbag l_{1:n}, B\rbag, \alpha)$ does not depend on $i$, we can apply Eq. \eqref{eq:gcrc_exchangeable_step_2} to bound it above by $\alpha$, which gives 
\begin{align*}
     \sum_{i=1}^{n+1}\frac{l_i\big(\hat\lambda_+(\lbag l_{1:n+1\backslash i}, B\rbag, \alpha)\big)}{n+1} & \leq \alpha + \frac{K}{n+1}\sum_{i=1}^{n+1}\epsilon_i \\
     \iff \mathbb{E}\big[L_{n+1}(\hat\lambda_{+}) \mid \lbag Z_{1:n+1} \rbag =  \lbag z_{1:n+1} \rbag\big] & \leq \alpha + \frac{K}{n+1}\sum_{i=1}^{n+1}\epsilon_i.
\end{align*}

Then, marginalizing over the event and applying $\epsilon$-replace-one stability, we have
\begin{align*}
    \mathbb{E}\big[L_{n+1}(\hat\lambda_{+}) \big] & \leq \alpha + K \epsilon.
\end{align*}

Note that the above analysis also applies for any alternative $\hat\lambda_+'$ that is a deterministic function of $\lbag l_{1:n}\rbag$ and where almost surely $\hat\lambda_+'\geq \lambda_+^*$, and assuming, $\hat\lambda_+'$ is $\epsilon$-replace-one stable.

\end{proof}

\subsubsection{Counterexample to gCRC risk control on arbitrary bounded losses}
\label{app:counterexample}

\begin{proposition}
    
    The following assumptions are not sufficient for $\hat\lambda_+$ as defined in Eq. \eqref{eq:methods_lambda_empirical_def} to achieve the risk control guarantee at level $\alpha$:
    \begin{itemize}
        \item Right-continuity of $L_i(\lambda)$ in $\lambda$,
        \item Existence of a safe hyperparameter: For $\lambda_{\max}:=\sup\Lambda\in \Lambda$, that $L_i(\lambda_{\max})\leq \alpha$,
        \item Boundedness of the loss functions: $\sup_{\lambda}L_i(\lambda)\leq B < \infty \quad \text{almost surely}$.
    \end{itemize}

\end{proposition}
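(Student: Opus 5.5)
The plan is to build an explicit counterexample: a distribution over right-continuous, bounded, non-Lipschitz losses on a compact $\Lambda$ with a safe $\lambda_{\max}$, for which $\mathbb{E}[L_{n+1}(\hat\lambda_+)]>\alpha$. The mechanism has to be that the Lipschitz step of Theorem~\ref{thm:gcrc_nonmonotonic} fails. Step~3 of that proof still gives $\hat\lambda_+(\lbag l_{1:n+1\backslash i},B\rbag,\alpha)\ge\lambda_+^*$ for every $i$, and the oracle still controls the risk at any bag-measurable $\lambda\ge\lambda_+^*$. However, by Eq.~\eqref{eq:gcrc_empirical_risk_bag_conditional} the true bag-conditional risk is
\[
\tfrac{1}{n+1}\sum_{i=1}^{n+1} l_i\big(\hat\lambda_+(\lbag l_{1:n+1\backslash i},B\rbag,\alpha)\big),
\]
in which each $l_i$ is evaluated at its own leave-$i$-out threshold. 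So I want losses with jump discontinuities, where the point left out sits exactly at a location at which its own loss jumps up.

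Concretely, I would take $B=1$, $\Lambda=[0,1]$, and indicator losses $L_i(\lambda)=\mathbbm{1}\{\lambda\in S(Z_i)\}$. Here $S(z)$ is a finite union of half-open intervals $[a,b)$, which makes the losses right-continuous, and $1\notin S(z)$, which makes $\lambda_{\max}=1$ safe. For finitely many loss types, the calibrated threshold is computable in closed form: it is the supremum of the set of $\lambda$ where more than $(n+1)\alpha-1$ of the calibration losses equal $1$. The proof then reduces to finite enumeration. I would compute $\hat\lambda_+$ for every possible calibration bag, evaluate the test loss there, and sum with the multinomial probabilities under an i.i.d.\ draw of the types (which is exchangeable). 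The goal is to find a configuration where this sum exceeds $\alpha$.

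The main obstacle is a symmetry that kills the naive attempts. With $n=1$, $\alpha=1/2$ and two types $A,B$:
\begin{itemize}
\item[(i)] A test point of the same type as the calibration point always has loss $0$ at $\sup S$.
\item[(ii)] Of the two cross terms $L_B(\sup S_A)$ and $L_A(\sup S_B)$, at most one can equal $1$, because of the ordering of the suprema.
\end{itemize}
This caps the risk at $1/4$. For the same reason, i.i.d.\ sliding-window losses only reach $1/2$.

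To break the symmetry I would use $n\ge 2$ and more types, each type carrying several short intervals. The sets should be interleaved so that the threshold set by any calibration bag lands inside a ``spike'' of most of the types that are absent from that bag. I would also choose $\alpha$ just below a level $k/(n+1)$, so that the conservative $B$ term forces the threshold to the right end of regions where only a few calibration losses are active.

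If a direct construction still stalls, I would fall back on a small search over finitely many candidate $S$-configurations and try to locate a violating instance that way. Once one is found, the final step is to verify it by the enumeration above, confirming that the expected test loss exceeds $\alpha$.
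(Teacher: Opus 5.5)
Your proposal has a genuine gap. It never exhibits a violating instance, and the class you commit to searching (i.i.d.\ draws of types, $\{0,1\}$-valued right-continuous losses, $B=1$) provably contains none. So neither more types and spikes nor a computer search can close the argument.

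Under i.i.d.\ sampling the test point is independent of the calibration losses, so $\mathbb{E}[L_{n+1}(\hat\lambda_+)]=\mathbb{E}[R(\hat\lambda_+)]$ with $R(\lambda):=\mathbb{P}(\lambda\in S(Z))$. Put $k:=\lfloor (n+1)\alpha\rfloor$; the cases $k=0$ and $k\ge n+1$ are trivial. By right-continuity, $\hat\lambda_+\le\mu$ forces the augmented condition at $\mu$ itself, so at most $k-1$ of the $n$ calibration losses equal $1$ at $\mu$. Hence $G(\mu):=\mathbb{P}(\hat\lambda_+\le\mu)\le h(R(\mu))$, where $h(r):=\mathbb{P}(\mathrm{Bin}(n,r)\le k-1)$ is continuous and strictly decreasing on $[0,1]$. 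Thus $R(\mu)\le h^{-1}(G(\mu))$ for every $\mu$. Since $G(\hat\lambda_+)$ stochastically dominates $U\sim\mathrm{Unif}(0,1)$ and $h^{-1}$ is decreasing,
\[
\mathbb{E}[R(\hat\lambda_+)]\;\le\;\mathbb{E}\big[h^{-1}(G(\hat\lambda_+))\big]\;\le\;\mathbb{E}[h^{-1}(U)]\;=\;\frac{k}{n+1}\;\le\;\alpha .
\]
The equality holds because $\mathbb{P}(h^{-1}(U)\le r)=\mathbb{P}(\mathrm{Bin}(n,r)\ge k)$, so $h^{-1}(U)$ is the $k$-th order statistic of $n$ uniforms. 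The symmetry obstruction you found at $n=1$ is therefore not an artifact of small $n$ or few types: gCRC is valid on your whole search space. Your heuristic of taking $\alpha$ just below $k/(n+1)$ also points the wrong way. For such losses the thresholds are constant for $\alpha\in[k/(n+1),(k+1)/(n+1))$, so the target is easiest to exceed at the left endpoint.

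The missing idea is that exchangeability does not require i.i.d.\ sampling. Take $(Z_1,\dots,Z_{n+1})$ to be a uniformly random permutation of a fixed bag of $n+1$ loss functions. Then the marginal risk equals the single bag-conditional quantity $\frac{1}{n+1}\sum_{i}l_i\big(\hat\lambda_+(\lbag l_{1:n+1\backslash i},B\rbag,\alpha)\big)$ that you already identified, with no same-type terms. It then suffices to make each leave-one-out threshold land on a peak of the left-out loss.

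This is the paper's proof. It uses $n+1=3$, $\Lambda=\{\lambda_1<\lambda_2<\lambda_3<\lambda_{\max}\}$, $B=\tfrac32\alpha$, and three losses with values in $\{0,B/2,B\}$. Their leave-one-out thresholds are $\lambda_1,\lambda_2,\lambda_3$, where the left-out loss equals $B$, giving risk $\tfrac32\alpha>\alpha$.

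Your indicator construction also works once i.i.d.\ is dropped. Take $B=1$, $\alpha=2/3$, $\Lambda=[0,1]$, $S_1=[0,\tfrac34)$, $S_2=[\tfrac12,\tfrac34)$, $S_3=[\tfrac14,\tfrac12)\cup[\tfrac34,1)$. The leave-one-out thresholds are $0,\tfrac12,\tfrac34$, each in the left-out set, so the risk is $1$. Note that the left endpoints of $S_2$ and of the second piece of $S_3$ coincide exactly with points where the other sets end; that exact alignment is what your ``spikes'' must achieve.
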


\begin{proof}
This counterexample assumes exchangeable data.
Let $\Lambda = \{\lambda_1, \lambda_2, \lambda_3, \lambda_{\max}\}$ where $\lambda_1 < \lambda_2 < \lambda_3 < \lambda_{\max}$.
Fix $\alpha \in (0, 1]$.
Consider the following $n + 1 = 3$ loss functions, where $B = \frac{3}{2}\alpha$:
\begin{align}
\label{eq:bag-loss}
l_1(\lambda) =
    \begin{cases}
    B, & \lambda = \lambda_1 \\
    B/2, & \lambda = \lambda_2 \\
    0, & \lambda = \lambda_3 \\
    0, & \lambda = \lambda_{\max}
  \end{cases}, \qquad 
    l_2(\lambda) =
    \begin{cases}
    B/2, & \lambda = \lambda_1 \\
    B, & \lambda = \lambda_2 \\
    0, & \lambda = \lambda_3 \\
    0, & \lambda = \lambda_{\max}
  \end{cases}, \qquad 
    l_3(\lambda) =
    \begin{cases}
    B/2, & \lambda = \lambda_1 \\
    0, & \lambda = \lambda_2 \\
    B, & \lambda = \lambda_3 \\
    0, & \lambda = \lambda_{\max}
  \end{cases}.
\end{align}
Note that $\max_{\lambda \in \Lambda} l_i(\lambda) = B$ and $l_i(\lambda_{\max}) = 0 < \alpha$ for all $i \in [3]$, satisfying our assumptions.

Consider the bag-conditional empirical risk,
\begin{align}
\label{eq:tower-again}
    \mathbb{E}[L_3(\hat{\lambda}) \mid \lbag L_{1:3}\rbag = \lbag l_{1:3}\rbag] & = \mathbb{E}\left[ \; \mathbb{E}[ L_3(\hat{\lambda}) \mid \lbag L_{1:3}\rbag = \lbag l_{1:3}\rbag, L_3 = l_i] \; \middle| \; \lbag L_{1:3}\rbag = \lbag l_{1:3}\rbag \right],
\end{align}
where the inner expectation on the righthand side conditions on the value of the bag of loss functions, $\lbag L_{1:3} \rbag = \lbag l_{1:3} \rbag$, and additionally on the event that the test loss takes the value of the $i$-th loss function in the bag, $L_{3} = l_i$.
Note that the test loss at the empirical threshold, $L_{3}(\hat{\lambda})$, is a deterministic function of the values of $\lbag L_{1:3} \rbag$ and $L_{3}$.
Therefore, this inner expectation is simply
\begin{align}
    \mathbb{E}\left[L_{3}(\hat{\lambda}) \; \middle| \; \lbag L_{1:3}\rbag = \lbag l_{1:3}\rbag, L_{3} = l_i\right] & = l_i(\hat{\lambda}(\lbag l_{1:3\setminus i}, B \rbag, \alpha)).
\end{align}
Substituting this expression for the inner expectation in Eq.~\eqref{eq:tower-again}, we can write the bag-conditional empirical risk as
\begin{align}
\label{eq:bag-conditional-empirical-risk}
    \mathbb{E}[L_3(\hat{\lambda}) \mid \lbag L_{1:3}\rbag = \lbag l_{1:3}\rbag] = \mathbb{E}[ \; l_i(\hat{\lambda}(\lbag l_{1:3\setminus i}, B \rbag, \alpha)) \mid \lbag L_{1:3}\rbag = \lbag l_{1:3}\rbag] = \frac{1}{3}\sum_{i =1}^{3} l_i(\hat{\lambda}(\lbag l_{1:3\setminus i}, B \rbag, \alpha)).
\end{align}
We now calculate this quantity for the bag of loss functions given in Eqs.~\eqref{eq:bag-loss}.
Each case or ``world'' below computes one summand in the mean in Eq.~\eqref{eq:bag-conditional-empirical-risk}.
To simplify forthcoming calculations, note that
\begin{align*}
    \hat{\lambda}(\lbag l_{1:3\setminus i}, B \rbag, \alpha) = \min \{\lambda' \in \Lambda : \forall \lambda \geq \lambda', \sum_{j \neq i}l_j(\lambda) \leq 3\alpha - B = 2B - B = B\}.
\end{align*}
\begin{itemize}
    \item \textbf{Case 1:} $L_3 = l_1$. We have
    \begin{align}
        \hat{\lambda}(\lbag l_2, l_3, B \rbag, \alpha) & = \min \{\lambda' \in \Lambda : \forall \lambda \geq \lambda', l_2(\lambda) + l_3(\lambda) \leq B\} \\
        & = \lambda_1,
    \end{align}
    so $L_3(\hat{\lambda}) = l_1(\hat{\lambda}(\lbag l_2, l_3, B \rbag, \alpha)) = l_1(\lambda_1) = B$.
    \item \textbf{Case 2:} $L_3 = l_2$. We have
    \begin{align}
        \hat{\lambda}(\lbag l_1, l_3, B \rbag, \alpha) & = \min \{\lambda' \in \Lambda : \forall \lambda \geq \lambda', l_1(\lambda) + l_3(\lambda) \leq B\} \\
        & = \lambda_2,
    \end{align}
    so $L_3(\hat{\lambda}) = l_2(\hat{\lambda}(\lbag l_1, l_3, B \rbag, \alpha)) = l_2(\lambda_2) = B$.
    \item \textbf{Case 3:} $L_3 = l_3$. We have
    \begin{align}
        \hat{\lambda}(\lbag l_1, l_2, B \rbag, \alpha) & = \min \{\lambda' \in \Lambda : \forall \lambda \geq \lambda', l_1(\lambda) + l_2(\lambda) \leq B\} \\
        & = \lambda_3,
    \end{align}
    so $L_3(\hat{\lambda}) = l_3(\hat{\lambda}(\lbag l_1, l_2, B \rbag, \alpha)) = l_3(\lambda_3) = B$.
\end{itemize}
Plugging these values into Eq.~\eqref{eq:bag-conditional-empirical-risk} gives
\begin{align*}
    \mathbb{E}[L_3(\hat{\lambda}) \mid \lbag L_{1:3}\rbag = \lbag l_{1:3}\rbag] & = \frac{1}{3}\sum_{i =1}^{3} l_i(\hat{\lambda}(\lbag l_{1:3\setminus i}, B \rbag, \alpha)) \\
    & = \frac{1}{3}(B + B + B) \\
    & = B = \frac{3}{2}\alpha > \alpha.
\end{align*}

\end{proof}

\subsubsection{Conservative adjustment to gCRC for level $\alpha$ validity if $K$ is known}
\label{app:conservative_adjustment_K_known}

Assume the gCRC setting of Theorem \ref{thm:gcrc_nonmonotonic} in Section \ref{subsec:finite_sample_guarantees_crc_non_monotone} (although an analogous result would also hold for conformal policy control). In practice, the deviation of the empirical gCRC algorithm due to replacing one sample can be bounded above by
\begin{align*}
    \hat\epsilon_i := \max_{b\in \{0, B\}}\Big|\hat\lambda_+(\lbag l_{1:n}, B\rbag, \alpha)-\hat\lambda_+(\lbag l_{1:n\backslash i}, b, B\rbag, \alpha)\Big|,
\end{align*}
and if the Lipschitz constant for the loss functions is known, a conservative significance level can be obtained as
\begin{align*}
    \hat\alpha := \sup\Big(\Big\{\alpha' \leq \alpha \quad : \quad \alpha' + \frac{K}{n+1}\sum_{i=1}^{n+1}\hat\epsilon_i(\alpha')\leq \alpha\Big\}  \Big). 
\end{align*}
Then, the procedure run at $\hat\alpha$ attains level $\alpha$ validity.
\begin{proposition}
    In the gCRC setting of Theorem \ref{thm:gcrc_nonmonotonic},
    \begin{align*}
        \mathbb{E}\big[L_{n+1}\big(\hat{\lambda}_{+}(\hat\alpha)\big)\big] \leq \alpha.
    \end{align*}
\end{proposition}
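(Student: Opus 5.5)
The plan is to rerun the proof of Theorem~\ref{thm:gcrc_nonmonotonic} with the level $\alpha$ replaced by $\hat\alpha$ everywhere. The stability term $K\epsilon$ would then be replaced by the computable quantity $\frac{K}{n+1}\sum_i\hat\epsilon_i(\hat\alpha)$, which the definition of $\hat\alpha$ forces to be at most $\alpha-\hat\alpha$. Steps 0--3 of that proof use only exchangeability, boundedness by $B$ and the existence of $\lambda_{\max}$, and are pointwise in the level. So for any level $\gamma$ that is constant conditional on the bag $\lbag Z_{1:n+1}\rbag=\lbag z_{1:n+1}\rbag$, they give:
\begin{itemize}
\item the oracle inequality $\frac{1}{n+1}\sum_{i} l_i(\lambda)\le\gamma$ for every fixed $\lambda\ge\lambda_+^*(\gamma)$;
\item the ordering $\lambda_+^*(\gamma)\le\hat\lambda_+(\lbag l_{1:n+1\backslash i},B\rbag,\gamma)$ for all $i$.
\end{itemize}
Step 4 then gives the bag-conditional bound
\begin{align*}
\mathbb{E}\big[L_{n+1}(\hat\lambda_+(\gamma))\mid \lbag Z_{1:n+1}\rbag=\lbag z_{1:n+1}\rbag\big]\le \gamma+\frac{K}{n+1}\sum_{i=1}^{n+1}\epsilon_i(\gamma),
\end{align*}
with $\epsilon_i(\gamma)$ the replace-one perturbations of Step 4.

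The first concrete step is to show $\epsilon_i(\gamma)\le\hat\epsilon_i(\gamma)$ deterministically. For this I will prove that $\hat\lambda_+(\lbag\cdot\rbag,\gamma)$ is monotone nondecreasing under pointwise increases of any one loss function in the bag. Raising a loss can only shrink the set $\{\lambda_0:\forall\lambda\ge\lambda_0,\ \text{avg}\le\gamma\}$, so its infimum can only grow. Replacing $l_i$ by any function with values in $[0,B]$ is sandwiched between replacing it by the constants $0$ and $B$. Hence the resulting $\hat\lambda_+$ lies between the $b=0$ and $b=B$ versions, and its distance to $\hat\lambda_+(\lbag l_{1:n},B\rbag,\gamma)$ is at most the maximum defining $\hat\epsilon_i$. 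Combined with the definition of $\hat\alpha$, this gives $\gamma+\frac{K}{n+1}\sum_i\hat\epsilon_i(\gamma)\le\alpha$ at $\gamma=\hat\alpha$. If $\hat\alpha$ were constant on the bag, integrating over the bag would finish the proof.

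The main obstacle is that $\hat\alpha$ is computed from the calibration losses, so it is not constant conditional on the bag. In world $i$ (where $Z_{n+1}=z_i$) the level is $\hat\alpha^{(i)}$, computed from $\lbag l_{1:n+1\backslash i}\rbag$. The oracle argument of Step 2 needs a single bag-determined level. My first attempt is to replace $\hat\alpha$ by a symmetric surrogate: $\alpha^\dagger:=\min_i\hat\alpha^{(i)}$, or the analogous supremum where each $\hat\epsilon_i$ also takes a max over which bag element is held out. I would run the oracle comparison at $\alpha^\dagger$ and show the actual threshold in world $i$ stays at or above $\lambda_+^*(\alpha^\dagger)$. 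That holds because $\hat\alpha^{(i)}\ge\alpha^\dagger$ is the wrong direction for monotonicity in the level, so instead I will use the fact that decreasing $\gamma$ can only increase $\hat\lambda_+$, and compare each world-$i$ threshold to the common threshold $\hat\lambda_+(\lbag l_{1:n+1\backslash i},B\rbag,\alpha^\dagger)$ via the Lipschitz bound. The extra discrepancy this introduces must then be absorbed into the $\hat\epsilon_i$ terms, using the max over $b\in\{0,B\}$ to cover the unknown held-out point. If that bookkeeping fails, the fallback is to prove the proposition for the symmetrized level $\alpha^\dagger$, which is valid by the constant-level argument above, and note that it coincides with $\hat\alpha$ whenever $\hat\alpha$ is insensitive to replacing one calibration point.
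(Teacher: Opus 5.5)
\paragraph{Verdict.} There is a genuine gap, and it cannot be closed: the proposition as stated is false. The paper states it without proof.

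\paragraph{Where the argument stops.} Your reduction is sound as far as it goes. Steps 0--3 of the proof of Theorem~\ref{thm:gcrc_nonmonotonic} are pointwise in the level. Your sandwich argument correctly gives $\epsilon_i(\gamma)\le\hat\epsilon_i(\gamma)$, using monotonicity of $\widehat\lambda_+$ in each loss and convexity of $|\cdot|$. For a bag-constant level at which the budget inequality actually holds, the argument closes. The gap is the step you flag and then leave as bookkeeping. The discrepancy you want to absorb, $|\widehat\lambda_+(\lbag l_{1:n+1\backslash i},B\rbag,\hat\alpha^{(i)})-\widehat\lambda_+(\lbag l_{1:n+1\backslash i},B\rbag,\alpha^\dagger)|$, is the sensitivity of the threshold to a change of \emph{level} on a fixed bag. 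Every $\hat\epsilon_i$, including the max over $b\in\{0,B\}$, measures a change of \emph{one loss} at a fixed level. Nothing in the definition of $\hat\alpha$ controls the former, and with non-monotone losses you cannot sign its effect either. Your fallback is also not the proposition. The level $\alpha^\dagger=\min_i\hat\alpha^{(i)}$ depends on $l_{n+1}$, so it defines a different, non-implementable procedure. Your constant-level argument also needs the supremum defining $\hat\alpha$ to be attained, which it need not be. The same caveat applies to your claim that the budget holds ``at $\gamma=\hat\alpha$''.

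\paragraph{Counterexample.} Take $\Lambda=\{0,1,2\}$, $B=K=1$, $\alpha=5/6$ and $n=5$. Let $(L_1,\dots,L_6)$ be a uniformly random ordering of four copies of $l_A=(1,0,0)$ and two copies of $l_C=(0,1,0)$, listing values at $\lambda=0,1,2$. All hypotheses hold. Here $\widehat\lambda_+(\lbag l_{1:5},B\rbag,\alpha')$ is the smallest $\lambda_0$ such that the calibration sum is at most $6\alpha'-1$ at every $\lambda\ge\lambda_0$. Read $\hat\epsilon_{n+1}=0$, as $\epsilon_{n+1}=0$ in the theorem's proof.
\begin{itemize}
\item If $L_6=l_A$ (probability $2/3$), the calibration sums are $(3,2,0)$. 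No replacement by $0$ or $1$ moves the threshold at $\alpha'=5/6$, so $\hat\alpha=\alpha$ and $\hat\lambda_+=0$. The test loss is $l_A(0)=1$.
\item If $L_6=l_C$ (probability $1/3$), the calibration sums are $(4,1,0)$. At $\alpha'=5/6$, replacing the $C$ by $1$ moves the threshold from $0$ to $1$, giving penalty $1/6$, so this level is infeasible. For $\alpha'\in[2/3,5/6)$, replacing any of the four $A$'s by $0$ moves the threshold from $1$ to $0$, giving penalty $4/6$, so these levels are infeasible. For $\alpha'\in[1/2,2/3)$ nothing moves. Hence $\hat\alpha=2/3$ and $\hat\lambda_+=1$, and the test loss is $l_C(1)=1$.
\end{itemize}
So $\mathbb{E}[L_{n+1}(\hat\lambda_+(\hat\alpha))]=1>\alpha$. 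Plain gCRC at level $\alpha$ has risk $2/3$ on the same data. Lowering the level only in the $l_C$-world pushes that world to a more conservative threshold, where its non-monotone loss is larger. That world-dependent level effect is exactly what your argument cannot control. Running both worlds at the unattained level $\alpha^\dagger=2/3$ also gives risk $1$, which is why attainment matters for your fallback.
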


\newpage

\subsection{Proofs for Conformal Policy Control}
\label{app:cpc_proofs}

\subsubsection{Notation and Definitions}
\label{app:cpc_notation_and_defs}

\textbf{Notation:}
\begin{itemize}
    \item \textbf{Data (sequences of random variables and observed values):} For all $i\in \mathbb{N}$, denote the random variable for the $i$-th data sample as $Z_i\in \mathcal{Z}$, and denote observed value of that random variable in lower case as $z_i\in\mathcal{Z}$. For example, we might have $\mathcal{Z}=\mathcal{X}\times \mathcal{A}\times \mathbb{R}\times \mathbb{R}$ if $Z=(X, A, R, L)$ is a vector of context, action, reward, and loss. Similarly, denote a sequence of random variables as $Z_{1:m}:=(Z_1, ..., Z_m)$ and a sequence of observed data values as $z_{1:m}:=(z_1, ..., z_m)$ for any $m \in \mathbb{N}$. To concisely denote indices, let $[m]:=\{1, ..., m\}$.
    \item \textbf{Hyperparameters:} Let $\mathcal{B}\subseteq(0, \infty)$ be the space of hyperparameters, and let $\beta\in \mathcal{B}$ denote a particular hyperparameter. 
    \item \textbf{Loss functions:} For some real-valued function $g :\mathcal{Z}\times \mathcal{B} \rightarrow \mathbb{R}$, and for all $\beta\in\mathcal{B}\subseteq\mathbb{R}$ and all $i\in \mathbb{N}$, let $L_i(\beta):= g(Z_i,\beta)$ denote the loss function determined by a random variable $Z_i$ and parameter $\beta$, and let $\ell_i:= g(z_i,\beta)$ denote the loss function determined by the observed value $z_i$ and parameter $\beta$. Note that allowing the loss functions to be a function of $\beta$ does not preclude the setting where the losses are independent of $\beta$, that is where $L_i(\beta)=L_i$ for all $\beta$ (i.e., where changing $\beta$ does not change the loss). Such a setting motivates allowing $\beta$ to parameterize the policies, which is the setting we focus on.
    \item \textbf{Bags:} For any sequence $v_{1:m}:=(v_1, ..., v_m)$, let $\lbag v_{1:m} \rbag$ denote the \textit{bag} (or multiset) containing the values of that sequence $v_{1:m}$ (but importantly, $\lbag v_{1:m} \rbag$ does \textit{not} contain the information about the order of $v_{1:m}$). For example, $\lbag (1, 7, 42, 7) \rbag$ conveys that there is one 1, two 7s, and one 42 in the bag, but it does \textit{not} convey the order that these appeared in the sequence $(1, 7, 42, 7)$. In particular, with this notation, for the random variables $Z_{1:m}$, let $\lbag Z_{1:m} \rbag$ denote the bag containing those random variables; for observed values $z_{1:m}$, let $\lbag z_{1:m} \rbag$ denote the bag containing those observed values; and thus $\lbag Z_{1:m} \rbag = \lbag z_{1:m} \rbag$ denotes the event of observing $\lbag z_{1:m} \rbag$, in other words, that each $Z_i$ takes a different value in $\lbag z_{1:m} \rbag$, but it is not yet known \textit{which} value (for all $i\in [m]$). 
    \item \textbf{Policies (distributions):} Let $\pi_{0:t}:=\pi_{Z_0, ..., Z_t}$ denote the policy or joint density function\footnote{In a slight abuse of notation for the policy control setting, we overload $\pi$ for both density functions and CDFs: in particular, we use $\pi(z)$ to denote a density evaluated at some point $z$ and $\mathbb{E}_{\pi}$ to denote an expectation taken with respect to the distribution $\pi$.} for $Z_0, ..., Z_t$, and let $\pi_{\lbag Z_{0:t}\rbag}:=\pi_{\lbag Z_0, ..., Z_t\rbag}$ denote the (pushforward) distribution induced by the mapping $(z_0, ..., z_t)\rightarrow \lbag z_{0:t} \rbag$. When clear from context, we let $\hat\beta$ denote the current control parameter for time $t$, and so $\pi_{0:t}^{(\hat\beta)}(z_0, ..., z_t):=\pi_{t\mid 0:t-1}^{(\hat\beta)}(z_t \mid z_0, ..., z_{t-1})\cdot \pi_{0:t-1}(z_0, ..., z_{t-1})$, with $\hat\beta$ only tuning the last policy. We will also assume a split conformal setting, where $Z_t$ may only depend symmetrically on $z_{0:t-1}$, so $\pi_{t\mid 0:t-1}^{(\hat\beta)}(z_t \mid z_0, ..., z_{t-1}) = \pi_{t\mid 0:t-1}^{(\hat\beta)}(z_t \mid \lbag z_{0:t-1}\rbag )$ and we will sometimes abbreviate $\pi_{t}^{(\hat\beta)}(z_t ):=\pi_{t\mid 0:t-1}^{(\hat\beta)}(z_t \mid \lbag z_{0:t-1}\rbag )$.
    \item \textbf{Normalized and unnormalized oracle conformal (permutation) weights:} For any control parameter $\hat\beta$, any policies $\pi_{0:t}^{(\hat\beta)}$, and any bag of data $\lbag z_{0:t}\rbag$, following \citet{tibshirani2019conformal, prinster2024conformal} define the oracle normalized conformal weights as
    \begin{align}\label{eq:cp_weights_def_cpc}
        \tilde{w}_i^{(\hat\beta)} := \tilde{w}_i\big(\pi_{0:t}^{(\hat\beta)}, \lbag z_{0:t}\rbag\big) = \mathbb{P}_{\pi_{0:t}^{(\hat\beta)}}(Z_t=z_i \mid \lbag z_{0:t}\rbag) =\frac{\sum_{\sigma:\sigma(t)=i}\pi_{0:t}^{(\hat\beta)}(z_{\sigma(0)}, ..., z_{\sigma(t)})}{\sum_{\sigma}\pi_{0:t}^{(\hat\beta)}(z_{\sigma(0)}, ..., z_{\sigma(t)})},
    \end{align}
    for all $i\in \{0, ..., t\}$, where $\sigma$ is a permutation of $\{0, ..., t\}$. Also define the unnormalized weights, 
    \begin{align}\label{eq:cp_weights_def_cpc_unnormed}
        w_i^{(\hat\beta)} := w_i\big(\pi_{0:t}^{(\hat\beta)}, \lbag z_{0:t}\rbag\big) = \sum_{\sigma:\sigma(t)=i}\pi_{0:t}^{(\hat\beta)}(z_{\sigma(0)}, ..., z_{\sigma(t)}),
    \end{align}
    so note that $\tilde{w}_i^{(\hat\beta)} = \frac{w_i^{(\hat\beta)}}{\sum_{i=0}^tw_i^{(\hat\beta)}}$,
    and define the vector $\tilde{w}_{0:t}^{(\hat\beta)}:=(\tilde{w}_0^{(\hat\beta)}, ..., \tilde{w}_t^{(\hat\beta)})$.
    
    Also define the conservative (normalized) weight for an unknown test point as
    \begin{align*}
        \tilde{w}_{\max}^{(\hat\beta)}:=\sup_{z_t\in \mathcal{Z}}(\tilde{w}_t^{(\hat\beta)}).
    \end{align*}
    \item \textbf{Likelihood-ratio bound control parameter} For convenience, when we refer to the likelihood-ratio bound control parameter, we refer to $\beta$ as in
    \begin{align}\label{appeq:piecewise_constraint_def}
    \pi_{t}^{(\beta)}(a \mid x) \propto \min\big(\pi_{t}(a \mid x), \ \beta\cdot \pi_{0}(a \mid x)\big) .
    \end{align}
\end{itemize}

Whereas the smoothness and stability assumptions in the setting of Theorem \ref{thm:gcrc_nonmonotonic} were on an additive scale, here the following assumptions concern multiplicative or relative changes that are more interpretable for policy control.

\begin{definition}[$K$-log-Lipschitz policy family]
\label{def:loglip_family}
    For $K>0$, say $\{\pi_t^{(\beta)}\}_{\beta\in\mathcal{B}}$ is \emph{$K$-log-Lipschitz in $\log\beta$} if for all $\beta,\beta'\in\mathcal{B}$ and all $z\in\mathcal{Z}$ with $\pi_t^{(\beta)}(z),\pi_t^{(\beta')}(z)>0$,
    \begin{align*}
        \Big|\log \pi_t^{(\beta')}(z)-\log \pi_t^{(\beta)}(z)\Big| \; \leq \; K\,\big|\log\beta'-\log\beta\big| .
    \end{align*}
\end{definition}

\begin{definition}[$\epsilon^{(K)}$-relative replace-one stability]
\label{def:rel_replace_one_stability}
    Let $z_{0:t}$ denote the observed values, so that $\lbag Z_{0:t}\rbag = \lbag z_{0:t}\rbag$. Conditional on this event, let $\hat\beta^{\backslash z_i}$ denote $\hat\beta$ fit with the loss for the value $z_i$ removed (i.e., $\hat\beta^{\backslash z_i} :=\hat\beta\big(\lbag L_{0:t}, B \rbag\backslash \{l_i\}, \tilde{w}_{0:t-1}^{(\cdot)}, \tilde{w}_{\max}^{(\cdot)}, \alpha\big)$), and let $\hat\beta^{\backslash Z_j}$ denote $\hat\beta$ fit with the loss for the random variable $Z_j$ removed (i.e., $\hat\beta^{\backslash Z_j} :=\hat\beta\big(\lbag L_{0:t}, B \rbag\backslash \{L_j\}, \tilde{w}_{0:t-1}^{(\cdot)}, \tilde{w}_{\max}^{(\cdot)}, \alpha\big)$). We say that $\hat\beta$ is $\epsilon^{(K)}$-relative replace-one stable if for all $i, j\in \{0, ..., t\}$,
    \begin{align*}
        \mathbb{E}_{\pi_{0:t}^{(\hat\beta)}}\bigg[\max\bigg(\frac{\hat\beta^{\backslash Z_j}}{\hat\beta^{\backslash z_i}},\ \frac{\hat\beta^{\backslash z_i}}{\hat\beta^{\backslash Z_j}}\bigg)^{K}-1\bigg] \; \leq \; \epsilon^{(K)}.
    \end{align*}
\end{definition}

As in the case with gCRC, we apply this definition only with $i=j=t$.

\vspace{0.5cm}

We refer to Remark \ref{rem:ro_stability_interpret} for intuition on interpreting the mixed lower and upper cases.

\vspace{0.5cm}

\begin{remark}[Interpretation as bound on moment generating function]
\label{rem:cpc_mgf} Definition~\ref{def:rel_replace_one_stability} can equivalently be viewed a bound on
the moment generating function of the $\log\beta$-scale replace-one perturbation $|\log(\hat\beta^{\backslash Z_j}) - \log(\hat\beta^{\backslash z_i})|$,
evaluated at $K$:
\begin{align*}
    \mathbb{E}_{\pi_{0:t}^{(\hat\beta)}}\big[e^{K|\log(\hat\beta^{\backslash Z_j}) - \log(\hat\beta^{\backslash z_i})|}\big] \; \leq \; 1+\epsilon^{(K)} .
\end{align*}

\end{remark}

\vspace{0.5cm}

\begin{remark}[Equivalent form and interpretation for $K=1$]
    For $K=1$, $\epsilon^{(1)}$-relative replace-one stability is equivalent to
    \begin{align*}
        \mathbb{E}_{\pi_{0:t}^{(\hat\beta)}}\bigg[\frac{\big|\hat\beta^{\backslash Z_j}-\hat\beta^{\backslash z_i}\big|}{\min\big(\hat\beta^{\backslash Z_j},\hat\beta^{\backslash z_i}\big)}\bigg] \; \leq \; \epsilon^{(1)}.
    \end{align*}
    That is, this stability notion says that the expected replace-one perturbation of $\hat\beta$ \emph{relative} to its own scale is bounded, rather than bounding the perturbation in absolute units.
\end{remark}

\vspace{0.5cm}

\subsubsection{Proofs CPC with Arbitrary $\beta$ and CPC for Likelihood-Ratio Bound $\beta$}

Unless specified otherwise, throughout this subsection $\beta\in\mathcal{B}\subseteq(0,\infty)$ is an \emph{arbitrary} control parameter indexing a family of policies $\{\pi_t^{(\beta)}\}_{\beta\in\mathcal{B}}$, subject only to the smoothness condition of Definition \ref{def:loglip_family} and to the existence of a safe element $\beta_{\min}=\inf\mathcal{B}\in\mathcal{B}$ with $\pi_t^{(\beta_{\min})}=\pi_0$. For such arbitrary $\beta$, we prove Theorem \ref{thm:cpc_general}; the result when $\beta$ is the likelihood-ratio bound is recovered as the special case treated in Theorem \ref{thm:cpc} (reproduced in \ref{cor:cpc_free}, which follows as a corollary from Theorem \ref{thm:cpc_general}). 

\vspace{0.5cm}

\begin{theorem}[Conformal Policy Control Guarantee for arbitrary $\beta$]
\label{thm:cpc_general}
    Assume that $\mathbb{E}_{\pi_0}[L_0]\leq \alpha$,  $L_i\in[0,B], B<\infty$ for all $i$, and that $\pi_0, \pi_1, ..., \{\pi_t^{(\beta)}\}_{\beta\in \mathcal{B}}$ are mutually absolutely continuous for all $\beta\in\mathcal{B}$. Assume that $\hat\beta$ (Eq. \eqref{eq:beta_hat_def_methods}) exists almost surely. If $\{\pi_t^{(\beta)}\}_{\beta\in\mathcal{B}}$ is $K$-log-Lipschitz in $\log\beta$ for some $K>0$ (Definition \ref{def:loglip_family}) and $\hat\beta$ is $\epsilon^{(K)}$-relative replace-one stable, then
    \begin{align*}
        \mathbb{E}_{\pi_{0:t}^{(\hat\beta)}}\big[L_{t} \big] \; \leq \; \alpha + \max(\alpha,\,B-\alpha)\,\epsilon^{(K)} .
    \end{align*}
\end{theorem}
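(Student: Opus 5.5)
We follow the four-step template of the proof of Theorem \ref{thm:gcrc_nonmonotonic}, with exchangeability replaced by the weighted (permutation-likelihood) representation of \citet{tibshirani2019conformal, prinster2024conformal}, and additive Lipschitz control replaced by a multiplicative control of the weights.

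\textbf{Step 0 (conditioning on the bag).} By the law of total expectation we reduce to bounding, for almost every bag $\lbag z_{0:t}\rbag$, the bag-conditional risk
\begin{align*}
\mathbb{E}_{\pi_{0:t}^{(\hat\beta)}}\big[L_t \mid \lbag Z_{0:t}\rbag=\lbag z_{0:t}\rbag\big].
\end{align*}
Conditioning further on $Z_t=z_i$, the calibration bag is fixed as $\lbag z_{0:t}\rbag\backslash\{z_i\}$, so $\hat\beta=\hat\beta^{\backslash z_i}$ is determined. The probability of $Z_t=z_i$ given the bag is exactly the oracle weight $\tilde w_i^{(\hat\beta^{\backslash z_i})}$ of Eq.~\eqref{eq:cp_weights_def_cpc}; this uses the split-conformal assumption that $\pi_t$ depends on $z_{0:t-1}$ only through their bag. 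Hence the bag-conditional risk equals
\begin{align*}
\sum_{i=0}^t l_i\,\tilde w_i^{(\hat\beta^{\backslash z_i})}.
\end{align*}
The difficulty, exactly as in gCRC, is that the parameter varies with $i$.

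\textbf{Step 1 (oracle).} We define $\beta^*$ as the $\sup$ in Eq.~\eqref{eq:beta_hat_def_methods} with the true loss $l_t$ in place of $B$ and the oracle weight $\tilde w_t$ in place of $\tilde w_{\max}$. Equivalently, $\beta^*$ is the largest $\beta'$ such that $\sum_i l_i\tilde w_i^{(\beta)}\le\alpha$ for all $\beta\le\beta'$. Its existence is guaranteed at $\beta_{\min}$ once $\hat\beta$ exists.

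\textbf{Step 2 (comparison).} Since $l_t\le B$ and $\tilde w_t\le\tilde w_{\max}$, the conservative weighted risk dominates the oracle risk for every $\beta$. The argument needs care here: renormalization means increasing the test weight scales the other weights down. The fix is to write the oracle risk as
\begin{align*}
\sum_{i<t}\tilde w_i l_i+\tilde w_t l_t
\end{align*}
and observe that it is a convex combination that increases when mass is shifted onto the value $B\ge l_i$. This yields $\hat\beta^{\backslash z_i}\le\beta^*$ for every $i$, which is the reverse of the gCRC inequality because small $\beta$ is the safe side here. Consequently, for any $\beta\le\beta^*$ that is constant given the bag, $\sum_i l_i\tilde w_i^{(\beta)}\le\alpha$.

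\textbf{Step 3 (stability transfer).} This is the main obstacle. We pick a bag-constant reference $\bar\beta:=\hat\beta^{\backslash z_t}$, which satisfies $\bar\beta\le\beta^*$, and compare $\tilde w_i^{(\hat\beta^{\backslash z_i})}$ with $\tilde w_i^{(\bar\beta)}$. Since $\hat\beta$ only tunes the last factor $\pi_t^{(\beta)}$, each unnormalized weight $w_i^{(\beta)}$ is a sum of terms each containing exactly one factor $\pi_t^{(\beta)}(\cdot)$. By $K$-log-Lipschitzness, each such term changes multiplicatively by at most $r_i^{K}$, where
\begin{align*}
r_i:=\max\big(\hat\beta^{\backslash z_i}/\bar\beta,\ \bar\beta/\hat\beta^{\backslash z_i}\big).
\end{align*}
After normalization, the ratio $\tilde w_i^{(\hat\beta^{\backslash z_i})}/\tilde w_i^{(\bar\beta)}$ lies in $[r_i^{-2K},r_i^{2K}]$. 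A naive bound would therefore produce an extra factor $2$, which the target constant rules out.

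To get the sharp constant $\max(\alpha,B-\alpha)$, we instead write the risk as $\alpha$ plus $\sum_i(l_i-\alpha)\tilde w_i$ and split the indices into those with $l_i\ge\alpha$ and those with $l_i<\alpha$. The coefficients then lie in $[-\alpha,B-\alpha]$. The aim is to bound the perturbation by
\begin{align*}
\max(\alpha,B-\alpha)\sum_i\tilde w_i^{(\bar\beta)}\big(r_i^K-1\big).
\end{align*}
For this we would use that both weight vectors sum to one, so the excess mass on the $l_i\ge\alpha$ side must be removed from the $l_i<\alpha$ side. We would also use one-sided bounds on the normalizer rather than the two-sided $r^{2K}$ bound; that is, we compare unnormalized weights, exploit that the normalizing constant moves in the same direction, and track only one factor $r^K$.

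Finally, $\sum_i\tilde w_i^{(\bar\beta)}(r_i^K-1)$ is, modulo the choice of reference, the bag-conditional expectation of $\max(\hat\beta^{\backslash Z_t}/\hat\beta^{\backslash z_t},\cdot)^K-1$. Marginalizing over the bag and applying Definition~\ref{def:rel_replace_one_stability} with $i=j=t$ gives the claimed $\alpha+\max(\alpha,B-\alpha)\epsilon^{(K)}$. Theorem~\ref{thm:cpc} then follows from Lemma~\ref{lem:loglip} with $K=1$.
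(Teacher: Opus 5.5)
Your Steps 1--2 and your choice of the bag-constant reference $\bar\beta=\hat\beta^{\backslash z_t}\le\beta^*$ match the paper, and your convex-combination treatment of the $\tilde w_{\max}$ renormalization is, if anything, more careful than the paper's. The argument breaks at Step 0, however, and Step 3 never repairs it.

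Conditional on the bag, the law of $Z_t$ under $\pi_{0:t}^{(\hat\beta)}$ is not the fixed-parameter weight $\tilde w_i^{(\hat\beta^{\backslash z_i})}=w_i^{(\hat\beta^{\backslash z_i})}/\sum_j w_j^{(\hat\beta^{\backslash z_i})}$. On each permutation class $\{\sigma:\sigma(t)=j\}$, the last factor is evaluated at its own leave-one-out parameter, so the law is
\begin{align*}
p_i := \frac{w_i^{(\hat\beta^{\backslash z_i})}}{\sum_j w_j^{(\hat\beta^{\backslash z_j})}},
\end{align*}
with the mixed-parameter normalizer $S:=\sum_j w_j^{(\hat\beta^{\backslash z_j})}$. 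Your vector $(\tilde w_i^{(\hat\beta^{\backslash z_i})})_i$ does not in general sum to one, so ``both weight vectors sum to one'' already fails, and your $[r_i^{-2K},r_i^{2K}]$ comparison concerns the wrong objects.

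Step 3 is also only stated as an aim. The appeal to ``the normalizing constant moves in the same direction'' has no justification. For a general $K$-log-Lipschitz family, the individual $\pi_t^{(\beta)}(z_j)$ can move in opposite directions. This happens already for the clipped family: as $\beta$ grows, clipped points gain density while unclipped points lose it.

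Finally, your target $\sum_i\tilde w_i^{(\bar\beta)}(r_i^K-1)$ is an expectation under the fixed-$\bar\beta$ law, not under $\pi_{0:t}^{(\hat\beta)}$ as Definition~\ref{def:rel_replace_one_stability} requires. Converting between $\tilde w_i^{(\bar\beta)}$ and $p_i$ costs extra factors of order $r_i^K\max_j r_j^K$, so ``modulo the choice of reference'' cannot be waved away at the stated constant.

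The missing idea is purely algebraic and needs no monotonicity. Work over the true normalizer $S$, and set $\bar S:=\sum_j w_j^{(\bar\beta)}$. Split
\[
\sum_i l_i w_i^{(\hat\beta^{\backslash z_i})}=\sum_i l_i w_i^{(\bar\beta)}+\sum_i l_i\big(w_i^{(\hat\beta^{\backslash z_i})}-w_i^{(\bar\beta)}\big).
\]
Bound the first sum by $\alpha\bar S$ using the oracle step, which applies because $\bar\beta\le\beta^*$ is bag-constant. Then substitute $\alpha\bar S=\alpha S-\alpha\sum_i\big(w_i^{(\hat\beta^{\backslash z_i})}-w_i^{(\bar\beta)}\big)$. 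This absorbs the normalizer mismatch termwise and gives
\begin{align*}
\mathbb{E}_{\pi_{0:t}^{(\hat\beta)}}\big[L_t \mid \lbag Z_{0:t}\rbag = \lbag z_{0:t}\rbag\big] \le \alpha + \sum_{i=0}^t (l_i-\alpha)\Big(1 - \frac{\pi_t^{(\bar\beta)}(z_i)}{\pi_t^{(\hat\beta^{\backslash z_i})}(z_i)}\Big)\, p_i .
\end{align*}
Here the ratio of unnormalized weights has collapsed to a single policy-density ratio, by the split-conformal symmetry you already noted, so it lies in $[r_i^{-K},r_i^{K}]$. A case split on the sign of $l_i-\alpha$, using $1-r^{-K}\le r^K-1$, bounds each summand by $\max(\alpha,B-\alpha)(r_i^K-1)$. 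Because the average is now taken with the true conditional probabilities $p_i$, it is exactly the bag-conditional expectation in the stability definition with $i=j=t$. Marginalizing over the bag then gives the theorem.
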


As in Theorem \ref{thm:gcrc_nonmonotonic}, the guarantee is the target level, $\alpha$, plus a term that shrinks when the selection of the control parameter, $\hat\beta$, is more stable to replace-one changes in the calibration data, and when $\pi_t^{(\beta)}(z)$ changes more smoothly with respect to changes in $\beta$ (for all $z$). 

\vspace{0.5cm}

\begin{theorem}[Restatement of Theorem \ref{thm:cpc}: Conformal policy control with likelihood-ratio bound $\beta$ requires no smoothness assumption]
\label{cor:cpc_free}
    Assume that $\mathbb{E}_{\pi_0}[L_0]\leq \alpha$,  $L_i\in[0,B], B<\infty$ for all $i$, and that $\pi_0, \pi_1, ..., \{\pi_t^{(\beta)}\}_{\beta\in \mathcal{B}}$ are mutually absolutely continuous for all $\beta\in\mathcal{B}$. Assume that $\hat\beta$ (Eq. \eqref{eq:beta_hat_def_methods}) exists almost surely, for the likelihood-ratio control parameter as in Eq. \eqref{appeq:piecewise_constraint_def}. If  $\hat\beta$ is $\epsilon^{(1)}$-relative replace-one stable, then
    \begin{align*}
        \mathbb{E}_{\pi_{0:t}^{(\hat\beta)}}\big[L_{t} \big] \; \leq \; \alpha + \max(\alpha,\,B-\alpha)\,\epsilon^{(1)}.
    \end{align*}
\end{theorem}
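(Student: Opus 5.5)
The plan is to obtain Theorem \ref{cor:cpc_free} as a corollary of Theorem \ref{thm:cpc_general} with $K=1$. With $K=1$ the $\epsilon^{(K)}$-relative replace-one stability of Definition \ref{def:rel_replace_one_stability} is literally the $\epsilon^{(1)}$ condition in the statement: $\max(a/b,b/a)-1=|a-b|/\min(a,b)$. Every other hypothesis (safe $\pi_0$, bounded losses, mutual absolute continuity, almost-sure existence of $\hat\beta$) is shared verbatim. Two things remain to check.

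\textbf{Safe endpoint.} Theorem \ref{thm:cpc_general} is proved under a safe element $\beta_{\min}=\inf\mathcal{B}\in\mathcal{B}$ with $\pi_t^{(\beta_{\min})}=\pi_0$. Since $\beta_{\min}\le\pi_t/\pi_0$ everywhere, we have $\min(\pi_t,\beta_{\min}\pi_0)=\beta_{\min}\pi_0$. Normalizing this gives exactly $\pi_0$, so $\pi_t^{(\beta_{\min})}=\pi_0$ and, restricting $\mathcal{B}\subseteq[\beta_{\min},\infty)$, this hypothesis holds.

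\textbf{Smoothness: the 1-log-Lipschitz property (Lemma \ref{lem:loglip}).} Fix $x$ and write $u_\beta(a)=\min(\pi_t(a\mid x),\beta\pi_0(a\mid x))$ and $\psi_\beta=\int u_\beta$. Take $\beta\le\beta'$.
\begin{itemize}
\item \emph{Unnormalized numerator.} Pointwise, $u_\beta\le u_{\beta'}\le(\beta'/\beta)u_\beta$. The second inequality holds in both cases: if $u_{\beta'}=\pi_t$ it is at most $(\beta'/\beta)\pi_t$, and if $u_{\beta'}=\beta'\pi_0$ it equals $(\beta'/\beta)\beta\pi_0$. Since $\beta'/\beta\ge1$, both cases are bounded by $(\beta'/\beta)\min(\pi_t,\beta\pi_0)$. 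Hence $0\le\log u_{\beta'}-\log u_\beta\le\log(\beta'/\beta)$.
\item \emph{Normalizer.} Integrating the same sandwich gives $0\le\log\psi_{\beta'}-\log\psi_\beta\le\log(\beta'/\beta)$.
\item \emph{Combining.} $\log\pi_t^{(\beta)}=\log u_\beta-\log\psi_\beta$ is a difference of two quantities, each lying in $[0,\log(\beta'/\beta)]$. Their difference therefore lies in $[-\log(\beta'/\beta),\log(\beta'/\beta)]$, which is $K=1$.
\end{itemize}
Here we take the difference directly rather than adding two bounds, which would give $K=2$. The joint-density smoothness used inside Theorem \ref{thm:cpc_general} passes through, since only the last factor $\pi_t^{(\beta)}$ depends on $\beta$.

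\textbf{Main obstacle.} The delicate point is precisely this constant: obtaining $K=1$ rather than $2$ is what makes the bound match the stated $\epsilon^{(1)}$. I would also double-check that Theorem \ref{thm:cpc_general}'s proof uses smoothness only through ratios $\pi_t^{(\beta')}/\pi_t^{(\beta)}\le(\beta'/\beta)^K$ on the conformal weights. These ratios give weight ratios bounded by $\max(\beta'/\beta,\beta/\beta')^K$, and the $\max(\alpha,B-\alpha)$ factor arises from centering the losses at $\alpha$.

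With both conditions verified, Theorem \ref{thm:cpc_general} with $K=1$ yields $\mathbb{E}_{\pi_{0:t}^{(\hat\beta)}}[L_t]\le\alpha+\max(\alpha,B-\alpha)\epsilon^{(1)}$.
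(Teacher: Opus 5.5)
Your proposal is correct and follows the paper's route. The paper also obtains this result by applying Theorem \ref{thm:cpc_general} with $K=1$, using Lemma \ref{lem:loglip} for the $1$-log-Lipschitz property; your sandwich $u_\beta\le u_{\beta'}\le(\beta'/\beta)u_\beta$ and its integrated version for $\psi$ carry the same content as the paper's scaling inequality $\min(c\beta a,b)\ge\min(c,1)\min(\beta a,b)$, and your explicit checks that $\pi_t^{(\beta_{\min})}=\pi_0$ and that the $K=1$ stability condition reduces to $|a-b|/\min(a,b)$ are details the paper states only in passing.
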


\begin{proof} \qquad

For ease of notation, we conduct the proof in an online setting where only one action is taken (or sample is generated) at each policy improvement step. This will allow the indices $t=0, 1, ...$ to denote both the timestep of policy improvement and the timestep of the action/sample, but the result will hold generally for a batch setting too (with the guarantee still holding marginally over the whole procedure). 

We proceed by induction. For $t=0$, we assumed access to a safe initial policy, $\pi_0$, satisfying our risk control criterion. That is, we assume
\begin{align*}
    \mathbb{E}_{Z_0\sim \pi_0}[L_0] \leq \alpha.
\end{align*}
Now, for the induction hypothesis assume that risk control holds at time $t-1$, that is, assume
\begin{align*}
    \mathbb{E}_{Z_{0:t-1}\sim \pi_{0:t-1}^{(\hat\beta_1:\hat\beta_{t-1})}}[L_{t-1}] \leq \alpha.
\end{align*}

We then want to show that risk control also holds at time $t$:
\begin{align*}
    \mathbb{E}_{Z_{0:t}\sim \pi_{0:t}^{(\hat\beta_1:\hat\beta_t)}}[L_{t}] \leq \alpha,
\end{align*}
where note that $\pi_{0:t}^{(\hat\beta_t)}(z_{0:t}) = \pi_{t}^{(\hat\beta_t)}(z_t \mid z_{0:t-1})\pi_{t-1}^{(\hat\beta_1:\hat\beta_{t-1})}(z_{0:t-1})$. Note that at time $t$, we will only consider tuning $\hat\beta_t$, and we treat $\hat\beta_1, ..., \hat\beta_{t-1}$ as previously tuned. Accordingly, to lighten notation, we hereon denote the expectation we wish to bound as $\mathbb{E}_{\pi_{0:t}^{(\hat\beta)}}[L_{t}]:=\mathbb{E}_{Z_{0:t}\sim \pi_{0:t}^{(\hat\beta_1:\hat\beta_t)}}[L_{t}]$, where note that we also let $\hat\beta:=\hat\beta_t$.

Assume there is a safe $\beta_{\min}=\inf\mathcal{B}\in\mathcal{B}$ with $\pi_t^{(\beta_{\min})}=\pi_0$; when $\beta$ is the likelihood-ratio bound control parameter, assume $\beta_{\min}\ll1$ such that $\beta_{\min}\leq \pi_t(a\mid x)/\pi_0(a\mid x)$ for all $a\in \mathcal{A}$, $x\in \mathcal{X}$. 

\vspace{0.5cm}

\textbf{Step 0: Law of total expectation over draw of bag.}

Expanding $\mathbb{E}_{\pi_{0:t}^{(\hat\beta)}}[L_{t}]$ via the law of total expectation over the event $\lbag Z_{0:t} \rbag = \lbag z_{0:t} \rbag$, we have
\begin{align}\label{eq:cpc_step_0}
    \mathbb{E}_{\pi_{0:t}^{(\hat\beta)}}\big[L_{t}\big] = \mathbb{E}\Big[\mathbb{E}_{\pi_{0:t}^{(\hat\beta)}}\Big[L_{t} \mid \lbag Z_{0:t} \rbag = \lbag z_{0:t} \rbag\Big]\Big].
\end{align}
We can thus see that, to prove the desired result, it is sufficient to upper bound $\mathbb{E}_{\pi_{0:t}^{(\hat\beta)}}\big[L_{t} \mid \lbag Z_{0:t} \rbag = \lbag z_{0:t} \rbag\big]$ almost surely given any draw of the bag, $\lbag Z_{0:t}\rbag$.

\vspace{0.5cm}

\textbf{Step 1: Deterministic inequalities on bag-conditional risks.}

Define the following family of hyperparameters:
\begin{align*}
    \hat{\beta}(\lbag l_{1:m} \rbag, \tilde{w}_{1:m}^{(\cdot)},  \gamma) = \sup \Big\{\beta' \in \mathcal{B} \quad : \quad \forall \qquad \beta \leq \beta', \qquad \sum_{i=1}^ml_i\cdot \tilde{w}_i^{(\beta)} \leq \gamma \Big\}.
\end{align*}
When the set is empty, define $\hat{\beta}(\lbag l_{1:m} \rbag, \tilde{w}_{1:m}^{(\cdot)},  \gamma)=\beta_{\min}$.

By the continuity of $\tilde{w}_{i}^{(\cdot)}$ (and $L_i(\cdot)$) for all $i\in \mathbb{N}$, the sum $\sum_{i=1}^ml_i\cdot \tilde{w}_i^{(\beta)}$ is left-continuous for any $Z_{1:m}$, which gives us the following deterministic inequality for any $\gamma$:
\begin{align}
    \label{eq:cpc_step_1}
    \sum_{i=1}^ml_i\cdot \tilde{w}_i^{(\beta)} \leq \gamma \qquad \forall \qquad \beta \leq \hat{\beta}(\lbag L_{1:m} \rbag, \tilde{w}_{1:m}^{(\cdot)},  \gamma).
\end{align}

\vspace{1cm}

\textbf{Step 2: Showing an oracle procedure deterministically controls the bag-conditional risk.}

Using the family of thresholds we defined above and applying it to the bag of loss functions for both the calibration set and the test point, define the oracle threshold as
\begin{align}\label{eq:cpc_oracle_lambda_def}
    \beta^* := \hat{\beta}(\lbag L_{0:t} \rbag, \tilde{w}_{0:t}^{(\cdot)},  \alpha).
\end{align}

Now, consider the quantity $\mathbb{E}_{\pi_{0:t}^{(\beta^*)}}\big[L_{t} \mid \lbag Z_{0:t} \rbag = \lbag z_{0:t} \rbag\big]$, which we can view as the bag-conditional oracle risk. That is, we can interpret this quantity as the expected value, with respect to the policy $\pi_{0:t}^{(\beta^*)}$, that the random variable $L_{t}$ will take on, given the bag of data $\lbag z_{0:t} \rbag$. Note that conditioning on the bag of data, $\lbag z_{0:t} \rbag$, implies the event of observing the bag of loss functions, $\lbag l_{0:t} \rbag$,\footnote{That is, implies the event $\lbag L_{1}, ..., L_{t}\rbag = \lbag l_1, ..., l_{t} \rbag$, because the loss functions are defined as $l_i:=g(z_i)$ for all $i\in \mathbb{N}$.} which thereby determines the value of $\beta^*$, since $\beta^*$ is a function of $\lbag z_{0:t}\rbag$ (as well as the policy, $\pi_{0:t}^{(\cdot)}$, and the constraint function $g$). 

Using the definition of conditional expectation, and secondly using the definition of conditional probability, we have
\begin{align*}
    \mathbb{E}_{\pi_{0:t}^{(\beta^*)}}\big[L_{t} \mid \lbag Z_{0:t} \rbag = \lbag z_{0:t} \rbag\big] & = \sum_{i=0}^{t}l_i\cdot \mathbb{P}\big(Z_{t} = z_{i} \mid \lbag Z_{0:t} \rbag = \lbag z_{0:t} \rbag \big) \\
    & = \sum_{i=0}^{t}l_i\cdot \frac{\mathbb{P}\big(Z_{t}=z_{i}, \ \lbag Z_{0:t} \rbag = \lbag z_{0:t} \rbag \big)}{\mathbb{P}\big(\lbag Z_{0:t} \rbag = \lbag z_{0:t} \rbag\big)}.
\end{align*}

Applying the law of total probability over permutations ($\sigma:\{0, ..., t\}\rightarrow\{0, ..., t\}$) to the probabilities in the numerator and denominator of the RHS, we have
\begin{align*}
    \mathbb{E}_{\pi_{0:t}^{(\beta^*)}}\big[L_{t} \mid \lbag Z_{0:t} \rbag =  \lbag z_{0:t} \rbag\big]
    & = \sum_{i=0}^{t}l_i\cdot \frac{\sum_{\sigma:\sigma(t)=i}\pi_{0:t}^{(\beta^*)}(z_{\sigma(0)}, ..., z_{\sigma(t)})}{\sum_{\sigma}\pi_{0:t}^{(\beta^*)}(z_{\sigma(0)}, ..., z_{\sigma(t)})} \\
    & = \sum_{i=0}^{t}l_i\cdot \tilde{w}_i^{(\beta^*)}
\end{align*}

Now, using Eq. \eqref{eq:cpc_step_1}, 
\begin{align}\label{eq:cpc_step_2}
    &\sum_{i=0}^{t}l_i\cdot \tilde{w}_i^{(\beta)}\leq \alpha \qquad \forall \qquad \text{fixed } \beta \leq \beta^*,
\end{align}
where here by ``fixed'' we mean that $\beta$ is constant conditional on the bag of data, $\lbag Z_{0:t} \rbag = \lbag z_{0:t} \rbag$.

\vspace{1cm}

\textbf{Step 3: Relating the empirically-selected hyperparameter to the oracle parameter}

In Step 2, we have shown that an oracle procedure deterministically controls the risk, conditional on the event $\lbag Z_{0:t} \rbag = \lbag z_{0:t} \rbag$. However, this oracle procedure cannot be implemented in practice due to requiring knowledge of the test point loss, $L_{t}$, as well as the weight assigned to the test point (as a function of $\beta$), $\tilde{w}_{t}^{\cdot}$; that is, recall we defined $\beta^* := \hat{\beta}(\lbag L_{0:t} \rbag, \tilde{w}_{0:t}^{(\cdot)},  \alpha)$. In this next step of the proof we would now like to prove an almost-sure relationship between the oracle parameter, $\beta^*$, and the empirically-selected parameter, $\hat{\beta}:=\hat{\beta}(\lbag L_{0:t-1}, B \rbag, \tilde{w}_{0:t-1}^{(\cdot)}, \tilde{w}_{\max}^{(\cdot)},  \alpha)$, to conclude valid risk control for the empirical method; that is, we would like to show that $\beta^* \geq \hat{\beta}$ almost surely.

To begin, recall that in Step 2, conditioning on the event $\lbag Z_{0:t} \rbag = \lbag z_{0:t} \rbag$ implied that $\hat{\beta}(\lbag L_{0:t} \rbag, \tilde{w}_{0:t}^{(\cdot)},  \alpha) = \hat{\beta}(\lbag l_{0:t} \rbag, \tilde{w}_{0:t}^{(\cdot)},  \alpha)$. For the empirical parameter, however, conditioning on $\lbag Z_{0:t} \rbag = \lbag z_{0:t} \rbag$ does \textit{not} imply what may seem analogous: that is, it could be that $\hat{\beta}(\lbag L_{0:t-1}, B \rbag, \tilde{w}_{0:t-1}^{(\cdot)}, \tilde{w}_{\max}^{(\cdot)}, \alpha) \neq \hat{\beta}(\lbag l_{0:t-1}, B \rbag, \tilde{w}_{0:t-1}^{(\cdot)}, \tilde{w}_{\max}^{(\cdot)}, \alpha)$. This is because the event $\lbag Z_{0:t} \rbag = \lbag z_{0:t} \rbag$ only allows us to conclude that the $t$ random variables in $\lbag L_{0:t-1}\rbag$ take $t$ distinct values from the bag of $t+1$ observed losses, $\lbag l_{0:t}\rbag$, but we do not yet know \textit{which} observed losses. That is, what we \textit{can} conclude is, given $\lbag Z_{0:t} \rbag = \lbag z_{0:t} \rbag$, that
\begin{align*}
    \lbag Z_{0:t} \rbag = \lbag z_{0:t} \rbag \implies \lbag L_{0:t-1} \rbag \in \Big\{\lbag l_{0:t\backslash i} \rbag \Big\}_{i\in \{0, ..., t\}},
\end{align*}
where $\lbag l_{0:t\backslash i} \rbag := \lbag l_{0:t} \rbag \backslash \{l_i\}$. Accordingly, conditioning on the event $\lbag Z_{0:t} \rbag = \lbag z_{0:t} \rbag$ implies the following equivalence:
\begin{align}\label{eq:cpc_oracle_empirical_inequality_equivalences}
    \beta^* \geq \hat{\beta}(\lbag L_{0:t-1}, B \rbag, \tilde{w}_{0:t-1}^{(\cdot)}, \tilde{w}_{\max}^{(\cdot)}, \alpha) \ \text{almost surely} \ & \iff \beta^* \geq \hat{\beta}(\lbag l_{0:t\backslash i}, B \rbag, \tilde{w}_{0:t-1}^{(\cdot)}, \tilde{w}_{\max}^{(\cdot)}, \alpha) \quad \forall \quad i \in \{0, ..., t\}.
\end{align}
So, we will focus on proving the latter.

Recall that we have assumed that $l$ is bounded above by $B$, and note that $\tilde{w}_i^{(\beta)} \leq \tilde{w}_{\max}^{(\beta)}$ for any $\beta\in \mathbb{R}$ (by the definition of $\tilde{w}_{\max}^{(\cdot)}$), so almost surely we have
\begin{align}\label{eq:cpc_step_3_conservative_adjustment}
    \sum_{j\in\{0, ..., t\}}l_j\cdot\tilde{w}_{j}^{(\beta)}
    \quad \leq \quad  B\cdot \tilde{w}_{\max}^{(\beta)} + \sum_{j\in\{0, ..., t\}\backslash i}l_j\cdot\tilde{w}_{j}^{(\beta)}  \qquad \forall \qquad i \in \{0, ..., t\},
\end{align}

where the right-hand side of the above inequalities can be interpreted as a conservative risk obtained by replacing $l_i\cdot \tilde{w}_i^{(\beta)}$ with the weighted bound $B\cdot \tilde{w}_{\max}^{(\beta)}$.

Thus, for any $\beta$, we can use \eqref{eq:cpc_step_3_conservative_adjustment} to conclude
\begin{align}\label{eq:cpc_step_3_risk_relation}
    B\cdot \tilde{w}_{\max}^{(\beta)} + \sum_{j\in\{0, ..., t\}\backslash i}l_j\cdot\tilde{w}_{j}^{(\beta)} \leq \alpha \implies & \sum_{j\in\{0, ..., t\}}l_j\cdot\tilde{w}_{j}^{(\beta)} \leq \alpha \qquad \forall \qquad i \in \{0, ..., t\}.
\end{align} 

So, by Eq. \eqref{eq:cpc_step_3_risk_relation} and the definition of $\hat{\beta}(\lbag l_{0:t\backslash i}, B \rbag)$, it must hold that 
\begin{align}\label{eq:cpc_pf_step_3_result}
    \beta^* \geq & \ \hat{\beta}(\lbag l_{0:t\backslash i}, B \rbag, \tilde{w}_{0:t-1}^{(\cdot)}, \tilde{w}_{\max}^{(\cdot)}, \alpha) \quad \forall \quad i \in \{0, ..., t\}.
\end{align}

\vspace{1cm}

\textbf{Step 4: Analyzing the bag-conditional risk of the empirical procedure}

Now we turn to analyzing the risk associated with the empirical algorithm's hyperparameter when plugged into the bag-conditional risk. 
This analysis is necessary because $\hat\beta=\hat{\beta}(\lbag L_{0:t-1}, B \rbag, \tilde{w}_{0:t-1}^{(\cdot)}, \tilde{w}_{\max}^{(\cdot)}, \alpha)$ is not constant conditional on $\lbag Z_{0:t} \rbag = \lbag z_{0:t} \rbag$, so Eq. \eqref{eq:cpc_step_2} would not directly apply. 

Begin by expanding the quantity $\mathbb{E}_{\pi_{0:t}^{(\hat\beta)}}\Big[L_{t} \mid \lbag Z_{0:t} \rbag = \lbag z_{0:t} \rbag\Big]$ by LOTE over the event $Z_{t}=z_i$:
\begin{align}\label{eq:cpc_empirical_bag_cond_LOTE}
    \mathbb{E}_{\pi_{0:t}^{(\hat\beta)}}\Big[&L_{t} \mid \lbag Z_{0:t} \rbag = \lbag z_{0:t} \rbag\Big] \nonumber \\
    & = \mathbb{E}_{\pi_{0:t}^{(\hat\beta)}}\Big[\mathbb{E}_{\pi_{0:t}(\hat{\beta}(\lbag L_{0:t-1} \rbag))}\Big[L_{t} \mid \lbag Z_{0:t} \rbag = \lbag z_{0:t} \rbag, Z_{t}=z_i\Big] \mid \lbag Z_{0:t} \rbag = \lbag z_{0:t} \rbag\Big] \nonumber \\
    & = \sum_{i=0}^{t}\mathbb{E}_{\pi_{0:t}^{(\hat\beta)}}\Big[L_{t} \mid \lbag Z_{0:t} \rbag = \lbag z_{0:t} \rbag, Z_{t}=z_i\Big]\cdot \mathbb{P}_{\pi_{0:t}^{(\hat\beta)}}\big(Z_{t}=z_i \mid \lbag Z_{0:t} \rbag = \lbag z_{0:t} \rbag\big) \nonumber \\
    & = \sum_{i=0}^{t}l_i\cdot \mathbb{P}_{\pi_{0:t}^{(\hat\beta)}}\big(Z_{t}=z_i \mid \lbag Z_{0:t} \rbag = \lbag z_{0:t} \rbag\big).
\end{align}

Writing out the conditional probability in Eq. \eqref{eq:cpc_empirical_bag_cond_LOTE} using the definition of conditional probability, and then the law of total probability over $\sigma$ and the chain rule (in both the numerator and denominator),
\begin{align*}
    & \mathbb{P}_{\pi_{0:t}^{(\hat\beta)}}\big(Z_{t}=z_i \mid \lbag Z_{0:t} \rbag = \lbag z_{0:t} \rbag\big) \\
    & = \frac{\sum_{\sigma:\sigma(t)=i}\pi_{0:t}^{(\hat\beta)}(z_{\sigma(0)}, ..., z_{\sigma(t)})}{\sum_{\sigma}\pi_{0:t}^{(\hat\beta)}(z_{\sigma(0)}, ..., z_{\sigma(t)})} \\
    & = \frac{\sum_{\sigma:\sigma(t)=i}\pi_{t\mid 0:t-1}^{(\hat\beta)}(z_{\sigma(t)} \mid z_{\sigma(0)}, ..., z_{\sigma(t-1)})\cdot \pi_{0:t-1}(z_{\sigma(0)}, ..., z_{\sigma(t-1)})}{\sum_{\sigma}\pi_{t\mid 0:t-1}^{(\hat\beta)}(z_{\sigma(t)} \mid z_{\sigma(0)}, ..., z_{\sigma(t-1)})\cdot \pi_{0:t-1}(z_{\sigma(0)}, ..., z_{\sigma(t-1)})},
\end{align*}
and by expanding the summation in the denominator, this is equivalent to 
\begin{align*}
    & \mathbb{P}_{\pi_{0:t}^{(\hat\beta)}}\big(Z_{t}=z_i \mid \lbag Z_{0:t} \rbag = \lbag z_{0:t} \rbag\big) \\
    & = \frac{\sum_{\sigma:\sigma(t)=i}\pi_{t\mid 0:t-1}^{(\hat\beta)}(z_{\sigma(t)} \mid z_{\sigma(0)}, ..., z_{\sigma(t-1)})\cdot \pi_{0:t-1}(z_{\sigma(0)}, ..., z_{\sigma(t-1)})}{\sum_{i}\sum_{\sigma:\sigma(t)=i}\pi_{t\mid 0:t-1}^{(\hat\beta)}(z_{\sigma(t)} \mid z_{\sigma(0)}, ..., z_{\sigma(t-1)})\cdot \pi_{0:t-1}(z_{\sigma(0)}, ..., z_{\sigma(t-1)})}.
\end{align*}
In this expanded form, examine the factor $\pi_{t\mid 0:t-1}^{(\hat\beta)}(z_{\sigma(t)} \mid z_{\sigma(0)}, ..., z_{\sigma(t-1)})$, and recalling that $\hat\beta:= \hat{\beta}(\lbag L_{0:t-1}, B \rbag, \tilde{w}_{0:t-1}^{(\cdot)}, \tilde{w}_{\max}^{(\cdot)}, \alpha)$, note that the conditioning on $z_{\sigma(0)}, ..., z_{\sigma(t-1)}$ inside the summation $\sum_{\sigma:\sigma(t)=i}$, where the permutations are $\sigma:\sigma(t)=i$, implies  $Z_t=z_{\sigma(t)}=z_i \implies \lbag L_{0:t-1} \rbag = \lbag l_{0:t\backslash i} \rbag$ (that is, implies a specific realization of $\lbag L_{0:t-1} \rbag$ among $t$ possible bags of observed losses). Accordingly, let us denote $\hat\beta^{\backslash i}:= \hat\beta^{\backslash z_i} =\hat{\beta}(\lbag l_{0:t\backslash i}, B \rbag, \tilde{w}_{0:t-1}^{(\cdot)}, \tilde{w}_{\max}^{(\cdot)}, \alpha)$ (as in Definition \ref{def:rel_replace_one_stability}). Then, note that the numerator and the analogous inner summation in the denominator are the unnormalized weights in Eq. \eqref{eq:cp_weights_def_cpc_unnormed}, that is, we can concisely write those summations as $\sum_{\sigma:\sigma(t)=i}(\cdots) = w_i^{(\beta^{\backslash i})}$. That is, we have
\begin{align}
    & \mathbb{P}_{\pi_{0:t}^{(\hat\beta)}}\big(Z_{t}=z_i \mid \lbag Z_{0:t} \rbag = \lbag z_{0:t} \rbag\big)  = \frac{w_i^{(\beta^{\backslash i})}}{\sum_{j}w_j^{(\beta^{\backslash j})}}.
\end{align}

Plugging this into Eq. \eqref{eq:cpc_empirical_bag_cond_LOTE}, we have
\begin{align}\label{eq:empirical_risk_bag_conditional}
    \mathbb{E}_{\pi_{0:t}^{(\hat\beta)}}\Big[L_{t} \mid \lbag Z_{0:t} \rbag = \lbag z_{0:t} \rbag\Big] = 
    \sum_{i=0}^{t}l_i\cdot \frac{w_i^{(\beta^{\backslash i})}}{\sum_{j}w_j^{(\beta^{\backslash j})}} = \frac{1}{\sum_{j}w_j^{(\beta^{\backslash j})}}\sum_{i=0}^{t}l_i\cdot w_i^{(\beta^{\backslash i})}
\end{align}

Importantly, note that in Eq. \eqref{eq:empirical_risk_bag_conditional}, which can be viewed as the true bag-conditional risk of the empirical algorithm, that the
unnormalized weight in the numerator, $w_i^{(\beta^{\backslash i})}$, differs for each loss function, $l_i$, in the summation, and moreover the same is true in the denominator's summation, 
which is why we cannot apply Eq. \eqref{eq:cpc_step_2} to bound it directly.

Instead, we now move toward relating Eq. \eqref{eq:empirical_risk_bag_conditional} and Eq. \eqref{eq:cpc_step_2} using smoothness and stability conditions. However, although in the analogous step assuming  parameterized losses (Section \ref{app:gcrc_proof_nonmonotonic_1dim}) we were able to directly apply Lipschitz continuity at this stage, here we must be more careful. That is, although in Section \ref{app:gcrc_proof_nonmonotonic_1dim} we could conservatively assume that each replace-one perturbation $\epsilon_i$ increased its corresponding loss by at most $K\epsilon_i$, a subtle difference here is that if we artificially increased
the unnormalized weight of certain (e.g., smaller-loss) indices by some amount, this may \textit{decrease} the overall bag-conditional weighted risk after re-normalizing the probability weights, and thus not result in a valid upper bound. In other words, conservative estimates of losses monotonically result in conservative estimates of the risk (in Section \ref{app:gcrc_proof_nonmonotonic_1dim}), but here, conservative estimates of (unnormalized) weights introduce an additional source of non-monotonicity that may not result in a conservative estimate of the weighted (bag-conditional) risk.

Making the substitution $w_i^{(\hat\beta^{\backslash i})} = w_i^{(\hat\beta^{\backslash i})} + (w_i^{(\hat\beta^{\backslash t})} - w_i^{(\hat\beta^{\backslash t})})$ in Eq. \eqref{eq:empirical_risk_bag_conditional} and then rearranging terms, we have

\begin{align*}
    \mathbb{E}_{\pi_{0:t}^{(\hat\beta)}}\Big[L_{t} \mid \lbag Z_{0:t} \rbag = \lbag z_{0:t} \rbag\Big] 
    = & \frac{1}{\sum_{j}w_j^{(\hat\beta^{\backslash j})}}\sum_{i=0}^{t}l_i\cdot w_i^{(\hat\beta^{\backslash t})} +  (l_i\cdot w_i^{(\hat\beta^{\backslash i})} - l_i\cdot w_i^{(\hat\beta^{\backslash t})}) \nonumber \\
    = & \frac{1}{\sum_{j}w_j^{(\hat\beta^{\backslash j})}}\sum_{i=0}^{t}l_i\cdot w_i^{(\hat\beta^{\backslash t})} +  \frac{1}{\sum_{j}w_j^{(\hat\beta^{\backslash j})}}\sum_{i=0}^{t}(l_i\cdot w_i^{(\hat\beta^{\backslash i})} - l_i\cdot w_i^{(\hat\beta^{\backslash t})}).
\end{align*}

Now multiplying the first term by $1 = \frac{\sum_{j}w_j^{(\hat\beta^{\backslash t})}}{\sum_{j}w_j^{(\hat\beta^{\backslash t})}}$,
\begin{align*}
    \mathbb{E}_{\pi_{0:t}^{(\hat\beta)}}\Big[L_{t} \mid \lbag Z_{0:t} \rbag = \lbag z_{0:t} \rbag\Big] 
    = & \frac{\sum_{j}w_j^{(\hat\beta^{\backslash t})}}{\sum_{j}w_j^{(\hat\beta^{\backslash t})}}\frac{1}{\sum_{j}w_j^{(\hat\beta^{\backslash j})}}\sum_{i=0}^{t}l_i\cdot w_i^{(\hat\beta^{\backslash t})} +  \frac{1}{\sum_{j}w_j^{(\hat\beta^{\backslash j})}}\sum_{i=0}^{t}(l_i\cdot w_i^{(\hat\beta^{\backslash i})} - l_i\cdot w_i^{(\hat\beta^{\backslash t})}) \\
    = & \frac{\sum_{j}w_j^{(\hat\beta^{\backslash t})}}{\sum_{j}w_j^{(\hat\beta^{\backslash j})}}\sum_{i=0}^{t}\frac{l_i\cdot w_i^{(\hat\beta^{\backslash t})}}{\sum_{j}w_j^{(\hat\beta^{\backslash t})}} +  \frac{1}{\sum_{j}w_j^{(\hat\beta^{\backslash j})}}\sum_{i=0}^{t}(l_i\cdot w_i^{(\hat\beta^{\backslash i})} - l_i\cdot w_i^{(\hat\beta^{\backslash t})}),
\end{align*}

Then by Eq. \eqref{eq:cpc_step_2}, we know that $\sum_{i=0}^{t}\frac{l_i\cdot w_i^{(\hat\beta^{\backslash t})}}{\sum_{j}w_j^{(\hat\beta^{\backslash t})}} \leq \alpha$ (note that $\hat\beta^{\backslash t}$ is fixed with respect to the summations in the term), so
\begin{align}\label{eq:risk_bound_alpha_sub}
    \mathbb{E}_{\pi_{0:t}^{(\hat\beta)}}\Big[L_{t} \mid \lbag Z_{0:t} \rbag = \lbag z_{0:t} \rbag\Big] 
    \leq & \ \frac{\sum_{j}w_j^{(\hat\beta^{\backslash t})}}{\sum_{j}w_j^{(\hat\beta^{\backslash j})}}\cdot\alpha +  \frac{1}{\sum_{j}w_j^{(\hat\beta^{\backslash j})}}\sum_{i=0}^{t}(l_i\cdot w_i^{(\hat\beta^{\backslash i})} - l_i\cdot w_i^{(\hat\beta^{\backslash t})}).
\end{align}

Plugging in $w_j^{(\hat\beta^{\backslash t})} = w_j^{(\hat\beta^{\backslash t})} + (w_j^{(\hat\beta^{\backslash j})} - w_j^{(\hat\beta^{\backslash j})})$ into the numerator of the first term of the RHS of \eqref{eq:risk_bound_alpha_sub} and rearranging terms,

\begin{align}\label{eq:cpc_pf_pre_branch_point}
    \frac{\sum_{j}w_j^{(\hat\beta^{\backslash t})}}{\sum_{j}w_j^{(\hat\beta^{\backslash j})}}\cdot\alpha & +  \frac{1}{\sum_{j}w_j^{(\hat\beta^{\backslash j})}}\sum_{i=0}^{t}(l_i\cdot w_i^{(\hat\beta^{\backslash i})} - l_i\cdot w_i^{(\hat\beta^{\backslash t})}) \nonumber \\
    = & \frac{\sum_{j}(w_j^{(\hat\beta^{\backslash t})} + w_j^{(\hat\beta^{\backslash j})} - w_j^{(\hat\beta^{\backslash j})})}{\sum_{j}w_j^{(\hat\beta^{\backslash j})}}\cdot\alpha +  \frac{1}{\sum_{j}w_j^{(\hat\beta^{\backslash j})}}\sum_{i=0}^{t}(l_i\cdot w_i^{(\hat\beta^{\backslash i})} - l_i\cdot w_i^{(\hat\beta^{\backslash t})}) \nonumber \\
    = & \ \alpha + \frac{\sum_{i=0}^{t}\alpha\cdot w_i^{(\hat\beta^{\backslash t})} - \alpha\cdot w_i^{(\hat\beta^{\backslash i})} + l_i\cdot w_i^{(\hat\beta^{\backslash i})} - l_i\cdot w_i^{(\hat\beta^{\backslash t})}}{\sum_{j}w_j^{(\hat\beta^{\backslash j})}} \nonumber \\
    = & \ \alpha + \frac{\sum_{i=0}^{t}l_i\cdot (w_i^{(\hat\beta^{\backslash i})} - w_i^{(\hat\beta^{\backslash t})}) - \alpha\cdot (w_i^{(\hat\beta^{\backslash i})} -  w_i^{(\hat\beta^{\backslash t})})}{\sum_{j}w_j^{(\hat\beta^{\backslash j})}} \nonumber \\
    = & \ \alpha + \frac{\sum_{i=0}^{t}(l_i - \alpha)\cdot (w_i^{(\hat\beta^{\backslash i})} -  w_i^{(\hat\beta^{\backslash t})})}{\sum_{j}w_j^{(\hat\beta^{\backslash j})}} 
\end{align}

By the absolute continuity of $\pi_{0}, \pi_1^{(\hat\beta_1)}, ..., \{\pi_t^{(\beta)}\}_{\beta\in\mathcal{\beta}}$ for all $\beta \in \mathcal{B}$, $w_i^{(\hat\beta^{\backslash t})}=0 \iff w_i^{(\hat\beta^{\backslash i})}=0$. When $w_i^{(\hat\beta^{\backslash t})}= w_i^{(\hat\beta^{\backslash i})}=0$ the bound holds trivially, so we focus on the case where $w_i^{(\hat\beta^{\backslash i})}>0$, which allows us to write

\begin{align}\label{eq:cpc_pf_branch_point}
    \frac{\sum_{j}w_j^{(\hat\beta^{\backslash t})}}{\sum_{j}w_j^{(\hat\beta^{\backslash j})}}\cdot\alpha +  \frac{1}{\sum_{j}w_j^{(\hat\beta^{\backslash j})}}\sum_{i=0}^{t}(l_i\cdot w_i^{(\hat\beta^{\backslash i})} - l_i\cdot w_i^{(\hat\beta^{\backslash t})}) & = \ \alpha + \frac{\sum_{i=0}^{t}(l_i - \alpha)\cdot \big(1 -  \frac{w_i^{(\hat\beta^{\backslash t})}}{w_i^{(\hat\beta^{\backslash i})}}\big)\cdot w_i^{(\hat\beta^{\backslash i})}}{\sum_{j}w_j^{(\hat\beta^{\backslash j})}} \nonumber \\
    & = \ \alpha + \sum_{i=0}^{t}(l_i - \alpha)\cdot \big(1 -  \frac{w_i^{(\hat\beta^{\backslash t})}}{w_i^{(\hat\beta^{\backslash i})}}\big)\cdot \tilde{w}_i^{(\hat\beta^{\backslash i})}
\end{align}

Let us pause to examine the term $\big(1 -  \frac{w_i^{(\hat\beta^{\backslash t})}}{w_i^{(\hat\beta^{\backslash i})}}\big)$ in Eq. \eqref{eq:cpc_pf_branch_point}. By definition of $w_i^{(\beta)}$ and the chain rule of probability, 

\begin{align*}
    1 -  \frac{w_i^{(\hat\beta^{\backslash t})}}{w_i^{(\hat\beta^{\backslash i})}} & = 1 - \frac{\sum_{\sigma:\sigma(t)=i}\pi_{0:t}^{(\hat\beta^{\backslash t})}(z_{\sigma(0)}, ..., z_{\sigma(t)})}{\sum_{\sigma:\sigma(t)=i}\pi_{0:t}^{(\hat\beta^{\backslash i})}(z_{\sigma(0)}, ..., z_{\sigma(t)})} \\
    & = 1 - \frac{\sum_{\sigma:\sigma(t)=i}\pi_{t\mid 0:t-1}^{(\hat\beta^{\backslash t})}(z_{\sigma(t)} \mid z_{\sigma(0)}, ..., z_{\sigma(t-1)})\cdot \pi_{0:t-1}( z_{\sigma(0)}, ..., z_{\sigma(t-1)})}{\sum_{\sigma:\sigma(t)=i}\pi_{t\mid 0:t-1}^{(\hat\beta^{\backslash i})}(z_{\sigma(t)} \mid z_{\sigma(0)}, ..., z_{\sigma(t-1)})\cdot \pi_{0:t-1}( z_{\sigma(0)}, ..., z_{\sigma(t-1)})}.
\end{align*}

In the split conformal setting we are concerned with (where calibration and training data are disjoint), the constrained policy $\pi_{t\mid 0:t-1}^{(\beta)}$ is symmetric with respect to the calibration data, so for all $\beta$ we have $\pi_{t\mid 0:t-1}^{(\beta)}(z_{\sigma(t)} \mid z_{\sigma(0)}, ..., z_{\sigma(t-1)}) = \pi_{t\mid 0:t-1}^{(\beta)}(z_{i} \mid \lbag z_{0:t}\rbag\backslash z_i)$ for all permutations $\sigma:\sigma(t)=i$ within the summation. With this substitution and abbreviating notation $\pi_t^{(\beta)}(z_{i}):=\pi_{t\mid 0:t-1}^{(\beta)}(z_{i} \mid \lbag z_{0:t}\rbag\backslash z_i)$,

\begin{align*}
    1 -  \frac{w_i^{(\hat\beta^{\backslash t})}}{w_i^{(\hat\beta^{\backslash i})}} 
    & = 1 - \frac{\pi_{t}^{(\hat\beta^{\backslash t})}(z_{i})\cdot\sum_{\sigma:\sigma(t)=i} \pi_{0:t-1}( z_{\sigma(0)}, ..., z_{\sigma(t-1)})}{\pi_{t}^{(\hat\beta^{\backslash i})}(z_{i})\cdot\sum_{\sigma:\sigma(t)=i} \pi_{0:t-1}( z_{\sigma(0)}, ..., z_{\sigma(t-1)})} \\
    & = 1 - \frac{\pi_{t}^{(\hat\beta^{\backslash t})}(z_{i})}{\pi_{t}^{(\hat\beta^{\backslash i})}(z_{i})} 
\end{align*}

Equation \eqref{eq:cpc_pf_branch_point} then becomes

\begin{align*}
    \frac{\sum_{j}w_j^{(\hat\beta^{\backslash t})}}{\sum_{j}w_j^{(\hat\beta^{\backslash j})}}\cdot\alpha & +  \frac{1}{\sum_{j}w_j^{(\hat\beta^{\backslash j})}}\sum_{i=0}^{t}(l_i\cdot w_i^{(\hat\beta^{\backslash i})} - l_i\cdot w_i^{(\hat\beta^{\backslash t})}) =  \ \alpha + \sum_{i=0}^{t}(l_i - \alpha)\cdot \big(1 -  \frac{\pi_t^{(\hat\beta^{\backslash t})}(z_{i})}{\pi_t^{(\hat\beta^{\backslash i})}(z_i)}\big)\cdot \tilde{w}_i^{(\hat\beta^{\backslash i})},
\end{align*}

and combining this with Inequality \eqref{eq:risk_bound_alpha_sub} yields
\begin{align}\label{eq:cpc_pf_branch_point_policy_sub}
    \mathbb{E}_{\pi_{0:t}^{(\hat\beta)}}\Big[L_{t} \mid \lbag Z_{0:t} \rbag = \lbag z_{0:t} \rbag\Big] 
    \leq & \ \alpha + \sum_{i=0}^{t}(l_i - \alpha)\cdot \big(1 -  \frac{\pi_t^{(\hat\beta^{\backslash t})}(z_{i})}{\pi_t^{(\hat\beta^{\backslash i})}(z_i)}\big)\cdot \tilde{w}_i^{(\hat\beta^{\backslash i})}.
\end{align}

\vspace{1cm}

\textbf{Step 5: Applying smoothness assumption to bound relative perturbation terms.}

Recall that by assumption, $\{\pi_t^{(\beta)}\}_{\beta\in\mathcal{B}}$ is $K$-log-Lipschitz in $\log\beta$ for some $K>0$. That is, recalling Definition \ref{def:loglip_family},
\begin{align*}
    \Big|\log \pi_t^{(\beta')}(z)-\log \pi_t^{(\beta)}(z)\Big| \; \leq \; K\,\big|\log\beta'-\log\beta\big|
    \qquad \forall \; \beta,\beta'\in\mathcal{B}, \; z\in\mathcal{Z} .
\end{align*}

Noting the identity $|\log(r)|=\log(\max(r, 1/r))$ for $r>0$, and letting $r = \beta'/\beta$ (recall $\beta'>0,\beta>0$),  we have

\begin{align*}
    \bigg|\log \frac{\pi_t^{(\beta')}(z)}{\pi_t^{(\beta)}(z)}\bigg| \; \leq \; K\cdot\log\Big(\max\Big(\frac{\beta'}{\beta}, \frac{\beta}{\beta'}\Big)\Big)
    \qquad \forall \; \beta,\beta'\in\mathcal{B}, \; z\in\mathcal{Z} .
\end{align*}

Define the multiplicative gap $\kappa_i=\max\big(\hat\beta^{\backslash t}/\hat\beta^{\backslash i},\,\hat\beta^{\backslash i}/\hat\beta^{\backslash t}\big)\geq1$, and letting $\beta'=\hat\beta^{\backslash t}$,  $\beta=\hat\beta^{\backslash i}$ and $z=z_i$, this yields

\begin{align}\label{eq:cpc_ratio_two_sided}
    \bigg|\log \frac{\pi_t^{(\hat\beta^{\backslash t})}(z_i)}{\pi_t^{(\hat\beta^{\backslash i})}(z_i)}\bigg| \; & \leq \; K\log\kappa_i
    \qquad \forall \; i\in\{0, ..., t\} \nonumber \\
    \kappa_i^{-K} \; \leq  \; \frac{\pi_t^{(\hat\beta^{\backslash t})}(z_i)}{\pi_t^{(\hat\beta^{\backslash i})}(z_i)} \; & \leq \; \kappa_i^K
    \qquad \forall \; i\in\{0, ..., t\} \nonumber \\
    1 - \kappa_i^{K} \; \leq  \; 1 - \frac{\pi_t^{(\hat\beta^{\backslash t})}(z_i)}{\pi_t^{(\hat\beta^{\backslash i})}(z_i)} \; & \leq \; 1 - \kappa_i^{-K}
    \qquad \forall \; i\in\{0, ..., t\}.
\end{align}


We now turn to bounding each summand term in the RHS of \eqref{eq:cpc_pf_branch_point_policy_sub}. We treat the two sign cases of $(l_i-\alpha)$ separately, which require opposite sides of \eqref{eq:cpc_ratio_two_sided}.
We will also use the observation that $0 \; \leq \; 1-\kappa_i^{-K} \; \leq \; \kappa_i^{K}-1$ (since $\kappa_i\geq1$, the arithmetic-geometric mean inequality gives $\kappa_i^{K}+\kappa_i^{-K}\geq2$). 

\begin{itemize}
    
\item \emph{Case 1: $l_i\geq\alpha$.}  
Taking the upper bound of \eqref{eq:cpc_ratio_two_sided} and multiplying each side by the nonnegative factor $(l_i-\alpha)$, and by our observation $0 \; \leq \; 1-\kappa_i^{-K} \; \leq \; \kappa_i^{K}-1$, 
\begin{align*}
    (l_i-\alpha)\cdot\bigg(1 - \frac{\pi_t^{(\hat\beta^{\backslash t})}(z_i)}{\pi_t^{(\hat\beta^{\backslash i})}(z_i)}\bigg) \; \leq \; (l_i-\alpha)\cdot\big(1-\kappa_i^{-K}\big) \; \leq \; |l_i-\alpha|\cdot\big(\kappa_i^{K}-1\big) .
\end{align*}

\item \emph{Case 2: $l_i<\alpha$.} Here, note $(l_i-\alpha)(1-\frac{\pi_t^{(\hat\beta^{\backslash t})}(z_i)}{\pi_t^{(\hat\beta^{\backslash i})}(z_i)})=(\alpha-l_i)(\frac{\pi_t^{(\hat\beta^{\backslash t})}(z_i)}{\pi_t^{(\hat\beta^{\backslash i})}(z_i)}-1)$, and from negating the lower bound of \eqref{eq:cpc_ratio_two_sided},
\begin{align*}
    \frac{\pi_t^{(\hat\beta^{\backslash t})}(z_i)}{\pi_t^{(\hat\beta^{\backslash i})}(z_i)} - 1 \leq \kappa_i^{K} - 1.
\end{align*}
Multiplying each side of this bound by the positive factor 
$(\alpha-l_i)=|l_i-\alpha|$, 
and using our observation that $0\leq \kappa_i^K-1$,
\begin{align*}
    (l_i-\alpha)\cdot\bigg(1-\frac{\pi_t^{(\hat\beta^{\backslash t})}(z_i)}{\pi_t^{(\hat\beta^{\backslash i})}(z_i)}\bigg)=(\alpha-l_i)\cdot\bigg(\frac{\pi_t^{(\hat\beta^{\backslash t})}(z_i)}{\pi_t^{(\hat\beta^{\backslash i})}(z_i)}-1\bigg) \; \leq \; |l_i-\alpha|\cdot\,\big(\kappa_i^{K}-1\big) .
\end{align*}
\end{itemize}

So, in both cases, the summand is at most $|l_i-\alpha|$ times $\kappa_i^{K}-1$.  It remains only to bound the coefficient uniformly over the two cases, and since $l_i\in[0,B]$,
\begin{align*}
    |l_i-\alpha| \; \leq \; \max_{l\in[0,B]}|l-\alpha| \; = \; \max(\alpha,\,B-\alpha),
\end{align*}
the maximum being attained at $l=0$ or $l=B$.  Combining, we obtain the pointwise bound
\begin{align}\label{eq:cpc_summand_bound}
    (l_i-\alpha)\cdot\Big(1-\frac{\pi_t^{(\hat\beta^{\backslash t})}(z_{i})}{\pi_t^{(\hat\beta^{\backslash i})}(z_i)}\Big)
    \; \leq \; \max(\alpha,\,B-\alpha)\;\big(\kappa_i^{K}-1\big)
    \qquad \forall \quad i \in\{0,\dots,t\}.
\end{align}

\vspace{0.5cm}

\textbf{Step 6: Marginalizing and applying the relative replace-one stability assumption.}

Substituting \eqref{eq:cpc_summand_bound} into Inequality \eqref{eq:cpc_pf_branch_point_policy_sub}, and using that the normalized conformal weights are nonnegative and satisfy $\sum_{i=0}^{t}\tilde{w}_i^{(\hat\beta^{\backslash i})}=1$,
\begin{align}\label{eq:cpc_rel_bag_conditional}
    \mathbb{E}_{\pi_{0:t}^{(\hat\beta)}}\Big[L_{t} \mid \lbag Z_{0:t} \rbag = \lbag z_{0:t} \rbag\Big]
    \; \leq \; & \ \alpha + \max(\alpha,\,B-\alpha)\sum_{i=0}^{t}\big(\kappa_i^{K}-1\big)\cdot \tilde{w}_i^{(\hat\beta^{\backslash i})} \nonumber \\
    & \; = \; \ \alpha + \max(\alpha,\,B-\alpha)\cdot\mathbb{E}_{\pi_{0:t}^{(\hat\beta)}}\bigg[\max\Big(\frac{\hat\beta^{\backslash Z_t}}{\hat\beta^{\backslash z_t}}, \frac{\hat\beta^{\backslash z_t}}{\hat\beta^{\backslash Z_t}}\Big)^{K}-1 \mid \lbag Z_{0:t}\rbag = \lbag z_{0:t}\rbag\bigg],
\end{align}
where the equality holds because, conditional on the bag, $\hat\beta^{\backslash Z_t}$ takes the value $\hat\beta^{\backslash z_i}$ with probability $\tilde{w}_i^{(\hat\beta^{\backslash i})}$.

Since Inequality \eqref{eq:cpc_rel_bag_conditional} holds almost surely over the draw of the bag, we may marginalize as in Step 0 and apply the $\epsilon^{(K)}$-relative replace-one stability of $\hat\beta$ (Definition \ref{def:rel_replace_one_stability}) to obtain
\begin{align}\label{eq:cpc_pf_marginal_result2}
    \mathbb{E}_{\pi_{0:t}^{(\hat\beta)}}\big[L_{t}\big]
    \; \leq \; & \ \alpha \; + \; \max(\alpha,\,B-\alpha)\,\mathbb{E}_{\pi_{0:t}^{(\hat\beta)}}\bigg[\max\Big(\frac{\hat\beta^{\backslash Z_t}}{\hat\beta^{\backslash z_t}}, \frac{\hat\beta^{\backslash z_t}}{\hat\beta^{\backslash Z_t}}\Big)^{K}-1\bigg]
    \; \leq \; \alpha \; + \; \max(\alpha,\,B-\alpha)\,\epsilon^{(K)} ,
\end{align}
which is the claimed result.  Theorem \ref{cor:cpc_free} follows by applying this with $K=1$ as a corollary, which Lemma \ref{lem:loglip} supplies for the likelihood-ratio bound.

\end{proof}

\vspace{1cm}

\subsubsection{Lemma: Uniform $1$-log-Lipschitz continuity of $\pi_t^{(\beta)}$ in $\log\beta$}

\begin{lemma}[Uniform $1$-log-Lipschitz continuity of normalized clipped densities in $\log\beta$]
\label{lem:loglip}
Let $\pi_0,\pi_t$ be probability densities and, for $\beta>0$, define the clipped
function, its normalizer, and the normalized clipped density by
\[
m_\beta(x)=\min\bigl(\beta\,\pi_0(x),\,\pi_t(x)\bigr),
\qquad
\psi(\beta)=\int m_\beta(x)\,dx,
\qquad
\pi_t^{(\beta)}(x)=\frac{m_\beta(x)}{\psi(\beta)} .
\]
Assume $\psi(\beta)>0$ for the values of $\beta$ considered. Then for all $\beta',\beta>0$
and every $x$ with $\pi_t^{(\beta')}(x),\pi_t^{(\beta)}(x)>0$,
\[
\bigl|\log \pi_t^{(\beta')}(x)-\log \pi_t^{(\beta)}(x)\bigr|
\;\le\;
\bigl|\log\beta'-\log\beta\bigr| ;
\]
that is, $\log\beta\mapsto\log \pi_t^{(\beta)}(x)$ is $1$-Lipschitz, uniformly in $x$. The
constant $1$ is sharp.
\end{lemma}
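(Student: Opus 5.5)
Without loss of generality take $\beta<\beta'$ and write $\rho=\beta'/\beta>1$. I would split $\log \pi_t^{(\beta')}(x)-\log \pi_t^{(\beta)}(x)$ into a numerator part and a normalizer part:
\[
\log\frac{\pi_t^{(\beta')}(x)}{\pi_t^{(\beta)}(x)} \;=\; \log\frac{m_{\beta'}(x)}{m_\beta(x)} \;-\; \log\frac{\psi(\beta')}{\psi(\beta)} .
\]
The plan is to show that each of the two ratios lies in $[1,\rho]$. Then the first log lies in $[0,\log\rho]$ and so does the second, so their difference lies in $[-\log\rho,\log\rho]$. That is exactly the claim. Note the claim uses the difference of two nonnegative quantities each bounded by $\log\rho$, not a sum, which is why the constant is $1$ rather than $2$.

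\textbf{Step 1 (pointwise sandwich).} For every $x$, I would show $m_\beta(x)\le m_{\beta'}(x)\le \rho\, m_\beta(x)$. The left inequality holds because $\beta\pi_0\le\beta'\pi_0$, and $\min$ is monotone in each argument. The right inequality holds because
\[
\min(\beta'\pi_0,\pi_t)=\min(\rho\beta\pi_0,\pi_t)\le \min(\rho\beta\pi_0,\rho\pi_t)=\rho\, m_\beta ,
\]
using $\rho\ge1$. At any $x$ where both normalized densities are positive we have $m_\beta(x)>0$, so the ratio $m_{\beta'}(x)/m_\beta(x)$ lies in $[1,\rho]$.

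\textbf{Step 2 (normalizer).} I would integrate the pointwise sandwich of Step 1 to get $\psi(\beta)\le\psi(\beta')\le\rho\,\psi(\beta)$. This uses $\psi(\beta)>0$ and the fact that $m_\beta\le\pi_t$ is integrable. Hence $\psi(\beta')/\psi(\beta)$ also lies in $[1,\rho]$. Combining Steps 1 and 2 as in the display above gives the bound. The case $\beta'<\beta$ follows by symmetry, since both sides of the inequality are symmetric in $(\beta,\beta')$.

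\textbf{Step 3 (sharpness).} I expect this to be the only step needing care, because a naive bound would add the two logs and get constant $2$. To show $1$ cannot be improved, I would exhibit a point where the numerator ratio is exactly $\rho$ and the normalizer ratio stays near $1$. Take $x$ in a region where $\beta'\pi_0(x)\le\pi_t(x)$, so the clip is active for both parameters and $m_{\beta'}(x)/m_\beta(x)=\rho$. Meanwhile choose $\pi_0,\pi_t$ so that this region carries negligible $\pi_t$-mass, making $\psi(\beta')/\psi(\beta)\to1$. A concrete instance is a two-point space where one point $x_1$ has tiny $\pi_0$-mass but moderate $\pi_t$-mass. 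Then the log-ratio at $x_1$ tends to $\log\rho$, and so no constant $K<1$ works.
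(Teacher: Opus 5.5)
Your proof is correct and is essentially the paper's argument. Your pointwise sandwich $m_\beta\le m_{\beta'}\le(\beta'/\beta)\,m_\beta$ is the paper's scaling inequality applied in both directions, integrated to control $\psi(\beta')/\psi(\beta)$. Your key point, that both factors move in the same direction so their logs partially cancel (giving constant $1$ rather than $2$), is what the paper encodes as $\min(\beta'/\beta,1)\min(\beta/\beta',1)=\min(\beta'/\beta,\beta/\beta')$.

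For sharpness you use the mirror image of the paper's extremal case: the clip is active at $x$ with $\psi$ nearly constant, rather than inactive at $x$ with $\psi\propto\beta$. That works equally well, but your two-point instance must also keep the other point unclipped at $\beta$, for example $\beta=1$, $\beta'=2$, $\pi_t(x_1)=\pi_t(x_2)=1/2$, $\pi_0(x_1)=\epsilon\to0$. Otherwise $\psi$ scales with $\beta$ and the ratio at $x_1$ can collapse to $1$.
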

\begin{proof}
The statement is symmetric in $(\beta',\beta)$, so it suffices to prove the
two-sided multiplicative bound
\begin{equation}
\min\!\left(\frac{\beta'}{\beta},\frac{\beta}{\beta'}\right)
\;\le\;
\frac{\pi_t^{(\beta')}(x)}{\pi_t^{(\beta)}(x)}
\;\le\;
\max\!\left(\frac{\beta'}{\beta},\frac{\beta}{\beta'}\right),
\label{eq:ratio}
\end{equation}
since taking logarithms in~\eqref{eq:ratio} and using
$\log\min(\tfrac{\beta'}{\beta},\tfrac{\beta}{\beta'})=-\,|\log\beta'-\log\beta|$
together with the analogous identity for the maximum gives the claim. Moreover, by
interchanging the roles of $\beta'$ and $\beta$, the upper bound in~\eqref{eq:ratio}
follows from the lower bound applied to the reciprocal ratio; hence it suffices to
prove the lower bound in~\eqref{eq:ratio}.
\smallskip
\noindent\emph{Scaling inequality.}
For all $a,b\ge 0$, $\beta>0$, and $c>0$,
\begin{equation}
\min(c\beta a,\,b)\;\ge\;\min(c,1)\,\min(\beta a,\,b).
\label{eq:scaling}
\end{equation}
Indeed, if $c\ge 1$ then $\min(c,1)=1$ and $c\beta a\ge\beta a$ gives
$\min(c\beta a,b)\ge\min(\beta a,b)$; if $0<c\le 1$ then $c b\le b$, so
$\min(c\beta a,b)\ge\min(c\beta a,c b)=c\min(\beta a,b)
=\min(c,1)\min(\beta a,b)$.
\smallskip
\noindent\emph{Pointwise factor.}
Apply~\eqref{eq:scaling} at fixed $x$ with $a=\pi_0(x)$, $b=\pi_t(x)$, $\beta=\beta$,
and $c=\beta'/\beta$ (so $c\beta=\beta'$):
\begin{equation}
\frac{m_{\beta'}(x)}{m_{\beta}(x)}\;\ge\;\min\!\left(\frac{\beta'}{\beta},1\right).
\label{eq:point}
\end{equation}
\smallskip
\noindent\emph{Normalizer factor.}
Apply~\eqref{eq:scaling} with the roles swapped ($\beta=\beta'$,
$c=\beta/\beta'$) to obtain
$m_{\beta}(x)\ge\min(\beta/\beta',1)\,m_{\beta'}(x)$. Integrating over $x$ and
using $\psi(\beta')>0$,
\begin{equation}
\frac{\psi(\beta)}{\psi(\beta')}\;\ge\;\min\!\left(\frac{\beta}{\beta'},1\right).
\label{eq:norm}
\end{equation}
\smallskip
\noindent\emph{Combine.}
Writing the ratio as
$\dfrac{\pi_t^{(\beta')}(x)}{\pi_t^{(\beta)}(x)}
=\dfrac{m_{\beta'}(x)}{m_{\beta}(x)}\cdot\dfrac{\psi(\beta)}{\psi(\beta')}$ and
substituting~\eqref{eq:point} and~\eqref{eq:norm},
\[
\frac{\pi_t^{(\beta')}(x)}{\pi_t^{(\beta)}(x)}
\ge\min\!\left(\frac{\beta'}{\beta},1\right)\min\!\left(\frac{\beta}{\beta'},1\right)
=\min\!\left(\frac{\beta'}{\beta},\frac{\beta}{\beta'}\right),
\]
where the last equality follows by checking the cases $\beta'\ge\beta$ and
$\beta'\le\beta$. This is the lower bound in~\eqref{eq:ratio}, completing the proof.
\smallskip
\noindent\emph{Sharpness.}
At a point $x$ where clipping is inactive ($\pi_t(x)<\beta\pi_0(x)$, so
$m_\beta(x)=\pi_t(x)$ is independent of $\beta$) while the bulk of the mass lies in the
clipped region (so $\psi(\beta)\propto\beta$), one has
$\pi_t^{(\beta')}(x)/\pi_t^{(\beta)}(x)\to\beta/\beta'$, attaining the lower bound; hence the
Lipschitz constant $1$ cannot be improved.
\end{proof}

\clearpage

\clearpage

\clearpage

\section{Detailed Methodology}
\label{app:detailed_methodology}

This appendix provides implementation details for Conformal Policy Control. Section~\ref{app:conformal_calibration} describes the calibration procedure for selecting the likelihood ratio bound $\beta$ via weighted conformal prediction, including grid construction, normalization constant estimation, and conformal weighting. Section~\ref{app:sampling} describes sampling algorithms for drawing actions from the constrained policy, including rejection sampling and Independence Metropolis-Hastings.

\subsection{Conformal Calibration of $\beta$}
\label{app:conformal_calibration}

This section details the procedure for calibrating the likelihood ratio bound $\beta$ via weighted conformal risk control. We first define the likelihood ratio and describe how to construct a grid of candidate values (Section~\ref{app:lr_grid_prep}). We then show how to estimate the normalization constant $\psi(\beta)$ for each candidate via importance-weighted Monte Carlo integration (Section~\ref{app:estimating_norm_const}). Finally, we describe the conformal weighting scheme that accounts for distribution shift between calibration and deployment, and the criterion for selecting the largest $\beta$ that controls risk at level $\alpha$ (Section~\ref{app:conformal_weighting}).

\begin{remark}
Throughout this section we write $\pi_0$ for the reference policy used in the likelihood ratio, which generalizes $\pi_0$ from the main text: $\pi_0$ may be the initial safe policy $\pi_0$ or the previous constrained policy $\pi_{t-1}^{(\beta_{t-1})}$. We also assume a fixed context distribution $p(X)$ shared by the safe and optimized policies. Under this assumption, the context marginal cancels in the likelihood ratio $\pi_t(A_i \mid X_i) / \pi_0(A_i \mid X_i)$, and it suffices to work with conditional densities evaluated at the same context $X_i$.
\end{remark}

\subsubsection{Likelihood Ratios and Grid Preparation}
\label{app:lr_grid_prep}

For each action $x$, define the likelihood ratio
\begin{align}
    \mathrm{LR}(x) \coloneqq \frac{\pi_t(x)}{\pi_0(x)},
\end{align}
where $\pi_t$ is the current unconstrained policy and $\pi_0$ is either the initial policy $\pi_0$ or the previous constrained policy $\pi_{t-1}^{(\beta_{t-1})}$.

To calibrate $\beta$, we require two sets of samples:
\begin{itemize}
    \item \textbf{Calibration data} $\mathcal{D}_{\mathrm{cal}}$: samples from previous rounds with feasibility labels, used to compute conformal weights.
    \item \textbf{Proposal data} $\mathcal{D}_{\mathrm{prop}}$: fresh samples from $\pi_t$, used to estimate the normalization constant $\psi(\beta)$.
\end{itemize}

We construct a grid $G$ of candidate $\beta$ values from the observed likelihood ratios across both datasets:
\begin{align}
    G = \textsc{PrepareGrid}\left(\{\mathrm{LR}(x) : x \in \mathcal{D}_{\mathrm{cal}} \cup \mathcal{D}_{\mathrm{prop}}\}\right).
\end{align}
The grid is sorted in increasing order, and we search from the most conservative (smallest $\beta$) to the most permissive, stopping at the first $\beta$ where the weighted infeasibility rate exceeds $\alpha$. See Figure~\ref{fig:beta_calibration} for an illustration.

\begin{figure}[!ht]
    \centering
    \includegraphics[width=0.5\linewidth]{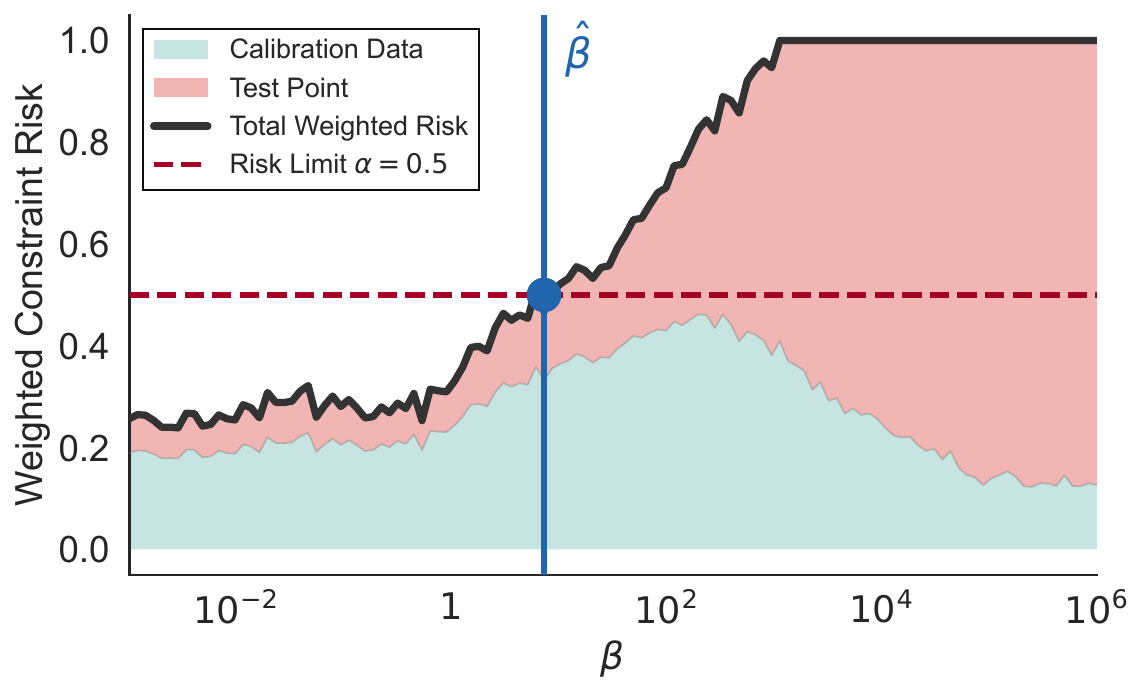}
    \caption{An illustration of fitting $\hat \beta$ to calibration data drawn from $\pi_0$. We draw a test point from the optimized policy and conservatively assume the loss attains the upper bound $B$. As $\beta$ increases, the contribution of the test point to the weighted average constraint violation increases. We search $\beta$ in increasing order, recomputing the conformal weights for each setting of $\beta$ and halting at the first point $\hat \beta$ where the average constraint violation exceeds $\alpha$.}
    \label{fig:beta_calibration}
\end{figure}

\subsubsection{Estimating the Normalization Constant $\psi(\beta)$}
\label{app:estimating_norm_const}

Recall that the constrained policy is defined as
\begin{align}
    \pi^{(\beta)}(x) = \frac{\min\{\pi_t(x), \beta \cdot \pi_0(x)\}}{\psi(\beta)},
\end{align}
where $\psi(\beta) = \int \min\{\pi_t(x), \beta \cdot \pi_0(x)\} \, dx$ is the normalization constant. We estimate $\psi(\beta)$ via importance-weighted Monte Carlo integration using the proposal data $\mathcal{D}_{\mathrm{prop}}$.

When sampling from the optimistic policy $\pi_t$:
\begin{align}
    \psi(\beta)
    &= \mathbb{E}_{\pi_t}\left[\frac{\min\{\pi_t(x), \beta \cdot \pi_0(x)\}}{\pi_t(x)}\right]
    = \mathbb{E}_{\pi_t}\left[\min\left\{\frac{\beta}{\mathrm{LR}(x)}, 1\right\}\right] \\
    &\approx \frac{1}{|\mathcal{D}_{\mathrm{prop}}|} \sum_{x \in \mathcal{D}_{\mathrm{prop}}} \min\left\{\frac{\beta}{\mathrm{LR}(x)}, 1\right\}.
\end{align}

When sampling from the safe policy $\pi_0$:
\begin{align}
    \psi(\beta)
    &= \mathbb{E}_{\pi_0}\left[\frac{\min\{\pi_t(x), \beta \cdot \pi_0(x)\}}{\pi_0(x)}\right]
    = \mathbb{E}_{\pi_0}\left[\min\left\{\mathrm{LR}(x), \beta\right\}\right] \\
    &\approx \frac{1}{|\mathcal{D}_{\mathrm{prop}}|} \sum_{x \in \mathcal{D}_{\mathrm{prop}}} \min\left\{\mathrm{LR}(x), \beta\right\}.
\end{align}

In practice, all computations are performed in log space for numerical stability. Writing $r_i = \log \mathrm{LR}(x_i)$ for each sample, the log normalization constant is
\begin{align}
    \log \psi(\beta) = \mathrm{logsumexp}\left(\{\min(\log \beta - r_i, 0)\}_{i=1}^{n}\right) - \log n
\end{align}
for the optimistic proposal, and similarly with $\min(r_i, \log \beta)$ for the safe proposal.

\subsubsection{Conformal Mixture Weight Approximation}
\label{app:conformal_weighting}

Here we will derive the mixture weight approximation to the exact conformal permutation weights. For ease of exposition, we initially introduce weights for a given known test point, $z_t$, even though CPC occurs prior to sampling, before the test point is revealed. We will then describe the conservative adjustment for unknown test point. We also focus on a given fixed $\beta$ and fixed policy $\pi_t^{(\beta)}$, as the focus of this section is to form a weighted risk estimate of a given candidate policy, rather than the CPC procedure for searching over policies.

Recall that we define the \textit{normalized }oracle weights (from \citet{tibshirani2019conformal} and \citet{prinster2024conformal}) as
\begin{align*}
        \tilde{w}_i^{(\beta)} := \tilde{w}_i\big(\pi_{0:t}^{(\beta)}, \lbag z_{0:t}\rbag\big) = \mathbb{P}_{\pi_{0:t}^{(\beta)}}(Z_t=z_i \mid \lbag z_{0:t}\rbag) =\frac{\sum_{\sigma:\sigma(t)=i}\pi_{0:t}^{(\beta)}(z_{\sigma(0)}, ..., z_{\sigma(t)})}{\sum_{\sigma}\pi_{0:t}^{(\beta)}(z_{\sigma(0)}, ..., z_{\sigma(t)})},
\end{align*}
    for all $i\in \{0, ..., t\}$, where $\sigma$ is a permutation of $\{0, ..., t\}$. 
    
    Note that $\pi_{0:t}^{(\beta)}(z_0, ..., z_t)=\pi_{t\mid 0:t-1}^{(\beta)}(z_t\mid z_0, ..., z_{t-1})\cdot\pi_{0:t-1}(z_0, ..., z_{t-1})$, and recall we focus on the split conformal setting where we only allow $Z_t$ to depend on $Z_{0:t-1}$ symmetrically via $\beta$, so  $\pi_{t\mid 0:t-1}^{(\beta)}(z_t\mid z_0, ..., z_{t-1}) = \pi_{t\mid 0:t-1}^{(\beta)}(z_t\mid \lbag z_{0:t-1}\rbag)$. Thus, we can make the substitution $\pi_{0:t}^{(\beta)}(z_{\sigma(0)}, ..., z_{\sigma(t)})=\pi_{t\mid 0:t-1}^{(\beta)}(z_{\sigma(t)}\mid \lbag z_{0:t\backslash \sigma(t)}\rbag)\cdot\pi_{0:t-1}(z_{\sigma(0)}, ..., z_{\sigma(t-1)})$, to get
    \begin{align*}
        \tilde{w}_i^{(\beta)} & =  \frac{\sum_{\sigma:\sigma(t)=i}\pi_{t\mid 0:t-1}^{(\beta)}(z_{\sigma(t)}\mid \lbag z_{0:t\backslash \sigma(t)}\rbag)\cdot\pi_{0:t-1}(z_{\sigma(0)}, ..., z_{\sigma(t-1)})}{\sum_{\sigma}\pi_{t\mid 0:t-1}^{(\beta)}(z_{\sigma(t)}\mid \lbag z_{0:t\backslash \sigma(t)}\rbag)\cdot\pi_{0:t-1}(z_{\sigma(0)}, ..., z_{\sigma(t-1)})}.
\end{align*}
Re-writing the denominator sum as $\sum_i\sum_{\sigma:\sigma(t)=i}$ and substituting $\sigma(t)=i$ within the summations,
\begin{align*}
    \tilde{w}_i^{(\beta)} & =  \frac{\sum_{\sigma:\sigma(t)=i}\pi_{t\mid 0:t-1}^{(\beta)}(z_{i}\mid \lbag z_{0:t\backslash i}\rbag)\cdot\pi_{0:t-1}(z_{\sigma(0)}, ..., z_{\sigma(t-1)})}{\sum_i\sum_{\sigma:\sigma(t)=i}\pi_{t\mid 0:t-1}^{(\beta)}(z_{i}\mid \lbag z_{0:t\backslash i}\rbag)\cdot\pi_{0:t-1}(z_{\sigma(0)}, ..., z_{\sigma(t-1)})} \\
    & =  \frac{\pi_{t\mid 0:t-1}^{(\beta)}(z_{i}\mid \lbag z_{0:t\backslash i}\rbag)\cdot\sum_{\sigma:\sigma(t)=i}\pi_{0:t-1}(z_{\sigma(0)}, ..., z_{\sigma(t-1)})}{\sum_i\pi_{t\mid 0:t-1}^{(\beta)}(z_{i}\mid \lbag z_{0:t\backslash i}\rbag)\cdot\sum_{\sigma:\sigma(t)=i}\pi_{0:t-1}(z_{\sigma(0)}, ..., z_{\sigma(t-1)})} \\
    & =  \frac{\pi_t^{(\beta)}(z_i)\cdot\sum_{\sigma:\sigma(t)=i}\pi_{0:t-1}(z_{\sigma(0)}, ..., z_{\sigma(t-1)})}{\sum_i\pi_t^{(\beta)}(z_i)\cdot\sum_{\sigma:\sigma(t)=i}\pi_{0:t-1}(z_{\sigma(0)}, ..., z_{\sigma(t-1)})}.
\end{align*}
where the second line follows since $\pi_{t\mid 0:t-1}^{(\beta)}(z_{i}\mid \lbag z_{0:t\backslash i}\rbag)$ does not depend on $\sigma$, and the third line by our abbreviated notation $\pi_t^{(\beta)}(z_i):=\pi_{t\mid 0:t-1}^{(\beta)}(z_{i}\mid \lbag z_{0:t\backslash i}\rbag)$.

We can now see where the joint distribution of past policies, $\pi_{0:t-1}$, is isolated as the remaining term inside the sum over permutations $\sigma$, and thus the remaining term that is keeping $\tilde{w}_i^{(\beta)}$ intractable. This motivates us to approximate $\pi_{0:t-1}$ by the mixture distribution over past policies, 
\begin{align*}
    \pi_{0:t-1}^{\text{mix}}(z):=\frac{1}{t}\pi_0(z) + ... + \frac{1}{t}\pi_{t-1}(z),
\end{align*}
or more generally, the mixture weights for each policy should be proportional to the number of calibration points sampled in that round.

This approximation assumes that each past datapoint was drawn exchangeably (in particular, i.i.d.) from this mixture of $\pi_0, ..., \pi_{t-1}$, which weights every permutation equally, allowing for us to derive a computationally efficient estimate. Plugging in $\pi_{0:t-1}^{\text{mix}}$ for $\pi_{0:t-1}$ and multiplying by $1=\frac{\pi_{0:t-1}^{\text{mix}}(z_i)}{\pi_{0:t-1}^{\text{mix}}(z_i)}$ inside the summations,

\begin{align*}
    \tilde{w}_i^{(\beta)} 
    \approx &\frac{\pi_t^{(\beta)}(z_i)\cdot\sum_{\sigma:\sigma(t)=i}\pi_{0:t-1}^{\text{mix}}(z_{\sigma(0)}, ..., z_{\sigma(t-1)})}{\sum_i\pi_t^{(\beta)}(z_i)\cdot\sum_{\sigma:\sigma(t)=i}\pi_{0:t-1}^{\text{mix}}(z_{\sigma(0)}, ..., z_{\sigma(t-1)})} \\
    & = \frac{\pi_t^{(\beta)}(z_i)\cdot\sum_{\sigma:\sigma(t)=i}\pi_{0:t-1}^{\text{mix}}(z_{\sigma(0)}, ..., z_{\sigma(t-1)})\cdot\frac{\pi_{0:t-1}^{\text{mix}}(z_i)}{\pi_{0:t-1}^{\text{mix}}(z_i)}}{\sum_i\pi_t^{(\beta)}(z_i)\cdot\sum_{\sigma:\sigma(t)=i}\pi_{0:t-1}^{\text{mix}}(z_{\sigma(0)}, ..., z_{\sigma(t-1)})\cdot\frac{\pi_{0:t-1}^{\text{mix}}(z_i)}{\pi_{0:t-1}^{\text{mix}}(z_i)}},
\end{align*}
then pulling $\frac{1}{\pi_{0:t-1}^{\text{mix}}(z_i)}$ out of the summation over permutations, noting that by independence $\pi_{0:t-1}^{\text{mix}}(z_{\sigma(0)}, ..., z_{\sigma(t-1)})=\pi_{0:t-1}^{\text{mix}}(z_{\sigma(0)})\cdots\pi_{0:t-1}^{\text{mix}}(z_{\sigma(t-1)})$, and collecting product terms, our approximation becomes
\begin{align*}
    \tilde{w}_i^{(\beta)} 
    \approx &\frac{\frac{\pi_t^{(\beta)}(z_i)}{\pi_{0:t-1}^{\text{mix}}(z_i)}\cdot\sum_{\sigma:\sigma(t)=i}\prod_{j=0}^t\pi_{0:t-1}^{\text{mix}}(z_j)}{\sum_i\frac{\pi_t^{(\beta)}(z_i)}{\pi_{0:t-1}^{\text{mix}}(z_i)}\cdot\sum_{\sigma:\sigma(t)=i}\prod_{j=0}^t\pi_{0:t-1}^{\text{mix}}(z_j)} \\
    & = \frac{\frac{\pi_t^{(\beta)}(z_i)}{\pi_{0:t-1}^{\text{mix}}(z_i)}\cdot t!}{\sum_i\frac{\pi_t^{(\beta)}(z_i)}{\pi_{0:t-1}^{\text{mix}}(z_i)}\cdot t!} \\
    & = \frac{\pi_t^{(\beta)}(z_i)/\pi_{0:t-1}^{\text{mix}}(z_i)}{\sum_i\pi_t^{(\beta)}(z_i)/\pi_{0:t-1}^{\text{mix}}(z_i)},
\end{align*}
and let our unnormalized weight estimate be $w_i^{(\beta)}=\pi_t^{(\beta)}(z_i)/\pi_{0:t-1}^{\text{mix}}(z_i)$.

However, because for CPC the test point is unknown at calibration time, we conservatively estimate its unnormalized weight by taking a maximum over a set of proposal samples:
\begin{align}
    w_{\mathrm{max}}^{(\beta)} = \max_{z \in \mathcal{D}_{\mathrm{prop}}} \frac{\pi^{(\beta)}(z)}{\pi_{0:t-1}^{\mathrm{mix}}(z)},
\end{align}

and then we normalize all weights to sum to one:
\begin{align}
    \tilde{w}_i^{(\beta)} = \frac{w_i^{(\beta)}}{\sum_{j} w_j^{(\beta)} + w_{\mathrm{max}}^{(\beta)}}, \quad
    \tilde{w}_{\mathrm{max}}^{(\beta)} = \frac{w_{\mathrm{max}}^{(\beta)}}{\sum_{j} w_j^{(\beta)} + w_{\mathrm{max}}^{(\beta)}}.
\end{align}

Approximating the unnormalized $w_{\text{max}}^{(\beta)}$ prior to normalizing is valid for approximating the normalized $\tilde{w}_{\text{max}}^{(\beta)}$ due to the transformation being monotone for the mixture approximation.

Let $B$ denote the upper bound on the loss $\ell(x)$. The weighted risk estimate is
\begin{align}
    \hat{R}(\beta) = \sum_{i \in \mathcal{D}_{\mathrm{cal}}} \tilde{w}_i^{(\beta)} \, \ell(x_i) + \tilde{w}_{\mathrm{max}}^{(\beta)}\cdot B,
\end{align}
where the test point is conservatively assigned the worst-case loss $B$. We select $\hat{\beta}$ as the largest $\beta \in G$ such that $\hat{R}(\beta') \leq \alpha$ for all $\beta'\leq \beta$.

\subsection{Sampling from the Constrained/Risk-Controlling Policy}
\label{app:sampling}

In this section, we describe three approaches to sample from $\pi^{(\beta)}$: rejection sampling with either $\pi_0$ or $\pi_t$ as the proposal (Section~\ref{app:rejection_simple}), rejection sampling with a mixture proposal (Section~\ref{app:rejection_mixture}), and Independence Metropolis-Hastings (Section~\ref{app:imh}).

\subsubsection{Rejection Sampling with Simple Proposals}
\label{app:rejection_simple}

We first review the classic accept-reject procedure. Let $\tilde p(x)$ be an unnormalized target density and $q(x)$ a normalized proposal such that
$$\frac{\tilde p(x)}{q(x)} \leq M \quad \forall x,$$
for some envelope constant $M < \infty$.\footnote{Alternatively, we can view $M q(x)$ as the unnormalized envelope proposal with normalizing constant $M$.} To sample from $p(x) = \tilde p(x) / \int \tilde p(x) dx$, we proceed as follows: draw $X \sim q$, draw $U \sim {\rm Unif}(0, 1)$, and accept $X$ if $U \leq \frac{\tilde p(X)}{M q(X)}$. The optimal (smallest) envelope constant is
$$M^\star = \sup_x \ \frac{\tilde p(x)}{q(x)}.$$

The marginal acceptance probability is
\begin{align}
\mathbb{P} \left[U \leq \frac{\tilde p(X)}{M q(X)} \right] 
&= \mathbb{E} \left[ \mathbbm{1} \left\{U \leq \frac{\tilde p(X)}{M q(X)} \right\} \right] \nonumber \\
&= \mathbb{E} \left[ \mathbb{E}\left[ \mathbbm{1} \left\{U \leq \frac{\tilde p(X)}{M q(X)} \right\} \Big| X \right] \right] \nonumber \\
&= \mathbb{E} \left[ \mathbb{P}\left[ U \leq \frac{\tilde p(X)}{M q(X)}  \Big| X \right] \right] \nonumber \\
&= \mathbb{E}_{X\sim q}\left[\frac{\tilde p(X)}{M q(X)} \right] \nonumber \\
&= \int_{x: q(x)>0} q(x) \frac{\tilde p(x)}{M q(x)} dx \nonumber \\
&= \frac{1}{M}\int_{x: q(x)>0} \tilde p(x) dx
\label{eq:app:accept_prob_rejection_sampling}
\end{align}
so tighter $M$ leads to higher acceptance rates.

\paragraph{Safe policy proposal.} Take $\tilde p(x) = \tilde \pi^{(\beta)}(x) = \min\{ \pi_t(x), \beta \pi_{\rm safe}(x) \}$ and $q(x) = \pi_{\rm safe}(x)$. Then for all $x$,
$$
\frac{\tilde p(x)}{q(x)} = \min \left\{ \frac{\pi_t(x)}{\pi_{\rm safe}(x)}, \beta \right\} \leq \beta,
$$
so we choose $M = \beta$. The pointwise acceptance probability then is
\begin{equation}
\frac{\tilde p(x)}{\beta \pi_{\rm safe}(x)} = \min\left\{\frac{\pi_t(x)}{\beta\pi_{\rm safe}(x)}, 1 \right\},
\label{eq:app:accept_prob_safe_proposal}
\end{equation}
or $\frac{\pi_t(x)}{\beta\pi_{\rm safe}(x)}$ in the unclipped region and $1$ in the clipped region.

\paragraph{Optimized policy proposal.} If we instead take $q(x) = \pi_t(x)$, then for all $x$,
$$
\frac{\tilde p(x)}{q(x)}
=
\frac{\min\{\pi_t(x),\,\beta\pi_{\rm safe}(x)\}}{\pi_t(x)}
=
\min \left\{1, \frac{\beta\pi_{\rm safe}(x)}{\pi_t(x)} \right\}
 \leq 1,
$$
so we may take $M=1$.
The pointwise acceptance probability is
\begin{align}
\frac{\tilde p(x)}{\pi_t(x)}
=
\min\left\{1, \frac{\beta\pi_{\rm safe}(x)}{\pi_t(x)} \right\},
\label{eq:app:accept_prob_opt_proposal}
\end{align}
which is $1$ in the unclipped region and $\frac{\beta\pi_{\rm safe}(x)}{\pi_t(x)}$ in the clipped region.

Intuitively, the relative efficiency of the two proposals depends on the constraint parameter $\beta$ controlling how much the optimized policy is truncated toward the safe policy. When $\beta$ is small, the constrained target resembles a rescaled version of $\pi_{\rm safe}$ over most of its support. Proposing from $\pi_{\rm safe}$ tends to be efficient in this regime, because most points satisfy $\pi_t(x)/\pi_{\rm safe}(x) \ll \beta$ and the acceptance probability in \autoref{eq:app:accept_prob_safe_proposal}, while small, varies smoothly over $x$. In contrast, proposing from $\pi_t$ can be inefficient with small $\beta$, since $\pi_t$ may place substantial mass in regions that are heavily downweighted by the constraint, leading to frequent rejections.

When $\beta$ is large, the constrained target resembles $\pi_t$. As long as $\pi_t(x) / \pi_{\rm safe}(x) \lesssim \beta$ for most of the support of $\pi_t$, then using $\pi_t$ as the proposal tends to be more efficient in this regime, as the acceptance probability in \autoref{eq:app:accept_prob_opt_proposal} is frequently close to $1$. If, however, $\pi_t$ concentrates mass in regions where $\pi_t(x) / \pi_{\rm safe}(x)$ is extremely large, this proposal can still perform poorly.

At intermediate values of $\beta$, neither proposal dominates. The constrained target may allocate comparable mass from $\pi_{\rm safe}$ and $\pi_t$. Both proposals can suffer from mismatches with different parts of the target, motivating the use of mixture proposals that adaptively trade off between $\pi_{\rm safe}$ and $\pi_t$.

\subsubsection{Rejection Sampling with Mixture Proposal}
\label{app:rejection_mixture}

Let the proposal be the mixture
$$
q_w(x) = w \pi_{\rm safe}(x) + (1-w) \pi_t(x),
\quad w \in [0, 1].
$$
The tight envelope constant is
$$
M^\star(w) = \sup_x \frac{\tilde p(x)}{q_w(x)}
=
\sup_x \frac{\min\{\pi_t(x), \beta \pi_{\rm safe}(x)\}}{w \pi_{\rm safe}(x) + (1-w) \pi_t(x)}.
$$
Given any $M\ge M^\star(w)$, the pointwise acceptance probability is
\begin{equation}
\alpha_w(x) \coloneqq \frac{\tilde p(x)}{M q_w(x)}
= \frac{\min\{\pi_t(x), \beta \pi_{\rm safe}(x)\}}{M\left(w \pi_{\rm safe}(x)+(1-w) \pi_t(x) \right)}.
\label{eq:app:accept_prob_mixture}
\end{equation}

Choosing smaller $M$ (i.e., closer to $M^\star$) yields higher acceptance rates, as derived in \autoref{eq:app:accept_prob_rejection_sampling}. In practice, we may choose $M$ by estimating $M^\star(w)$ from a set of points $\mathcal{S}$ covering the support of $q_w$, where $\mathcal{S}$ is obtained by drawing samples from $q_w$ or some appropriate fraction of samples from each component to cover the tails. Taking the empirical maximum 
$$\max_{x \in \mathcal{S}} \ \frac{\tilde p(x) }{ q_w(x) }$$ 
then yields an estimate of $M^\star (w)$.

We now describe a few simple methods for choosing $w$.
\paragraph{Joint grid search with $M$.} 
We can iterate over a grid of $w$ and choose $w, M$ jointly. For each candidate value of $w$, estimate $M^\star(w)$ using the procedure described above and choose $w$ that minimizes this estimate. Note that the support points $\mathcal{S}$ used to estimate $M^\star(\cdot)$ can be reused for different values of $w$.

\paragraph{Estimated overlap coefficients.}
The \textit{overlap coefficient} between two density functions $p_1, p_2$ is defined as
$$
\mathrm{OVL}(p_1, p_2):=\int \min \left\{ p_1(x), p_2(x) \right\} \ dx = \int \min \left\{ \frac{p_1(x)}{p_2(x)}, 1 \right\} p_2(x) \ dx \quad \in [0, 1].
$$
It is related to the total variation (TV) distance as $\mathrm{OVL}(p_1, p_2) = 1 - {\rm TV}(p_1, p_2)$.

Letting $p_1 = p = \pi^{(\beta)}$, our normalized target, 
we can estimate $\mathrm{OVL}(\pi^{(\beta)}, \pi_{\rm safe})$ using samples $x_1, \dotsc, x_n$ drawn from $\pi_{\rm safe}$:
\begin{align}
\widehat{\mathrm{OVL}}(\pi^{(\beta)}, \pi_{\rm safe})
=
\frac{1}{n}\sum_{i=1}^n \min \left\{\frac{\pi^{(\beta)}(x_i)}{\pi_{\rm safe}(x_i)}, 1\right\}
=
\frac{1}{n}\sum_{i=1}^n \min \left\{\frac{\pi_t(x_i)}{\psi(\beta) \pi_{\rm safe}(x_i)}, \frac{\beta}{\psi(\beta)}, 1\right\},
\label{eq:ovl_hat_safe}
\end{align}
and similarly $\widehat{\mathrm{OVL}}(\pi^{(\beta)}, \pi_t)$ using samples drawn from $\pi_t$.
The normalizing constant $\psi(\beta)$ is estimated with importance-weighted MC integration as described in Section \ref{app:estimating_norm_const}. 
A simple heuristic is then to set
\begin{align}
w
=
\frac{\widehat{\mathrm{OVL}}(\pi^{(\beta)}, \pi_{\rm safe})}{\widehat{\mathrm{OVL}}(\pi^{(\beta)}, \pi_{\rm safe})+\widehat{\mathrm{OVL}}(\pi^{(\beta)}, \pi_t)}.
\label{eq:w_overlap_heuristic}
\end{align}
This chooses the mixture weights proportional to each component's estimated overlap with the target, biasing the proposal
toward whichever component more closely matches the constrained policy, while retaining support from both.

\paragraph{Adaptive update.}  
Rather than fixing $w$, we can adapt it online in the direction of a desirable sampling diagnostic. For example, we might define an update based on the observed acceptance rates of proposals drawn from each component, so that $w$ shifts toward the component that yields higher acceptance.

\subsubsection{Independence Metropolis-Hastings Sampling}
\label{app:imh}

While rejection sampling yields exact samples from the target distribution, it requires finding a finite global bound $M$ satisfying ${\tilde p}(x)/q(x) \leq M$ for all $x$. In many settings, especially in high dimensions, this bound is too loose to be practical or may not exist at all. The Metropolis-Hastings algorithm \citep{metropolis1953equation, hastings1970monte} relaxes this requirement.
In particular, here we discuss a simple variant called the independence Metropolis Hastings (IMH) algorithm, where the proposal does not depend on the current state \citep{tierney1994markov}.

Given an unnormalized target $\tilde p(x)$ and a normalized proposal $q(x)$, we initialize the Markov chain at $x_0$ and the IMH algorithm evolves it as follows. From the current state $x_t$, draw a candidate state $x' \sim q$, compute the acceptance probability
$$
a(x', x_t) = \min \left\{1, \frac{r(x')}{r(x_t)} \right\}, \quad r(x) \coloneqq \frac{\tilde p(x)}{q(x)},
$$
draw $U \sim {\rm Unif}(0, 1)$, and accept $x'$ (i.e., set $x_{t+1} = x'$) if $U \leq a(x', x_t)$ otherwise copy the old state $x_{t+1}=x_t$. It can be shown that the invariant distribution of the resulting Markov chain $x_0, x_1, x_2 \dotsc$ is the target $p(x)$. Intuitively, IMH compares each proposal only to the current state, whereas rejection sampling compares each proposal to a global worst case.

\clearpage

\section{Medical QA Implementation Details and Additional Results}
\label{app:medical_qa}

This appendix provides implementation details for our medical question-answering experiments, connecting the general conformal risk control framework to the specific problem of controlling LLM factuality. We describe the problem setup and notation (Section~\ref{subsec:medqa_setup}), prior approaches to conformal factuality (Section~\ref{subsec:medqa_prior_approaches}), loss functions for factuality control including the non-monotonic FDR loss (Section~\ref{subsec:medqa_loss_functions}), experimental setup (Section~\ref{subsec:medqa_experimental_setup}), and hyperparameters (Section~\ref{subsec:medqa_hyperparams}).

\subsection{Problem Setup and Notation}
\label{subsec:medqa_setup}

Following \citet{mohri2024language} and \citet{cherian2024large}, we formalize the natural language factuality setting as follows. Each data sample $Z_i$ is a prompt-response-claim-annotation tuple:
\begin{align}
    Z_i := (X_i, X_i', \boldsymbol{C}_i, \boldsymbol{Y}_i),
\end{align}
where $X_i$ is the prompt (question), $X_i'$ is the full LLM response, $\boldsymbol{C}_i = (C_{i1}, \ldots, C_{ik_i})$ is the vector of atomic subclaims extracted from $X_i'$, and $\boldsymbol{Y}_i = (Y_{i1}, \ldots, Y_{ik_i})$ contains binary annotations indicating whether each subclaim is factually correct ($Y_{ij} = 1$) or incorrect ($Y_{ij} = 0$).

For each subclaim $C_{ij}$, let $p(C_{ij} \mid X_i) \in [0, 1]$ denote a prompt-conditional confidence score estimating the likelihood that the subclaim is factual. Higher scores indicate greater confidence in factuality.

\subsection{Prior Approaches to Conformal Factuality}
\label{subsec:medqa_prior_approaches}

\paragraph{Mohri \& Hashimoto (2024).} \citet{mohri2024language} achieve conformal factuality guarantees by filtering subclaims based on a threshold $\tau$ determined by split conformal prediction. A subclaim $C_{ij}$ is included in the filtered response if $p(C_{ij} \mid X_i) \geq \tau$. The corresponding non-conformity score for each response is the smallest threshold such that all included subclaims are factual:
\begin{align}
    s(Z_i) = \inf\big\{\tau : Y_{ij} = 1 \ \text{for all} \ j \ \text{with} \ p(C_{ij} \mid X_i) \geq \tau \big\}.
\end{align}
The threshold $\hat{\tau}$ is then selected as the $(1-\alpha)$-quantile of the calibration scores:
\begin{align}
    \hat{\tau} = \inf \Big\{\tau : \frac{1}{n}\sum_{i=1}^n\mathbbm{1}\{s(Z_i) \leq \tau\} \geq \frac{\lceil(n+1)(1-\alpha)\rceil}{n}\Big\}.
\end{align}

\paragraph{Cherian et al.\ (2024).} \citet{cherian2024large} extend this approach using conditional calibration and multicalibration to provide guarantees that hold across different subgroups of prompts, addressing the limitation that marginal guarantees may not hold uniformly across topics or question types.

\subsection{Loss Functions for Factuality Control}
\label{subsec:medqa_loss_functions}

\paragraph{Binary indicator loss (prior work).} The loss function implicitly used by \citet{mohri2024language} and \citet{cherian2024large} is the binary indicator for whether any included subclaim is incorrect:
\begin{align}
    L_i^{\text{binary}}(\tau) = \mathbbm{1}\Big\{\exists \ j : p(C_{ij} \mid X_i) \geq \tau \ \text{and} \ Y_{ij} = 0\Big\}.
\end{align}
Controlling this loss at level $\alpha$ guarantees $\mathbb{P}(\text{any included claim is false}) \leq \alpha$. This loss is monotonically non-increasing in $\tau$ by construction: as the threshold increases, fewer claims are included, so the probability of including a false claim can only decrease.

\paragraph{False discovery rate loss (this work).} We instead use a more granular loss: the false discovery rate (FDR), \textit{i.e.}, the fraction of included subclaims that are false:
\begin{align}
    L_i(\tau) = \underbrace{\frac{\sum_{j=1}^{k_i} (1 - Y_{ij}) \cdot \mathbbm{1}\{p(C_{ij} \mid X_i) \geq \tau\}}{\sum_{j=1}^{k_i} \mathbbm{1}\{p(C_{ij} \mid X_i) \geq \tau\}}}_{\text{FDR}_i(\tau) = 1 - \text{Precision}_i(\tau)},
    \label{eq:fdr_loss}
\end{align}
with $L_i(\tau) = 0$ when no claims are included. This is precisely $1 - \text{Precision}_i(\tau)$, where precision is the fraction of included claims that are true. Controlling this loss at level $\alpha$ ensures:
\begin{align}
    \mathbb{E}[\text{FDR}_i(\tau)] \leq \alpha \quad \Longleftrightarrow \quad \mathbb{E}[\text{Precision}_i(\tau)] \geq 1 - \alpha.
\end{align}

\paragraph{Why FDR control requires generalized CRC.} Unlike the binary loss, the FDR loss in \eqref{eq:fdr_loss} is not monotonic in $\tau$. As $\tau$ increases, both the numerator (false claims included) and denominator (total claims included) decrease, but not necessarily at the same rate. This non-monotonicity means standard CRC \citep{angelopoulos2022conformal} does not apply directly, motivating our generalized CRC approach.

For methods requiring monotonicity, one can use the monotonized loss:
\begin{align}
    \tilde{L}_i(\tau) = \sup_{\tau' \geq \tau} L_i(\tau'),
\end{align}
which upper-bounds the FDR and is monotonically non-increasing by construction. However, this monotonization can be overly conservative.

\paragraph{Reward metric: true claim recall.} While we control the FDR (our loss), we measure the recall of true claims as our reward metric:
\begin{align}
    \text{Recall}_i(\tau) = \frac{\sum_{j: Y_{ij} = 1} \mathbbm{1}\{p(C_{ij} \mid X_i) \geq \tau\}}{\sum_{j=1}^{k_i} Y_{ij}},
    \label{eq:recall_metric}
\end{align}
\textit{i.e.}, the fraction of true subclaims that are retained after filtering. For a given FDR target $\alpha$, methods that achieve higher recall preserve more useful information while satisfying the same safety constraint.

\subsection{Experimental Setup}
\label{subsec:medqa_experimental_setup}

Our experimental setup closely follows \citet{cherian2024large}, with the key modification of controlling the FDR loss \eqref{eq:fdr_loss} and measuring recall \eqref{eq:recall_metric} as a reward metric.

\subsubsection{Dataset}

We use the \textbf{MedLFQA} (Medical Long-Form Question Answering) dataset from \citet{jeong2024olaph}, which aggregates four established medical QA benchmarks:
\begin{itemize}
    \item \textbf{HealthSearchQA} ($n = 3047$): Consumer health questions
    \item \textbf{K-QA} ($n = 1077$): Knowledge-intensive medical questions, with a subset having gold-standard physician responses (K-QA Golden, $n = 201$) and the remainder having LLM-generated responses (K-QA Silver)
    \item \textbf{LiveQA} ($n = 100$): Real consumer health questions from the National Library of Medicine
    \item \textbf{MedicationQA} ($n = 627$): Questions about medications and drug interactions
\end{itemize}
Each prompt is accompanied by either an LLM or human-generated reference response, which serves as ground truth for evaluating factual accuracy.

\subsubsection{Data Generation Pipeline}

We use the annotated dataset from \citet{cherian2024large}, which was constructed through a three-stage pipeline:
\begin{enumerate}
    \item \textbf{Response generation}: GPT-3.5-Turbo was queried with each medical question to obtain a candidate response $X_i'$.
    \item \textbf{Subclaim extraction}: GPT-4o parsed each response into atomic subclaims $\boldsymbol{C}_i = (C_{i1}, \ldots, C_{ik_i})$.
    \item \textbf{Factuality annotation}: GPT-3.5-Turbo evaluated whether each subclaim is substantiated by the reference response, yielding binary annotations $Y_{ij} \in \{0, 1\}$.
\end{enumerate}
For subclaim scoring, we use \textbf{self-evaluation scores} $p(C_{ij} \mid X_i)$, where the LLM is prompted to assess its own confidence in each claim.

\subsubsection{Methods Compared}

We compare three conformal risk control methods:
\begin{itemize}
    \item \textbf{gCRC (proposed)}: Generalized conformal risk control as described in Section~\ref{sec:theory}, which searches thresholds from safest to most aggressive and stops when the augmented empirical risk exceeds $\alpha$.

    \item \textbf{Monotonized-losses CRC} \citep{angelopoulos2022conformal, mohri2024language}: Applies standard CRC to the monotonized loss $\tilde{L}_i(\tau) = \sup_{\tau' \geq \tau} L_i(\tau')$, which upper-bounds the FDR.

    \item \textbf{Learn Then Test (LTT)} \citep{angelopoulos2021learn}: A multiple testing approach using Hoeffding-Bentkus $p$-values to identify valid thresholds with family-wise error rate control. Unlike CRC methods, LTT does not add a conservative test-point loss term.
\end{itemize}

\subsubsection{Experimental Protocol}

\begin{itemize}
    \item \textbf{Calibration/test split}: 70\%/30\% random split of the dataset
    \item \textbf{Risk levels}: $\alpha \in \{0.005, 0.01, 0.015, \ldots, 0.10\}$ (20 values)
    \item \textbf{Threshold grid}: 200 candidate thresholds sampled from the empirical distribution of subclaim scores
    \item \textbf{Random seeds}: Results averaged over 25 independent train/test splits
    \item \textbf{Score jittering}: Uniform noise ($\sim 10^{-8}$) added to break ties
\end{itemize}

\subsection{Hyperparameters}
\label{subsec:medqa_hyperparams}

Table~\ref{tab:medqa_hyperparameters} summarizes the experimental configuration.

\begin{table}[h]
\centering
\caption{Hyperparameters for medical QA experiments.}
\label{tab:medqa_hyperparameters}
\begin{tabular}{lll}
\toprule
\textbf{Parameter} & \textbf{Symbol} & \textbf{Value} \\
\midrule
Calibration fraction & -- & 0.7 \\
Number of trials & -- & 25 \\
Threshold grid size & $|\Lambda|$ & 200 \\
Risk levels tested & $\alpha$ & $\{0.005, 0.01, \ldots, 0.10\}$ \\
Score jitter magnitude & -- & $10^{-8}$ \\
Response LLM & -- & GPT-3.5-Turbo \\
Subclaim extraction LLM & -- & GPT-4o \\
Annotation LLM & -- & GPT-3.5-Turbo \\
Subclaim scoring method & -- & Self-evaluation \\
\bottomrule
\end{tabular}
\end{table}

\subsection{Additional Medical QA Results}
\label{app:med_QA_add_results}

\begin{figure}[!t]
\centering
\includegraphics[width=\linewidth]{figures/legend_logprobs_risk.pdf}\\[0.5em]
\textbf{Log-Probabilities Subclaim Scoring}
\\
\begin{subfigure}{0.42\linewidth}
    \includegraphics[width=\linewidth]{figures/loss_factuality_fdrLoss_logprobsScoring_risk_25trials_200ngrid.pdf}
\end{subfigure}
\hfill
\begin{subfigure}{0.42\linewidth}
    \includegraphics[width=\linewidth]{figures/loss_factuality_fdrLoss_logprobsScoring_claims_25trials_200ngrid.pdf}
\end{subfigure}
\\
\textbf{Self-Evaluation Subclaim Scoring}
\\
\begin{subfigure}{0.42\linewidth}
    \includegraphics[width=\linewidth]{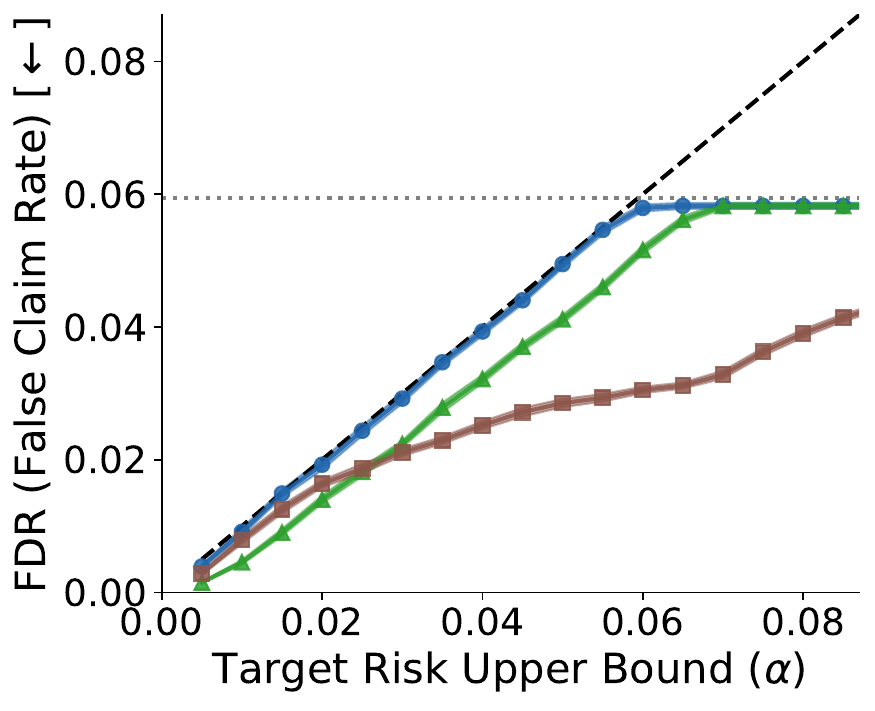}
\end{subfigure}
\hfill
\begin{subfigure}{0.42\linewidth}
    \includegraphics[width=\linewidth]{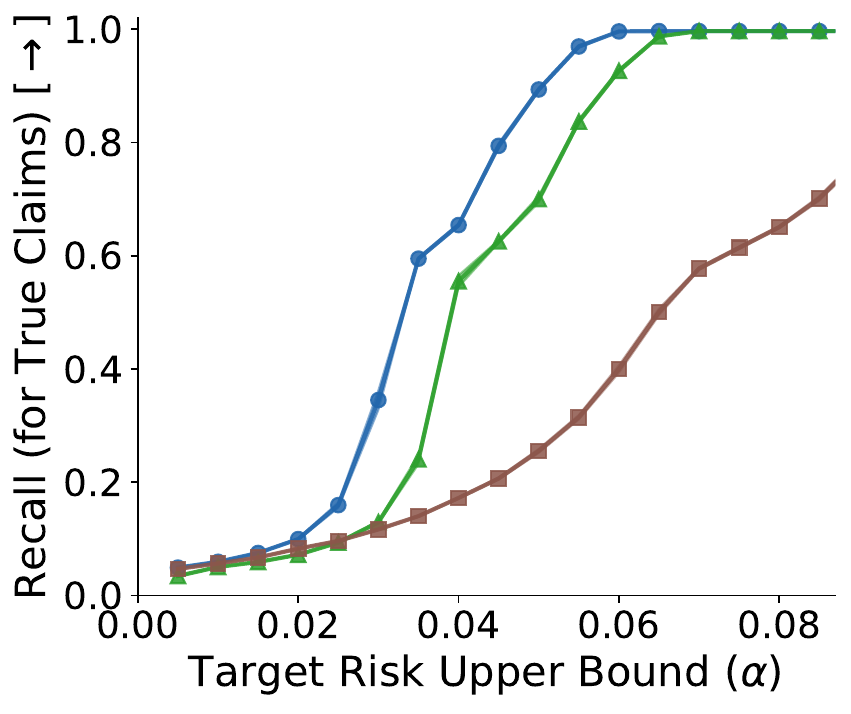}
\end{subfigure}
\\
\textbf{Frequency Subclaim Scoring}
\\
\begin{subfigure}{0.42\linewidth}
    \includegraphics[width=\linewidth]{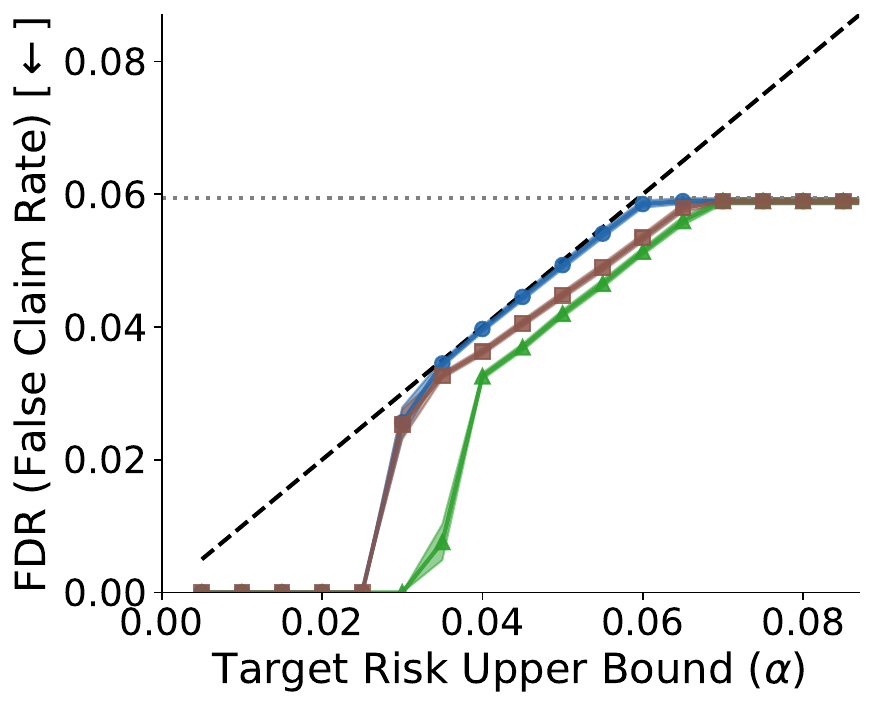}
\end{subfigure}
\hfill
\begin{subfigure}{0.42\linewidth}
    \includegraphics[width=\linewidth]{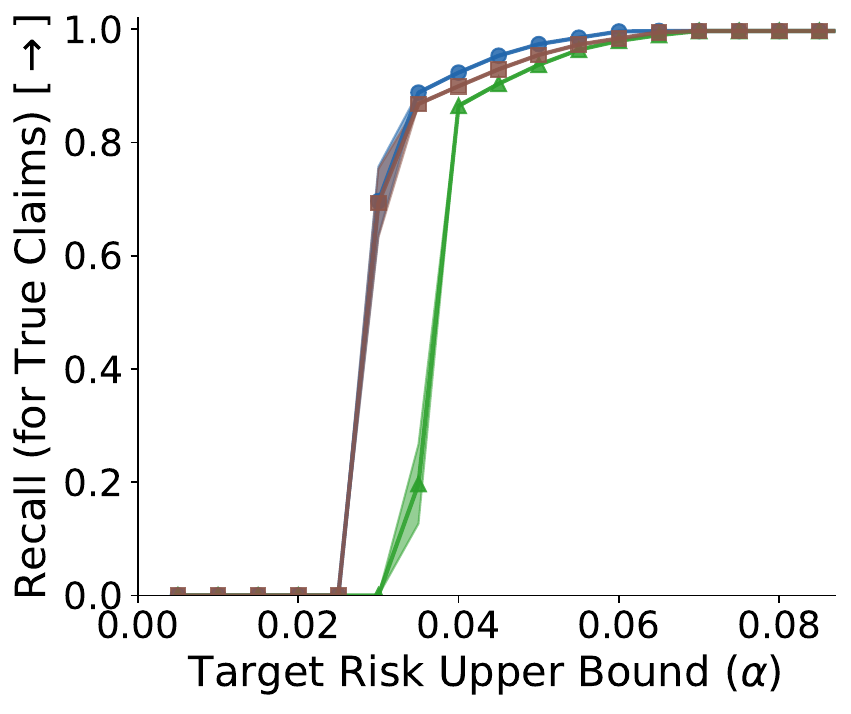}
\end{subfigure}
\caption{Medical QA factuality control results on MedLFQA on three subclaim scoring methods: log-probabilities \textbf{(top row)}, self-evaluation \textbf{(middle row)}, and frequency \textbf{(bottom row)}. \textbf{Left column:} Empirical FDR (fraction of retained claims that are false) vs.\ target risk level $\alpha$. The black dashed line indicates $y = x$; valid methods should fall at or below this line. \textbf{Right column:} Recall (fraction of true claims retained) vs.\ $\alpha$. gCRC tightly controls FDR at the target level while achieving superior recall to baselines. Results averaged over 25 random splits. Error regions are standard errors.}
\end{figure}

\clearpage

\section{Constrained Tabular Active Learning Implementation Details and Additional Results}
\label{app:active_learning_details}

This appendix provides implementation details for the constrained active learning experiments in Section~\ref{subsec:active_learning_experiments}. We describe the datasets used for evaluation (Section~\ref{subsec:al_datasets}), the construction of synthetic feasibility constraints based on principal component analysis (Section~\ref{subsec:feasibility_construction}), the active learning setup including surrogate model and acquisition policy (Section~\ref{subsec:al_setup}), and hyperparameter settings (Section~\ref{subsec:al_hyperparams}).

\subsection{Datasets}
\label{subsec:al_datasets}

We evaluate on three regression datasets:

\begin{itemize}
    \item \textbf{Robot Arm Kinematics} \citep{delve1996kin} (8192 samples, 8 features): Predict end-effector distance from joint angles of an 8-link all-revolute robot arm. We use the kin8nm variant from the DELVE repository, a highly non-linear regression task with moderate noise.
    \item \textbf{Airfoil Self-Noise} \citep{brooks1989airfoil} (1503 samples, 5 features): Predict sound pressure level from aerodynamic and geometric properties of NACA 0012 airfoil blade sections tested in an anechoic wind tunnel. We apply log transforms to frequency and suction-side displacement thickness following standard practice.
    \item \textbf{Medical Expenditure Panel Survey (Medical Utilization or MEPS)} \citep{ezzati2008sample} (33005 samples, 107 features): Large-scale survey dataset on medical utilization and costs of health care and health insurance coverage. Surveys are of individuals and families, their medical providers, and employers across the United States. We pre-process the data (for instance constructing dummy/indicator covariates) as described in \citet{barber2021predictive}. Data available at \url{https://meps.ahrq.gov/mepsweb/data_stats/download_data_files_detail.jsp?cboPufNumber=HC-192}
\end{itemize}

\subsection{Feasibility Constraint Construction}
\label{subsec:feasibility_construction}

Our datasets do not include recorded feasibility constraints, so we construct synthetic constraints based on the first principal component of the covariate matrix. The first principal component captures the dominant pattern of covariation in the data. Records with high PC1 values follow this typical covariation pattern and are likely feasible, while records with low PC1 values deviate from this pattern. Their covariates vary in an atypical or ``weird'' way that we treat as risky. This captures the intuition that unusual configurations are more likely to violate constraints. In this experiment, infeasible records receive higher label measurement noise than feasible records rather than receiving no feedback.

We define the feasibility indicator for each record as follows:
\begin{enumerate}
    \item \textbf{PCA projection}: Compute the first principal component of the full covariate matrix $X \in \mathbb{R}^{n \times d}$ and project all records onto this axis, yielding $z_i \in \mathbb{R}$ for each record $i$.

    \item \textbf{Exponential tilting}: Apply min-max normalization to $\{z_i\}$ and compute exponentially-tilted values $\tilde{z}_i = \exp(\gamma \cdot z_i^{\text{norm}})$, where $\gamma > 0$ is the \texttt{initial\_sampling\_bias} parameter.

    \item \textbf{Feasibility probability}: Convert relative ranks of $\{\tilde{z}_i\}$ to feasibility probabilities via a logistic CDF:
    \begin{align}
        p_i^{\text{feasible}} = \sigma\Big(\text{rank}(\tilde{z}_i) / n;\ \mu, s\Big),
    \end{align}
    where $\sigma(\cdot; \mu, s)$ is the logistic CDF with location $\mu = \min(2.5\alpha, 0.98)$, scale $s = 0.1$, and $\alpha$ is the target risk level for CPC. We tie the logistic location to $\alpha$ so that the proportion of infeasible records in the pool scales with the target risk level, ensuring the constraint remains binding across different choices of $\alpha$. For our experiments with $\alpha = 0.2$, this yields $\mu = 0.5$.

    \item \textbf{Bernoulli sampling}: Sample feasibility indicators $F_i \sim \text{Bernoulli}(p_i^{\text{feasible}})$.
\end{enumerate}

The initial training data is sampled with the same exponential tilting toward high PC1 values (Section~\ref{subsec:al_setup}), so the learner begins in a high-feasibility region where covariates follow the dominant covariation pattern. The pool contains many records from the opposite end of the PC1 spectrum, atypical configurations with low feasibility probability that a naive uncertainty-based acquisition policy would preferentially select.

\subsection{Active Learning Setup}
\label{subsec:al_setup}

\textbf{Initial data.} We hold out 20\% of each dataset as a fixed test set for evaluating MSE. From the remaining records, we sample $n_{\text{initial}}$ points with exponential tilting toward high PC1 values (using the same bias parameter $\gamma$ from Section~\ref{subsec:feasibility_construction}), then split these into training (80\%) and calibration (20\%) sets. This biased initialization places the learner in the high-feasibility region where covariates follow typical covariation patterns, reflecting a scenario where the incumbent safe policy operates in well-understood regimes.

\textbf{Surrogate model.} We fit a Gaussian Process (GP) regression model with a DotProduct + WhiteKernel covariance function using the \texttt{scikit-learn} package \citep{pedregosa2011scikit}:
\begin{align}
    k(x, x') = \sigma_0^2 + x^\top x' + \sigma_n^2 \cdot \mathbf{1}[x = x'],
\end{align}
where $\sigma_0^2$ is the signal variance and $\sigma_n^2$ is the noise variance.

\textbf{Acquisition policy.} At each iteration, we define an acquisition distribution over pool records via exponential tilting toward posterior variance. The action $a \in \mathcal{A}_t$ is an index into the pool of unlabeled records at iteration $t$, and the acquisition policy is
\begin{align}
    \pi(a) \propto \exp\Big(\lambda \cdot \frac{\hat{\sigma}^2_a}{\max_{j \in \mathcal{A}_t} \hat{\sigma}^2_j - \min_{j \in \mathcal{A}_t} \hat{\sigma}^2_j}\Big),
\end{align}
where $\hat{\sigma}^2_a$ is the posterior variance of the latent function $f$ (not the noisy labels $y$) at the covariates for record $a$, and $\lambda > 0$ controls the degree of uncertainty-seeking behavior. This policy preferentially selects records where the model is uncertain, which in our setup corresponds to regions with atypical covariation patterns (and thus potentially infeasible).

\textbf{Conformal policy control.} Before sampling, we apply CPC to constrain the acquisition policy relative to the initial source distribution (the biased sampling distribution used for initialization). This clips the likelihood ratio between the acquisition policy and source policy, as described in Section~\ref{subsec:cpc_selecting_rc_policy}, to control the risk of selecting infeasible records.

\textbf{Sequential updates.} At each iteration, we sample one record from the (constrained) acquisition policy with replacement (the pool remains unchanged), observe a noisy version of its label (Gaussian noise with magnitude 0.05) and its feasibility indicator, and add the observation to either the training or calibration set with equal probability. The GP is then refit on the updated training set. We repeat this process for $T$ iterations.

\subsection{Hyperparameters}
\label{subsec:al_hyperparams}

Table~\ref{tab:al_hyperparameters} summarizes the hyperparameters used in our experiments.

\begin{table}[h]
\centering
\caption{Hyperparameters for constrained active learning experiments.}
\label{tab:al_hyperparameters}
\begin{tabular}{lll}
\toprule
\textbf{Parameter} & \textbf{Symbol} & \textbf{Value} \\
\midrule
Initial pool size & $n_{\text{initial}}$ & 48 \\
Initial train/cal split & -- & 80\%/20\% \\
Acquisition iterations & $T$ & 10 \\
New point train probability & -- & 0.5 \\
Sampling bias & $\gamma$ & 1.0 \\
Acquisition temperature & $\lambda$ & 10.0 \\
GP signal variance & $\sigma_0^2$ & 1.0 \\
GP noise variance & $\sigma_n^2$ & 1.0 \\
Observation noise magnitude & -- & 0.05 \\
Infeasible noise scale & -- & 1.1 \\
Target risk level & $\alpha$ & 0.2 \\
\bottomrule
\end{tabular}
\end{table}

\clearpage

\section{Constrained Black-Box Sequence Optimization Implementation Details and Results}
\label{app:sequence_opt}

This appendix provides implementation details for the constrained black-box sequence optimization experiments in Section~\ref{subsec:cpc_LLM_methods}. We describe the Ehrlich test functions used for evaluation (Section~\ref{subsec:ehrlich}), the procedure for training the safe policy via supervised fine-tuning on genetic algorithm data (Section~\ref{subsec:safe_policy_training}), the policy improvement procedure via DPO (Section~\ref{subsec:policy_improvement}), and supplementary results across additional risk levels (Figure~\ref{fig:extra_llm_policy_control_results}).

\subsection{Ehrlich Test Functions}
\label{subsec:ehrlich}

We evaluate CPC on Ehrlich functions \citep{chen2025generalists}, a class of synthetic test functions designed to capture the geometric structure of biomolecular sequence optimization problems such as antibody affinity maturation. An Ehrlich function $f: \mathcal{V}^L \rightarrow (-\infty, 1]$ maps sequences of length $L$ over a vocabulary $\mathcal{V}$ to a score indicating how well the sequence satisfies a collection of $c$ gapped motifs $\mathbf m^{(i)}$ with spacing $\mathbf s^{(i)}$. Formally,
\begin{align}
    f(\mathbf{x}) = \begin{cases}
        \prod_{i=1}^c g \circ h_q(\mathbf{x}, \mathbf{m}^{(i)}, \mathbf{s}^{(i)}) & \text{if } \mathbf{x} \in \mathcal{F} \\
        -\infty & \text{otherwise}
    \end{cases},
\end{align}
where $h_q$ measures motif satisfaction with precision $q$, $g$ is a response function capturing epistatic effects, and $\mathcal{F}$ is a feasibility set defined by a discrete Markov process with certain infeasible transitions (\textit{i.e.} some bigrams have zero probability mass). Sequences outside $\mathcal{F}$ are assigned $-\infty$, modeling expression constraints in real biophysical assays.

We follow the naming convention \textbf{Ehr(}$|\mathcal{V}|$\textbf{,} $L$\textbf{)-}$c$\textbf{-}$k$\textbf{-}$q$ from \citet{chen2025generalists}, where $k$ is the motif length and $q$ controls signal sparsity. Ehrlich functions are procedurally generated, have adjustable difficulty, and are provably solvable, making them well-suited for benchmarking constrained optimization algorithms. In our experiments, we use \textbf{Ehr(32, 32)-4-4-4}: sequences of length $L=32$ over a vocabulary of size $|\mathcal{V}|=32$, with $c=4$ motifs of length $k=4$ and quantization precision $q=4$.

\subsection{Training the Safe Policy}
\label{subsec:safe_policy_training}

Following \citet{chen2025generalists}, we initialize the optimization loop with a data package $\mathcal{D}^{(0)}$ collected from a genetic algorithm (GA) pre-solver. Starting from a single feasible seed sequence $\mathbf{x}_0$ sampled from the discrete Markov process, we run $n_0=10$ rounds of the GA with 10000 particles per iteration. This produces approximately 100K scored sequences that provide initial coverage of the feasible region $\mathcal{F}$. The GA uses mutation probability $p_m=0.05$ and survival quantile $\alpha=0.5$.

From the GA history, we construct the initial supervised fine-tuning (SFT) dataset by identifying improving pairs: for each sequence $\mathbf{x}$ in the history, we search for sequences $\mathbf{x}'$ within edit distance $\delta$ such that $f(\mathbf{x}') > f(\mathbf{x})$. These $(\mathbf{x}, \mathbf{x}')$ pairs form instruction-response examples for training the safe policy $\pi_{\theta_0}$, which we initialize from a Pythia 14M-parameter decoder-only transformer \citep{biderman2023pythia}. An initial 20 seeds for the safe policy were then selected from the top 0.1\% scoring feasible sequences via ranked farthest-first traversal to encourage diversity among top-scoring sequences. See Table~\ref{tab:sft_hyperparams} for hyperparameter settings.

\begin{table}[h]
\centering
\caption{Hyperparameters for training the safe policy via SFT.}
\label{tab:sft_hyperparams}
\begin{tabular}{ll}
\toprule
\textbf{Hyperparameter} & \textbf{Value} \\
\midrule
Base model & Pythia 14M \\
Learning rate & $10^{-6}$ \\
Training epochs & 10 \\
Batch size & 32 \\
Edit distance threshold ($\delta$) & $0.3 \cdot L$ \\
\bottomrule
\end{tabular}
\end{table}

\begin{algorithm}[t]
    \caption{Conformal Policy Control}
    \label{alg:cpc_round}
    \begin{algorithmic}
        \REQUIRE Safe policy $\pi_0$; context draws $\{X_i\}_{i=1}^n \sim p(X)$; target risk level $\alpha$
        \item[] \textbf{Phase 1: Safe Policy Data Collection}
        \STATE Draw actions $A_i \sim \pi_0(\cdot \mid X_i)$ for $i = 1, \ldots, n$
        \STATE Observe rewards $R_i = r(X_i, A_i)$ and losses $L_i = \ell(X_i, A_i)$
        \STATE Split data: $\mathcal{D}_{\text{train}}, \mathcal{D}_{\text{cal}} \gets \textsc{RandomSplit}(\{(X_i, A_i, R_i, L_i)\})$
        \item[] \textbf{Phase 2: Policy Improvement}
        \STATE $\pi_t \gets \textsc{ImprovePolicy}(\pi_0, \mathcal{D}_{\text{train}})$
        \item[] \textbf{Phase 3: Calibration}
        \STATE $\hat{\beta} \gets \textsc{CalibrateBeta}(\pi_0, \pi_t, \mathcal{D}_{\text{cal}}, \alpha)$ (Algorithm \ref{alg:calibrate_beta})
        \item[] \textbf{Phase 4: Constrained Deployment}
        \STATE Draw improved actions $A_i' \sim \pi_t^{(\hat{\beta})}(\cdot \mid X_i)$ via accept-reject for $i = 1, \ldots, n$
        \STATE \textbf{Return:} $\{A_i'\}_{i=1}^n$
    \end{algorithmic}
\end{algorithm}

\subsection{Improving the Safe Policy}
\label{subsec:policy_improvement}

To improve the safe policy, we take a fixed set of seed sequences from the GA optimizer history and sample actions from $\pi_{\theta_0}$. From the resulting data, we construct DPO preference triplets of the form $(\mathbf{x}, \mathbf{x}^+, \mathbf{x}^-)$, where $\mathbf{x}^+$ is preferred over $\mathbf{x}^-$ based on the oracle scores. We initialize the improved policy $\pi_{\theta_t}$ at the safe policy weights and train using the DPO loss \citep{rafailov2023direct}. At each round, 10 seeds were used for the DPO-post-trained policy, with 2 from the initial safe model's seeds and 8 selected as the highest-scoring historical training sequences. See Algorithm \ref{alg:cpc_round} for pseudocode of a single round of CPC.

We chose DPO as our policy improvement method because \citet{chen2025generalists} demonstrated that it struggles with poor feasibility rates compared to other fine-tuning approaches, making it a challenging test case for our risk control procedure. See Table~\ref{tab:dpo_hyperparams} for hyperparameter settings.\footnote{DPO has a $\beta$ hyperparameter that controls its regularization strength, not to be confused with the use of $\beta$ as the likelihood-ratio bound in CPC.} 

\begin{table}[h]
\centering
\caption{Hyperparameters for policy improvement via DPO.}
\label{tab:dpo_hyperparams}
\begin{tabular}{ll}
\toprule
\textbf{Hyperparameter} & \textbf{Value} \\
\midrule
Learning rate & $1.5e^{-7}$ \\
DPO $\beta_{\text{dpo}}$ & 0.1 \\
Training epochs per round & 1 \\
Batch size & 32 \\
Number of DPO rounds & 20 \\
\bottomrule
\end{tabular}
\end{table}

\begin{figure*}
    \centering
    \begin{subfigure}{\textwidth}
        \includegraphics[width=\textwidth]{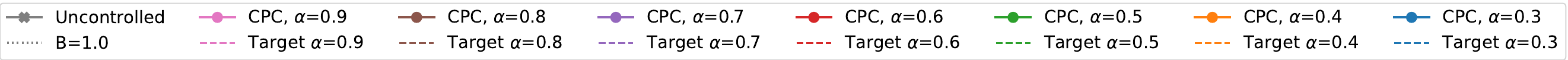}
    \end{subfigure}
    \\
    \begin{subfigure}{0.33\textwidth}
        \includegraphics[width=\textwidth]{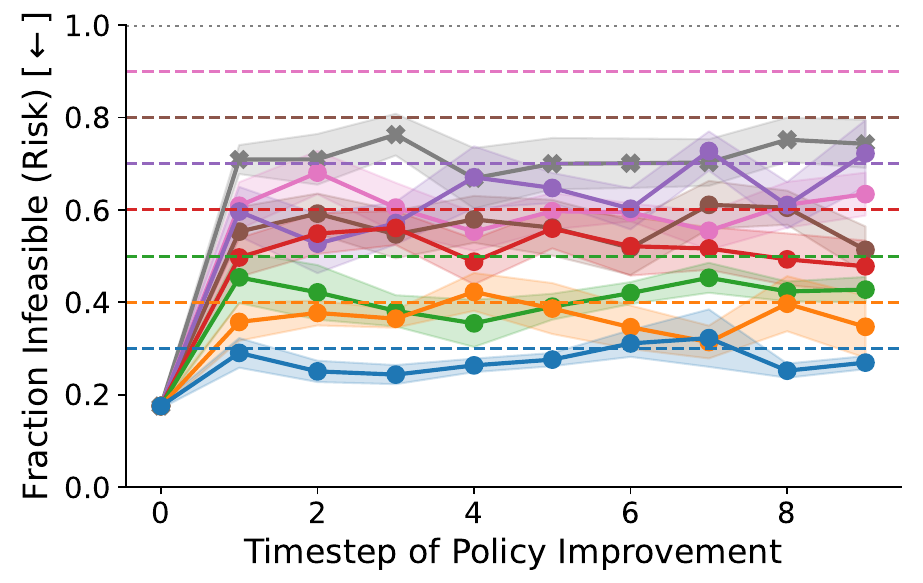}
    \end{subfigure}
    \hfill
    \begin{subfigure}{0.33\textwidth}
        \includegraphics[width=\textwidth]{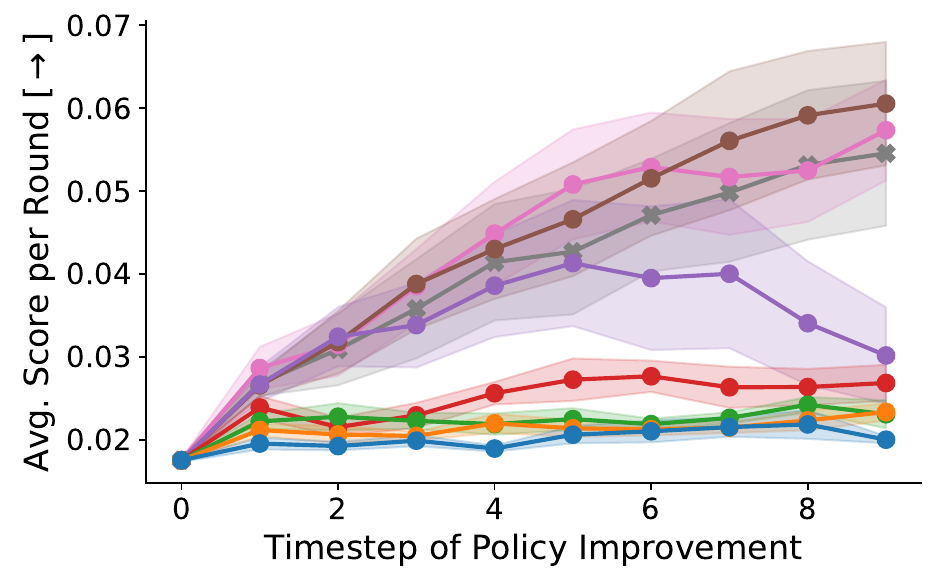}
    \end{subfigure}
    \hfill
    \begin{subfigure}{0.33\textwidth}
        \includegraphics[width=\textwidth]{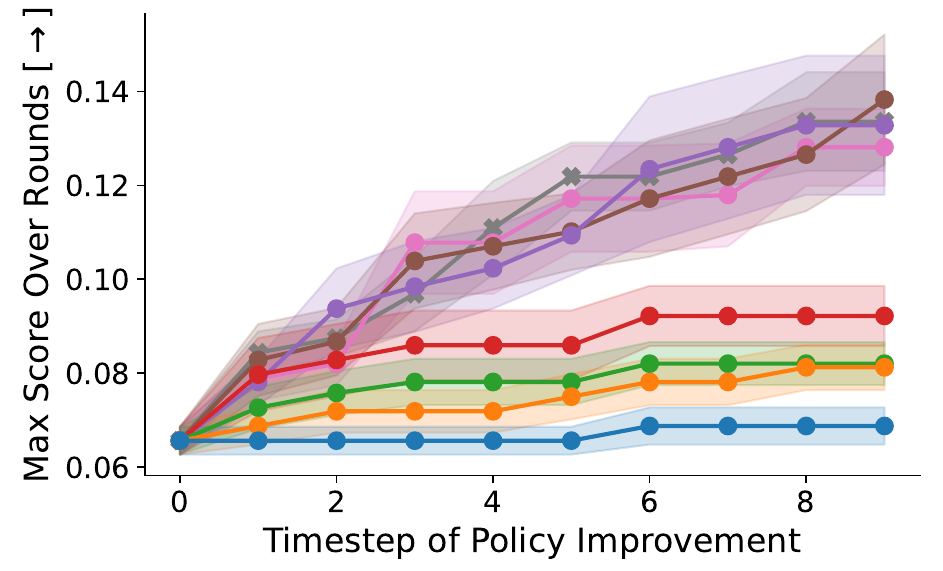}
    \end{subfigure}
    \caption{
        Supplementary experimental results applying CPC to constrained black-box optimization of token sequences scored with Ehrlich functions. In the \textbf{left} panel, we show the average feasibility rate round-over-round. In the \textbf{center}, we show the average objective value of solutions sampled from the DPO-tuned policy, and in the \textbf{right}, the best solution found so far at each round. CPC allows us to easily modify the constraint violation risk by directly manipulating $\alpha$. We find that moderate risk control may actually stabilize the algorithm and lead to better overall performance since fewer samples are wasted to infeasibility constraints.
    }
    \label{fig:extra_llm_policy_control_results}
\end{figure*}

\clearpage

\end{document}